\newcommand{\eg}{\emph{e.g.}}
\newcommand{\term}[1]{\textbf{\textit{#1}}}
\newcommand{\BR}{\textsc{BR}}
\newcommand{\bE}{\mathbb{E}}
\newcommand{\bp}{\mathbf{p}}
\newcommand{\bx}{\mathbf{x}}
\newcommand{\by}{\mathbf{y}}
\newcommand{\bv}{\mathbf{v}}
\newcommand{\cA}{\mathcal{A}}
\newcommand{\cH}{\mathcal{H}}
\newcommand{\cS}{\mathcal{S}}
\newcommand{\cT}{\mathcal{T}}
\newcommand{\cZ}{\mathcal{Z}}
\newcommand{\itPi}{\mathit{\Pi}}
\newcommand{\A}{\mathcal{A}}
\newcommand{\playerset}{\mathscr{N}}
\newcommand{\gameoracle}{\mathcal{G}}
\newcommand{\NashConv}{\textsc{NashConv}\xspace}
\newtheorem{theorem}{Theorem}[section]
\newtheorem{lemma}[theorem]{Lemma}
\theoremstyle{definition}
\newtheorem{assumption}[theorem]{Assumption}
\theoremstyle{remark}
\title{Combining Tree-Search, Generative Models, and Nash Bargaining Concepts in Game-Theoretic Reinforcement Learning}
\author{
Zun Li$^1$
\and
Marc Lanctot$^1$\and
Kevin R. McKee$^1$\and
Luke Marris$^1$\and
Ian Gemp$^1$\and
Daniel Hennes$^1$\and
Paul Muller$^1$\and
Kate Larson$^{1,2}$\and
Yoram Bachrach$^1$\and
Michael P. Wellman$^3$
\affiliations
$^1$Google DeepMind\\
$^2$University of Waterloo\\
$^3$University of Michigan\\
\emails
\{lizun,lanctot\}@google.com
}
\begin{document}

\maketitle

\begin{abstract}
Opponent modeling methods typically involve two crucial steps: building a belief distribution over opponents' strategies, and exploiting this opponent model by playing a best response.
However, existing approaches typically require domain-specific heurstics to come up with such a model, and algorithms for approximating best responses are hard to scale in large, imperfect information domains.

In this work, we introduce a scalable and generic multiagent training regime for opponent modeling using deep game-theoretic reinforcement learning.
We first propose Generative Best Respoonse (GenBR), a best response algorithm based on Monte-Carlo Tree Search (MCTS) with a learned deep generative model that samples world states during planning.
This new method scales to large imperfect information domains and can be plug and play in a variety of multiagent algorithms.
We use this new method under the framework of Policy Space Response Oracles (PSRO), to automate the generation of an \emph{offline opponent model} via iterative game-theoretic reasoning and population-based training.
We propose using solution concepts based on bargaining theory to build up an opponent mixture, which we find identifying profiles that are near the Pareto frontier.
Then GenBR keeps updating an \emph{online opponent model} and reacts against it during gameplay.
We conduct behavioral studies where human participants negotiate with our agents in Deal-or-No-Deal, a class of bilateral bargaining games. 
Search with generative modeling finds stronger policies during both training time and test time, enables online Bayesian co-player prediction, and can produce agents that achieve comparable social welfare and Nash bargaining score negotiating with humans as humans trading among themselves.
\end{abstract}

\section{Introduction}
A central challenge for agent designers is how to build agents that can be well-adapted to unknown opponents in a dynamic multiagent environment.
Opponent modeling methods~\cite{albrecht2018autonomous} typically build a profile or a prior belief of opponent strategies and produce agents that are best response against such opponent models.
These techniques have achieved success in fields like Poker~\cite{2009-aistats-dbr,2013aamas-implicit-modelling}, automated negotiation~\cite{baarslag2016learning} and robotic soccer~\cite{kitano1998robocup}.

However, most of these approaches use domain-specific heurstics to handcraft an opponent model.
Such knowledge is typically encoded in certain interpretations of game rules or experiences reflected by human plays.
These techniques are hard to transfer to domains where relevant data are missing.
Meanwhile, even if an opponent model is present, there is no existing best response method that work well in large imperfect information games where computing a posterior distribution over world state is intractable.

In this work, we propose a general-purpose training regime using multiagent reinforcement learning to address the above issues.
We adopt extensive-form games (EFG) as a generic formulation for multiagent environments for our algorithmic developments, as opposed to a domain-specific ruleset.
We propose Generative Best Response (GenBR), a best response method that extends AlphaZero-style RL and MCTS methods to large general-sum, imperfect information games.
GenBR enhances best response strength by leveraging test-time computation and an approximate world model learned by deep neural nets.
While GenBR can be plug and play in a variety of multiagent training algorithms, we focus on \textit{Policy Space Response Oracles} (PSRO) \cite{lanctot2017unified} as our training loop for \emph{offline opponent modeling}.
Our agent thereafter at test-time employs GenBR for both planning and updating an \emph{online opponent model} via Bayesian learning.

{\bf Contributions}.
We provide three significant contributions. 
First, we propose enhanced version of AlphaZero-style MCTS to train a best response strategy, thereby equipping our agent with the capability to both plan and infer the environmental state as well as opponents' strategic choices during online decision-making.
This novel search method integrates deep RL with \textit{Information Set MCTS} (IS-MCTS).
To handle large imperfect information, 
we augment the policy-and-value network (PVN) in AlphaZero with a generative model that samples world states at the root of the search tree.
This results in a novel policy-value-and-generative network (PVGN), which iteratively refines its quality using RL trajectory data during the training loop.

Second, we introduce several novel meta-strategy solvers in the PSRO framework based on the Nash bargaining solution~\cite{Nash50NBS}.
Lastly, we show an empirical evaluation of several RL agents trained by the combined algorithm against humans in a negotiation game, finding the best ones to be as efficient as humans trading with other humans. 


\section{Preliminaries}
\label{sec:background}

\begin{figure}
    \centering
    \includegraphics[width=0.45\textwidth]{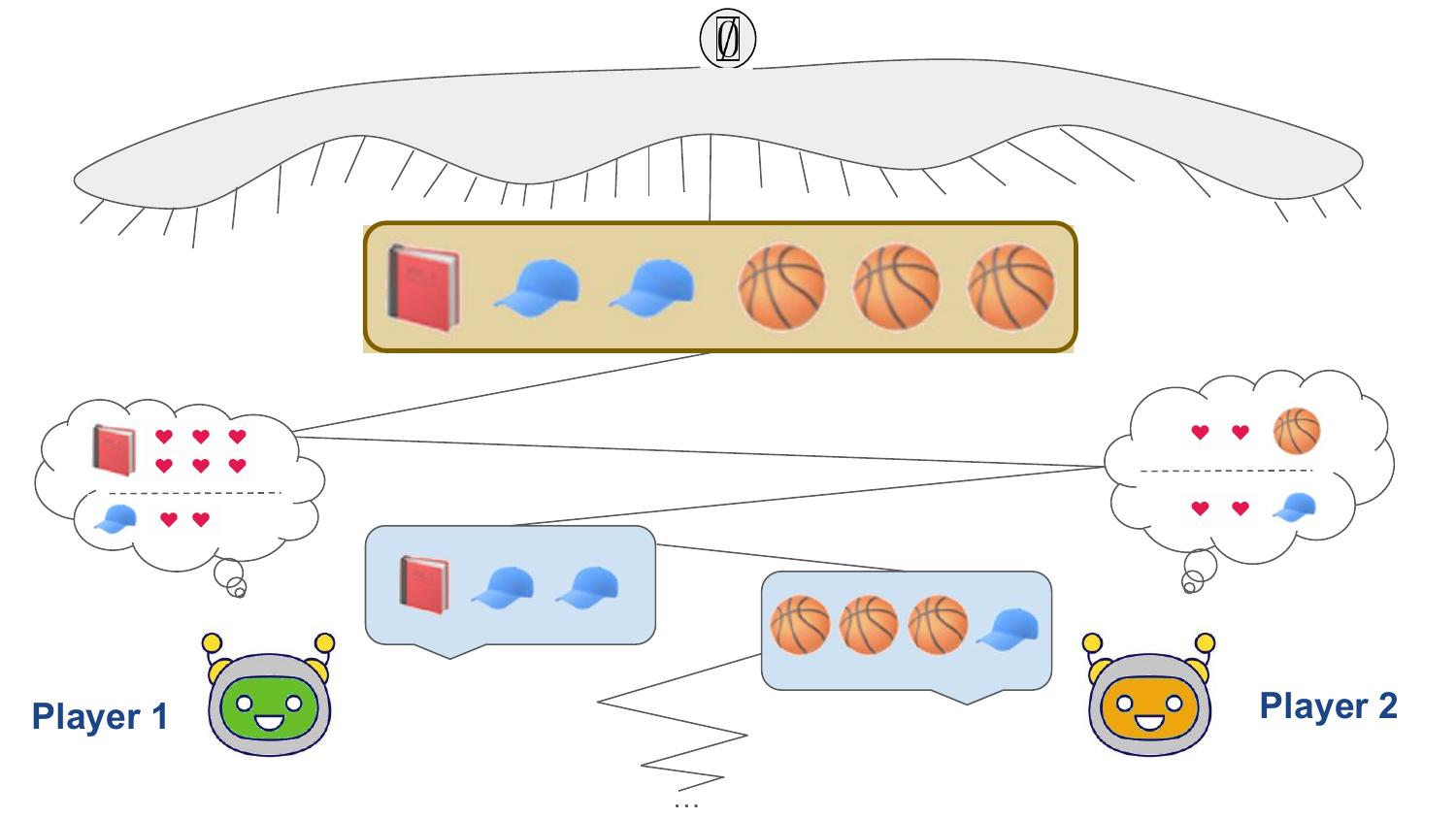}
    \caption{Example negotiation game in extensive-form.
    In ``Deal or No Deal'', the game starts at the empty history ($\emptyset$), chance samples a public pool of resources and private preferences (with some conditions; see Section~\ref{sec:dond}) for three different items (books, hats, and basketballs). Then, players alternate proposals for how to split the resources. Rewards are defined by a dot product between received resources and preferences.}
    \label{fig:dond-example}
\end{figure}


An $N$-player \term{normal-form game} consists of a set of players $\playerset = \{ 1, 2, \dotsc, N\}$, $N$ finite pure strategy sets $\itPi_i$ (one per player) with a joint strategy set $\itPi = \itPi_1 \times \itPi_2 \times \cdots \itPi_N$, and a utility tensor (one per player), $u_i : \itPi \rightarrow \mathbb{R}$, and we denote player $i$'s utility as $u_i(\pi)$.
Two-player (2P) normal-form games are called \term{matrix games}.
A \term{two-player zero-sum} (purely adversarial) game is such that, $N=2$ and for all joint strategies $\pi \in \itPi: \sum_{i \in \playerset} u_i(\pi) = 0$, whereas a \term{common-payoff} (purely cooperative) game: $\forall \pi \in \itPi, \forall i,j \in \playerset: u_i(\pi) = u_j(\pi)$. A \term{general-sum} game is one without any restrictions on the utilities.
A \term{mixed strategy} for player $i$ is a probability distribution over $\itPi_i$ denoted $\sigma_i \in \Delta(\itPi_i)$, and a strategy profile $\sigma = \sigma_1 \times \cdots \times \sigma_N$, and for convenience we denote $u_i(\sigma) = \bE_{\pi \sim \sigma}[u_i(\pi)]$. 
By convention, $-i$ refers to player $i$'s opponents.
A \term{best response} is a strategy $b_i(\sigma_{-i}) \in \BR(\sigma_{-i}) \subseteq \Delta(\itPi_i)$, that maximizes the utility against a specific opponent strategy: for example, $\sigma_1=b_1(\sigma_{-1})$ is a best response to $\sigma_{-1}$ if $u_1(\sigma_1, \sigma_{-1})=\max_{\sigma_1^\prime} u_1(\sigma_1^\prime, \sigma_{-1})$.
An approximate \term{$\epsilon$-Nash equilibrium} is a profile $\sigma$ such that for all $i \in \playerset, u_i(b_i(\sigma_{-i}), \sigma_{-i})-u_i(\sigma) \le \epsilon$, with $\epsilon = 0$ corresponding to an exact Nash equilibrium.

In an \term{extensive-form game}, play takes place over a sequence of actions $a \in \cA$.
Examples of such games include chess, Go, and poker. 
A \term{history} $h \in \cH$ is a sequence of actions from the start of the game taken by all players. Legal actions are at $h$ are denoted $\A(h)$ and the player to act at $h$ as $\tau(h)$. Players only partially observe the state and hence have imperfect information. 
There is a special player called \term{chance} that plays with a fixed stochastic policy (selecting outcomes that represent dice rolls or private preferences).
Policies $\pi_i$ (also called behavioral strategies) is a collection of distributions over legal actions, one for each player's \term{information state}, $s \in \cS_i$, which is a set of histories consistent with what the player knows at decision point $s$ (\eg all the possible private preferences of other players), and  $\pi_i(s) \in \Delta(\A(s))$. 
An illustrative example of interaction in an extensive-form game is ``Deal or No Deal`` shown in Figure~\ref{fig:dond-example}. 

There is a subset of the histories $\cZ \subset \cH$ called \term{terminal histories}, and utilities are defined over terminal histories, e.g. $u_i(z)$ for $z \in \cZ$ could be --1 or 1 in Go (representing a loss and a win for player $i$, respectively). As before, expected utilities of a joint profile $\pi = \pi_1 \times \cdots \times \pi_N$ is defined as an expectation over the terminal histories, $u_i(\pi) = \bE_{z \sim \pi}[u_i(z)]$, and best response and Nash equilibria are defined with respect to a player's full policy space.

\subsection{Combining MCTS and RL for Best Response}\label{sec:mcts}
Computing approximate best responses is critical in a variety of multiagent training algorithms.
There have been many previous works that consider enhancing best response strength via deep RL and game-tree search.
For example, the original PSRO algorithm~\cite{lanctot2017unified} proposed using deep RL in place of value iteration in the double-oracle framework~\cite{McMahan03DO} which scales game solving in environments with large state spaces.
Approximate Best Response (ABR)~\cite{Timbers20ABR,Wang22Go} uses MCTS as a best response method which identifies weakness in human-level Go playing agents.
However, they only focus on domain where a posterior world belief can be computed exactly.
Another relevant work is Best Response Expert Iteration (BRExIT)~\cite{Hernandez22BRExIt}, which uses MCTS to exploit an opponent mixture trained with an auxiliary loss.
However, they only concern perfect information games.

In general, using deep RL+MCTS as a best response method leverages both generalization capability of neural nets and test-time computation by the search method.
We elaborate this by explaining how ABR algorithm works.
When computing an approximate best response in imperfect information games, ABR uses a variant of Information Set Monte Carlo tree search~\cite{CowlingISMCTS} called IS-MCTS-BR. 
At the root of the IS-MCTS-BR search (starting at information set $s$), the posterior distribution over world states,  $\Pr(h~|~s, \pi_{-i})$ is computed explicitly, which requires both (i) enumerating every history in $s$, and (ii) computing the opponents' reach probabilities for each history in $s$. 
Then, during each search round, a world state is sampled from this belief distribution, then the game-tree regions are explored in a similar way as in the vanilla MCTS, and finally the statistics are aggregated on the information-set level.
Steps (i) and (ii) are prohibitively expensive in games with large belief spaces.

\begin{algorithm}[H]
   \caption{GenBR Training Loop}
   \label{alg:abr}
\begin{algorithmic} \footnotesize
\FUNCTION{\texttt{GenBR}($i, \sigma, num\_eps$)}
\STATE Initialize value nets $\bm{v}, \bm{v}'$, policy nets $\bm{p}, \bm{p}'$, generative nets $\bm{g}, \bm{g}'$, data buffers $D_{\bm{v}},D_{\bm{p}}, D_{\bm{g}}$
   \FOR{$eps=1,\dotsc,num\_eps$}
   \STATE $h\gets\text{initial state}$. $\cT=\{s_i(h)\}$
   \STATE Sample opponents $\pi_{-i} \sim \sigma_{-i}$.
   \WHILE{$h$ not terminal}
   \IF{$\tau(h)=chance$}
   \STATE Sample chance event $a \sim \pi_c$
   \ELSIF{$\tau(h) \not= i$}
   \STATE Sample $a \sim \pi_{\tau(h)}$
   \ELSE
   \STATE $a, \pi\gets\texttt{Search}(s_i(h), \sigma, \bm{v}', \bm{p}', \bm{g}')$
   \STATE $D_{\bm{p}}\gets D_{\bm{p}}\bigcup\{(s_i(h), \pi)\}$
   \STATE $D_{\bm{g}}\gets D_{\bm{g}}\bigcup\{(s_i(h), h)\}$
   \ENDIF
   \STATE $h\gets h.apply(a)$,  $\cT\gets\cT\bigcup\{s_i(h)\}$
   \ENDWHILE
   \STATE $D_{\bm{v}}\gets D_{\bm{v}}\bigcup\{(s, r)\mid s\in\cT\}$, where $r$ is the payoff of $i$ in this trajectory
   \STATE $\bm{v}, \bm{p}, \bm{g}\gets \texttt{Update}(\bm{v}, \bm{p}, \bm{g}, D_{\bm{v}},D_{\bm{p}}, D_{\bm{g}})$
   \STATE Replace parameters of $\bm{v}', \bm{p}', \bm{g}'$ by the latest parameters of $\bm{v}, \bm{p}, \bm{g}$ periodically.
   \ENDFOR
   \STATE {\bf return} $\texttt{Search}(\cdot, \sigma, \bm{v}, \bm{p}, \bm{g})$
 \ENDFUNCTION
\end{algorithmic}
\end{algorithm}
\begin{algorithm}[H]
   \caption{GenBR Search}
   \label{alg:is-mcts-br}
\begin{algorithmic} \footnotesize
\FUNCTION{\texttt{Search}($s$, $\sigma$, $\bm{v}, \bm{p}, \bm{g}$)}
   \FOR{$iter=1,\dotsc,num\_sim$}
   \STATE $\cT=\{\}$\;
   \STATE Sample a world state (gen. model): $h \sim \bm{g}(h~|~s)$
   \STATE Sample an opponent profile using Bayes' rule: $\pi'_{-i} \sim \Pr(\pi_{-i}~|~h,\sigma_{-i})$. Replace opponent nodes with chance events according to $\pi'_{-i}$
   \WHILE{}
   \IF{$h$ is terminal}
   \STATE $r\gets$ payoff of $i$. {\bf Break}
   \ELSIF{$\tau(h)=chance$}
   \STATE $a \gets$ sample according to chance
   \ELSIF{$s_i(h)$ not in search tree}
   \STATE Add $s_i(h)$ to search tree.
   \STATE $r\leftarrow \bm{v}(s_i(h))$
   \ELSE
   \STATE $a \gets\texttt{MaxPUCT}(s_i(h), \bm{p})$ 
   \STATE $\cT\gets\cT\cup\{(s_i(h), a)\}$
   \ENDIF
   \STATE $h.apply(a)$
   \ENDWHILE
   \FOR{$(s, a)\in\cT$}
   \STATE $s.child(a).visits \gets s.child(a).visits + 1$
   \STATE $s.child(a).value \gets s.child(a).value + r$
   \STATE $s.total\_visits \gets s.total\_visits + 1$
   \ENDFOR
   \ENDFOR
   \STATE {\bf return} action $a^*$ that receives max visits among children of $s$, and a policy $\pi^*$ that represents the visit frequency of children of $s$
 \ENDFUNCTION
\end{algorithmic}
\end{algorithm}
\section{GenBR: Learning Best Response Search with a Generative Model}
\label{sec:SI-PSRO}

We propose Generative Best Response (GenBR), a new best response method based on AlphaZero-styled MCTS with $\Pr(h~|~s, \pi_{-i})$ learned by a deep generative model.
On a high-level, GenBR is parameterized by three deep neural nets: a policy net $\bm{p}$, a value net $\bm{v}$ and a generative net $\bm{g}$.
Just as in the AlphaZero training loop~\cite{silver2018general}, GenBR training loop generates multiple RL trajectories by calling GenBR search procedure at each decision point to gather data for training these neural nets.
These neural nets will further guide and refine the search procedure, producing higher quality data for later training.
During the search procedure, the  world states are sampled directly from the model given only their information state descriptions, leading to a succinct representation of the posterior that generalizes to large state spaces.
Next we explain our algorithms in more details.



The GenBR training loop (Algorithm~\ref{alg:abr}) proceeds analogously to AlphaZero's self-play based training, which trains a value net $\bm{v}$, a policy net $\bm{p}$, along with a generative network $\bm{g}$ using trajectories generated by search.
We assume we are given an \emph{offline opponent model} $\sigma$ represented by a mixed strategy profile of the opponents.
There are important differences from AlphaZero. Only one player is learning (\eg player $i$). The (set of) opponents are fixed, sampled at the start of each episode from the opponent model $\sigma_{-i}$. Whenever it is player $i$'s turn to play, since we are considering imperfect information games, it runs a POMDP search procedure based on Algorithm~\ref{alg:is-mcts-br} from its current information state $s_i$.
The search procedure produces a policy target $\pi^*$, and an action choice $a^*$ that will be taken at $s_i$ at that episode. 
Data about the final outcome and policy targets for player $i$ are stored in data sets $D_{\bm{v}}$ and $D_{\bm{p}}$, which are used to improve the value net and policy net that guide the search.
Data about the history, $h$, in each information set, $s(h)$, reached is stored in a data set $D_{\bm{g}}$, which is used to train the generative network $\bm{g}$ by supervised learning.
We call the combination of $\bm{v}$, $\bm{p}$, and $\bm{g}$ as the policy-value-and-generative network (PVGN).

The GenBR MCTS search used (Algorithm~\ref{alg:is-mcts-br}) is based on IS-MCTS-BR in~\cite{Timbers20ABR} (described in Section~\ref{sec:mcts}) and POMCP~\cite{silver2010monte}. 
Here it utilizes value net $\bm{v}$ to truncate the search at an unexpanded node and policy net $\bm{p}$ for action selection at an expanded node $s$ using the PUCT~\cite{silver2018general} formula: 
$\texttt{MaxPUCT}(s, \bm{p})={\arg\max}_{a\in\cA(s)}\frac{s.child(a).value}{s.child(a).visits}+c_{uct}\cdot\bm{p}(s, a)\cdot\frac{\sqrt{s.total\_visits}}{s.child(a).visits+1}$, for some constant $c_{uct}$.
Then at the end of the search call, it returns an action $a^*$ which receives the most visits at the root node, and a policy $\pi^*$ representing the action distribution of the search at the root node.

Algorithm~\ref{alg:is-mcts-br} has two important differences from previous methods.
First, rather than computing exact posteriors, we use the deep generative model $\bm{g}$ learned in Algorithm~\ref{alg:abr} to sample world states.
As such, this approach may be capable of scaling to large domains where previous approaches such as particle filtering~\cite{somani2013despot} fail.

Second, the imperfect information of the underlying POMDP consists of both (i) the actual world state $h$ and (ii) opponents' pure-strategy commitment $\pi_{-i}$.
We make use of the fact $\Pr(h,\pi_{-i}~|~s, \sigma_{-i}) = \Pr(h~|~s, \sigma_{-i})\Pr(\pi_{-i}~|~h, \sigma_{-i})$ such that we approximate $\Pr(h~|~s,\sigma_{-i})$ by $\bm{g}$ and compute $\Pr(\pi_{-i}~|~h, \sigma_{-i})$ exactly via Bayes' rule.
Computing $\Pr(\pi_{-i}~|~h, \sigma_{-i})$ can be viewed  as a Bayesian learning procedure ~\cite{kreps1982reputation,hernandez2017learning,albrecht2016belief,kalai1993rational} that keeps updating an \emph{online opponent model $\Pr(\pi_{-i}~|~h, \sigma_{-i})$}.
Therefore, our agent is capable of performing test-time search while automatically inferring environmental state as well as opponents' strategies during online decision making.

\begin{algorithm}[H]
   \caption{Opponent Modeling via PSRO and GenBR}
   \label{alg:psro}
\begin{algorithmic}
\FUNCTION{\texttt{PSRO}($\gameoracle$, $\textsc{MSS}$)}
\STATE Initialize strategy sets $\forall i, \itPi_i=\{\pi^0_i\}$, mixed\\
       ~~~strategies $\sigma_i(\pi_i^0)=1, \forall i$, payoff tensor $U^0$. \;
   \FOR{$t \in \{ 0,1,2 \cdots, T \}$}
   \FOR{$i \in \playerset$}
   \STATE $\itPi_i \gets \itPi_i~\bigcup~\{ \texttt{GenBR}(i, \sigma, num\_eps) \}$\;
   \ENDFOR
    \STATE Update missing entries in $U^t$ via simulations
    \STATE $\sigma \gets \textsc{MSS}(U^t)$
   \ENDFOR
   \STATE {\bf return} $\itPi = (\itPi_1, \itPi_2, \cdots, \itPi_N), \sigma$
 \ENDFUNCTION
\end{algorithmic}
\end{algorithm}

\section{Game-Theoretic Opponent Modeling}
In this section, we describe our complete training algorithm for opponent modeling.
We use Policy Space Response Oracles to obtain an \emph{offline opponent model} $\sigma_{-i}$ for training GenBR in Algorithm~\ref{alg:abr}.
We first introduce empirical game-theoretic analysis (EGTA) and PSRO, and then propose new solution concepts in PSRO based on bargaining theory, and provide empirical results of our new meta-strategy solvers at the end of this section.

\subsection{Empirical Game-Theoretic Analysis and Policy-Space Response Oracles}
\label{sec:egta-psro}

Empirical game-theoretic analysis (EGTA)~\cite{wellman2025empirical} is an approach to reasoning about large sequential games through normal-form \term{empirical game} models, induced by simulating enumerated subsets of the players' full policies in the sequential game. 
Policy-Space Response Oracles (PSRO)~\cite{lanctot2017unified} uses EGTA to incrementally build up each player's set of policies (``oracles'') through repeated applications of approximate best response using RL. Each player's initial set contains a single policy (\eg uniform random) resulting in a trivial empirical game $U^0$ containing one cell. On epoch $t$, given $N$ sets of policies $\itPi_i^t$ for $i \in \playerset$, utility tensors for the empirical game $U^t$ are estimated via simulation. 

A \term{meta-strategy solver} (MSS) derives a profile $\sigma^t$, generally mixed, over the empirical game strategy space.
A best response oracle, say $b_i^t(\sigma_{-i}^t)$, is then computed for each player~$i$ by training against policies sampled from opponent model $\sigma_{-i}^t$, and are added to strategy sets for the next epoch: $\itPi_i^{t+1} = \itPi_i^t \cup \{ b_i^t(\sigma_{-i}^t) \}$. 
Since the opponent policies are fixed, the oracle response step is a single-agent problem~\cite{oliehoek2014best}; RL and search can feasibly handle large state and policy spaces.
Our full algorithm  (Algorithm~\ref{alg:psro}) employs GenBR as the oracle step in PSRO.

PSRO naturally fits our focus of automating opponent modeling as it generates opponent models $\sigma_{-i}$ through pure game-theoretic reasoning and reinforcement learning.
Furthermore, since each strategy (except the first ones) in the pool is a best response against an early-iteration opponent model (which itself consists of best responses), it in fact induces a cognitive hierarchy~\cite{camerer2004cognitive} of rationalizable strategies~\cite{bernheim1984rationalizable}.
An important question is choosing which MSS\footnote{In App. G.1, we conduct extensive experiments over 16 MSSs on 12 different benchmark games on OpenSpiel~\cite{lanctot2019openspiel}. Appendices can be found in~\cite{li2023combining}.} to compute an opponent mixture.
Since our primary application domain of interest in this paper is negotiation game, it is natural to consider solution concepts from bargaining theory as MSSs.
Next we introduce computational results of new MSSs based on Nash bargaining solution which were not investigated in previous works.

\subsection{Empirical Game Nash Bargaining Solution}
\label{sec:mss-nbs}

In contrast to non-cooperative game theory, bargaining theory considers scenarios where players' utilities are not entirely in conflict, and need to negotiate to achieve a possible cooperative outcome.
The \term{Nash Bargaining solution} (NBS) selects a Pareto-optimal payoff profile that uniquely satisfies axioms specifying desirable properties of invariance, symmetry, and independence of irrelevant alternatives~\cite{Nash50NBS,Ponsati97}.
The axiomatic characterization of NBS abstracts away the process by which said outcomes are obtained through strategic interaction.

Define the set of achievable payoffs as all expected utilities $u_i(\bx)$ under a joint-policy profile $\bx$.
Denote the disagreement outcome of player $i$, which is the payoff it gets if no agreement is achieved, as $d_i$. The NBS is the set of policies that maximizes the \term{Nash bargaining score} (A.K.A. {\it Nash product}):
\begin{equation}
\max_{\bx \in \Delta(\itPi)}\Pi_{i \in \playerset} \left(u_i(\bx) - d_i\right),\label{eq:nash-product}
\end{equation}

which, when $N = 2$, leads to a quadratic program (QP) with the constraints derived from the policy space structure~\cite{GriffinNBS}. However, even in this simplest case of two-player matrix games, the non-concave objective poses a problem for most QP solvers. Furthermore, scaling to $N$ players requires higher-order polynomial solvers.

Instead of using higher-order polynomial solvers, we propose an algorithm based on (projected) gradient ascent~\cite{Singh00IGA,BoydVandenberghe}. 
Note that the Nash product is non-concave, so instead of maximizing it, we maximize the \term{log Nash product} $g(\bx) =$
 
\begin{align}
     \log \left( \Pi_{i \in \playerset} (u_i(\bx) - d_i) \right)  = \sum_{i \in \playerset} \log(u_i(\bx) - d_i), \label{eqn:lognashprod}
\end{align}
which has the same maximizers as \eqref{eq:nash-product}, and is a sum of concave functions, hence concave.
The process is depicted in Algorithm~\ref{alg:nbs-gradient}; $\texttt{Proj}$ is the $\ell_2$ projection onto the simplex.
We can prove it has a convergence rate of $O(T^{-1/2})$.
For a proof, see App. C.

\begin{algorithm}[tb]
   \caption{NBS by projected gradient ascent}
   \label{alg:nbs-gradient}
\begin{algorithmic}
   \STATE {\bfseries Input:} Initial iterate $\bx$, payoff tensor $U$.
\FUNCTION{\texttt{NBS}($\bx^0$, $U$)}
\STATE Let $g(\bx)$ be the log Nash product defined in eqn~(\ref{eqn:lognashprod})
   \FOR{$t = 0,1,2 \cdots, T$}
   \STATE $\by^{t+1} \gets \bx^t + \alpha^t \nabla g(\bx^t)$
   \STATE $\bx^{t+1} \gets \texttt{Proj}(\by^{t+1})$
   \ENDFOR
   \STATE {\bf Return} $\arg\max_{\bx^{t=0:T}}g(\bx^t)$
 \ENDFUNCTION
\end{algorithmic}
\end{algorithm}

\begin{theorem}
Assume any deal is better than no deal by $\kappa > 0$, i.e., $u_i(\bx) - d_i \ge \kappa > 0$ for all $i, \bx$. Let $\{\bx^t\}$ be the sequence generated by Algorithm~\ref{alg:nbs-gradient} with starting point $\bx^0 = \vert \itPi \vert^{-1} \mathbf{1}$ and step size sequence $\alpha^t = \frac{\kappa \sqrt{(\vert \itPi \vert-1)/\vert \itPi \vert}}{u^{\max} N} (t+1)^{-1/2}$. Then, for all $t > 0$ one has
\begin{align}
    \max_{\bx \in \Delta^{\vert \itPi \vert-1}} g(\bx) - \max_{0 \le s \le t} g(\bx^s) &\le \frac{u^{\max} N \sqrt{\vert \itPi \vert}}{\kappa \sqrt{t+1}}
\end{align}
where $u^{\max} = \max_{i,\bx} u_{i}(\bx)$, $\vert \itPi \vert$ is the number of possible pure joint strategies, and $\bx$ is assumed to be a joint correlation device ($\mu$).\label{theorem:nbs_convergence_main}
\end{theorem}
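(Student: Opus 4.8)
The plan is to recognize Algorithm~\ref{alg:nbs-gradient} as plain projected gradient ascent on the concave objective $g$ over the probability simplex $\Delta^{\vert\itPi\vert-1}$, and then to invoke the textbook $O(1/\sqrt{t})$ guarantee for this scheme, tracking the three problem constants carefully so that they assemble into exactly the stated bound. Concavity of $g$ is already in hand from the excerpt (a sum of the maps $\bx\mapsto\log(u_i(\bx)-d_i)$, each a monotone concave $\log$ composed with an affine $u_i$, and well-defined since $u_i(\bx)-d_i\ge\kappa>0$ everywhere on the simplex, so $g$ is in fact $C^1$ there). Hence the remaining ingredients are: (a) a uniform bound $G$ on $\Vert\nabla g(\bx)\Vert_2$; (b) a bound $R$ on $\Vert\bx^0-\bx^\star\Vert_2$ for a maximizer $\bx^\star$; and (c) the standard telescoping argument.

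For (a): since $u_i$ is affine, $\nabla u_i(\bx)$ is the constant vector $(u_i(\pi))_{\pi\in\itPi}$ with $\Vert\nabla u_i(\bx)\Vert_2\le\sqrt{\vert\itPi\vert}\,u^{\max}$ (using $\vert u_i(\pi)\vert\le u^{\max}$), and since $\nabla g(\bx)=\sum_{i\in\cN}(u_i(\bx)-d_i)^{-1}\nabla u_i(\bx)$ with each denominator $\ge\kappa$, we get $\Vert\nabla g(\bx)\Vert_2\le n\sqrt{\vert\itPi\vert}\,u^{\max}/\kappa=:G$. For (b): $\bx^0=\vert\itPi\vert^{-1}\bone$ is the centroid of the simplex and $\bx\mapsto\Vert\bx^0-\bx\Vert_2$ is convex, so its maximum over $\Delta^{\vert\itPi\vert-1}$ is attained at a vertex, yielding $R:=\max_\bx\Vert\bx^0-\bx\Vert_2=\sqrt{(\vert\itPi\vert-1)/\vert\itPi\vert}$; in particular $\Vert\bx^0-\bx^\star\Vert_2\le R$. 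Note the prescribed $\alpha^t=(R/G)\,(t+1)^{-1/2}$ is, up to the $\sqrt{\vert\itPi\vert}$ bookkeeping, precisely the canonical step size for a $G$-Lipschitz objective of initialization radius $R$.

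For (c): non-expansiveness of the $\ell_2$-projection onto the simplex gives $\Vert\bx^{s+1}-\bx^\star\Vert_2\le\Vert\bx^s+\alpha^s\nabla g(\bx^s)-\bx^\star\Vert_2$, and expanding the square and using concavity ($\langle\nabla g(\bx^s),\bx^\star-\bx^s\rangle\ge g(\bx^\star)-g(\bx^s)$) yields the one-step inequality $\Vert\bx^{s+1}-\bx^\star\Vert_2^2\le\Vert\bx^s-\bx^\star\Vert_2^2-2\alpha^s(g(\bx^\star)-g(\bx^s))+(\alpha^s)^2G^2$. I would then sum $s=0,\dots,t$ and, to keep only the best iterate $\max_{s\le t}g(\bx^s)$ while avoiding a spurious $\log t$ factor with the decreasing step size, use an Abel-summation estimate of $\sum_s\Vert\bx^s-\bx^\star\Vert_2^2(1/\alpha^s-1/\alpha^{s-1})$ rather than the crude weighted average; this leaves $g(\bx^\star)-\max_{0\le s\le t}g(\bx^s)=O\!\big(GR/\sqrt{t+1}\big)$ after bounding $\sum_{s=0}^t(s+1)^{-1/2}\ge\sqrt{t+1}$. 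Substituting $G$ and $R$ (and $R\le1$) gives the stated $u^{\max}n\sqrt{\vert\itPi\vert}/(\kappa\sqrt{t+1})$. Since Algorithm~\ref{alg:nbs-gradient} returns $\argmax_s g(\bx^s)$, and the maximizer of $g$ over $\Delta(\itPi)$ coincides with that of the Nash product in \eqref{eq:nash-product}, this is exactly the asserted convergence, with $\bx$ read as a correlation device $\mu\in\Delta(\itPi)$.

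The step I expect to be the crux is the constant bookkeeping in (a)--(c): the Lipschitz estimate relies on $\vert u_i(\pi)\vert\le u^{\max}$, which needs the normalization implicit in the phrase ``$d_i$ is set below the minimum achievable payoff with $u_i(\bx)-d_i>0$'', and the partial-sum bounds must be taken over exactly the right index ranges to land on $\sqrt{t+1}$ (not $\sqrt{t}$) and to match the prescribed $\alpha^t$. Everything else is the standard projected-(sub)gradient template, so I would not belabor it.
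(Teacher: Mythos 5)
Your proposal is correct and takes essentially the same route as the paper: the paper likewise proves concavity of the log Nash product and the gradient bound $u^{\max}n/\kappa$ (Lemmas in Appendix~B) and then applies Theorem~4.2 of Beck and Teboulle's mirror-descent analysis with the Euclidean prox $\psi(\bx)=\tfrac{1}{2}\Vert\bx\Vert_2^2$ and $B_\psi(\bx^\star,\bx^0)\le\tfrac{1}{2}(d-1)/d$, which is precisely the projected-subgradient telescoping argument you re-derive by hand. The only differences are that you prove the generic $O(GR/\sqrt{t+1})$ rate inline (non-expansiveness plus Abel summation for the best iterate) where the paper cites it as a black box, and the $\sqrt{d}$ ``bookkeeping'' slack you flag between the stated step size (normalized by $u^{\max}n/\kappa$) and the canonical step normalized by $\Vert\nabla g\Vert_2$ is a looseness already present in the paper's own statement, not something introduced by your argument.
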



Besides directly using NBS, we also consider (1) using NBS to select a correlated equilibrium (CE) and coarse correlated equilibrium (CCE) as an MSS, which we denote as max-NBS-(C)CE and (2) profile that maximizes social welfare.
A comprehensive list of MSSs that we investigated in our experiments is in App. A.

\subsection{Empirical Results on Colored Trails}
\begin{figure}[t!]
\begin{minipage}{0.48\textwidth}
\begin{center}
\includegraphics[width=0.9\textwidth]{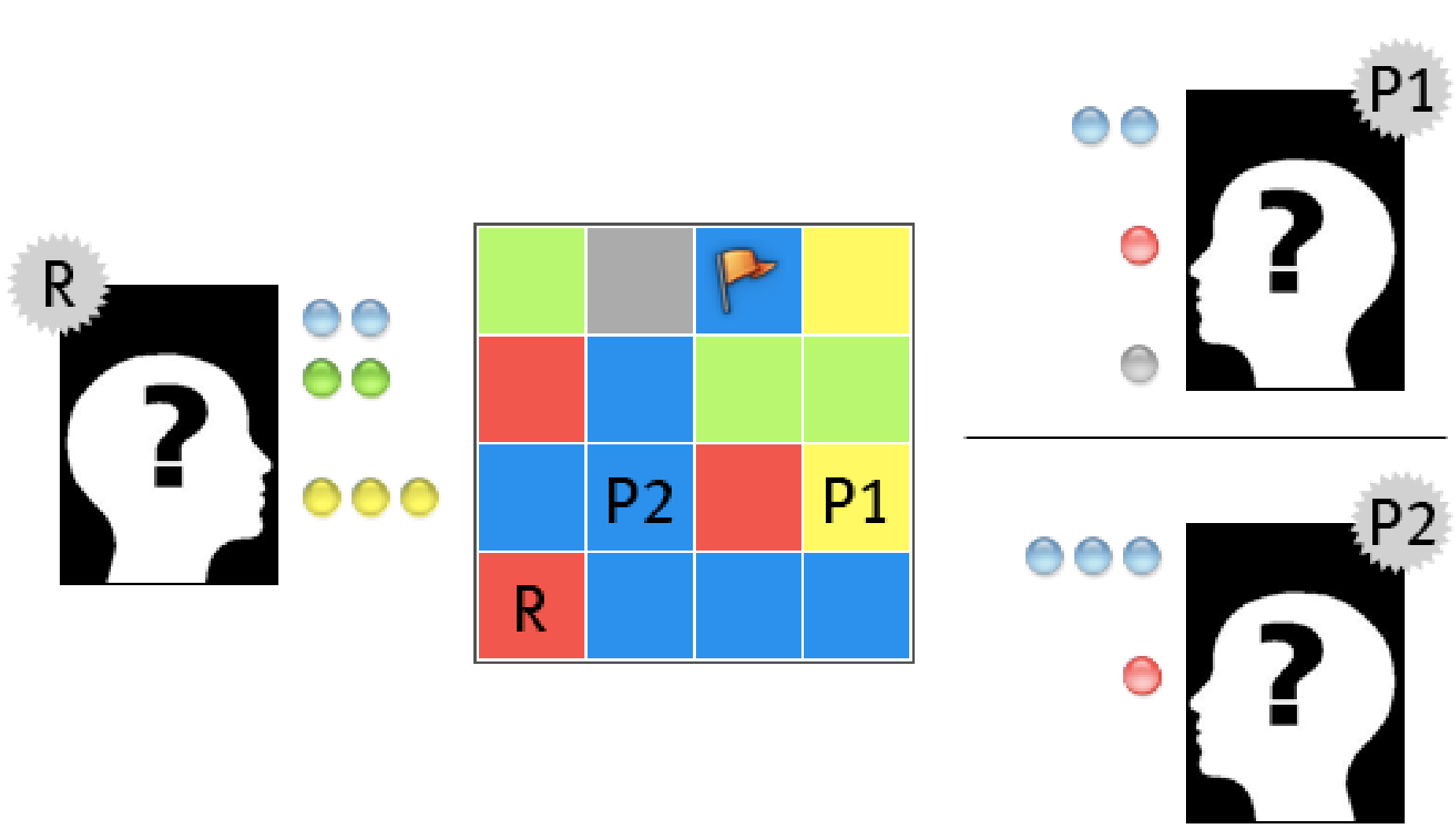}\\
\end{center}
\caption{Three-Player Colored Trails.}
\label{fig:ct}
\begin{center}
~~~~\\
\end{center}
\end{minipage}\hfill
\begin{minipage}{0.48\textwidth}
\begin{tabular}{c}
    \centering
    \hspace{-0.25cm}\includegraphics[scale=0.19]{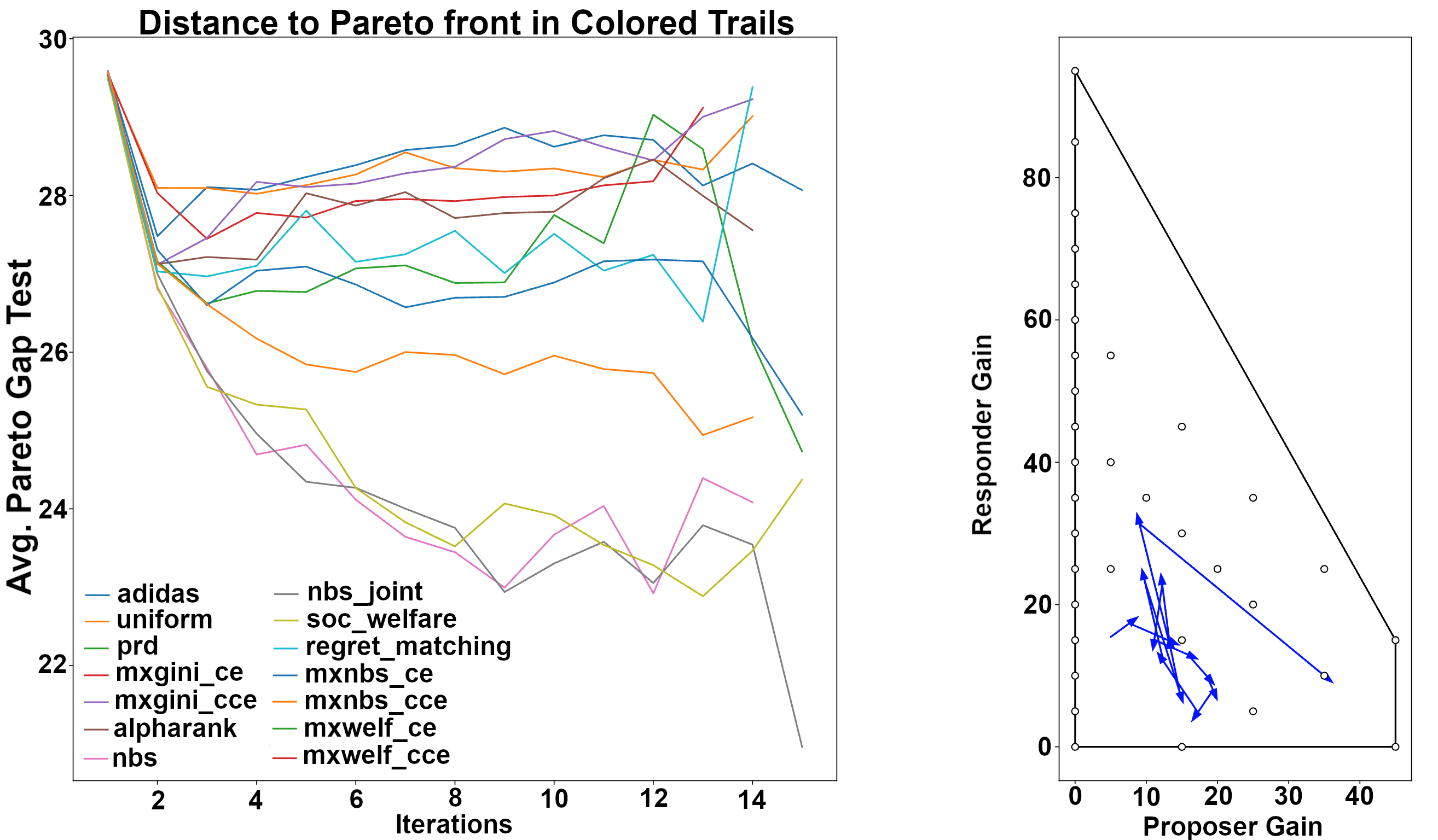}  
\end{tabular}
\caption{Empirical reduction in Pareto Gap on test game configurations, and example evolution toward Pareto front (right). \label{fig:ct_dqn_pgap}}
\end{minipage}
\end{figure}
Here we study the performance of our NBS-based MSSs on colored trails, a highly configurable negotiation game played on a grid~\cite{gal2010agent} of colored tiles, which has been actively studied by the AI community~\cite{grosz2004influence,ficici2008learning}. 
Colored Trails does not require search since the number of moves is small, so we use classical RL based oracles (DQN and Boltzmann DQN) to isolate the effects of the new meta-strategy solvers. 

We use a three-player variant~\cite{ficici2008learning} depicted in Figure~\ref{fig:ct}.
At the start of each episode, an instance (a board and colored chip allocation per player) is randomly sampled from a database of strategically interesting and balanced configurations~\cite[Section 5.1]{deJong08CT}.
There are two proposers (P1 and P2) and a responder (R). 
R can see all players' chips, both P1 and P2 can see R's chips; however, proposers cannot see each other's chips. Each proposer, makes an offer to the receiver. The receiver than decides to accept one offer and trades chips with that player, or passes. Then, players spend chips to get as close to the flag as possible (each chip allows a player to move to an adjacent tile if it is the same color as the chip).
For any configuration (player $i$ at position $p$), define $\textsc{Score}(p, i) = (-25)d + 10t$, where $d$ is the Manhattan distance between $p$ and the flag, and $t$ is the number of player $i$'s chips. The utility for player $i$ is their {\it gain}: score at the end of the game minus the score at the start.
We compute the Pareto frontier for a subset of configurations, and define the Pareto Gap (P-Gap) as the minimal $\ell_2$ distance from the outcomes to the outer surfaces of the convex hull of the Pareto front, which is then averaged over the set of configurations in the database.


Figure~\ref{fig:ct_dqn_pgap} shows representative results of PSRO agents with different MSSs on Colored Trails (for full graphs, and evolution of score diagrams, see App. G.2). 
The best-performing MSS is NBS-joint, beating the next best by a full 3 points. The NBS meta-strategy solvers comprise five of the six best MSSs under this evaluation. An example of the evolution of the expected score over PSRO iterations is also shown, moving toward the Pareto front, though not via a direct path.

\begin{figure*}[t]
\begin{center}
\begin{tabular}{ccc}
\includegraphics[width=0.27\textwidth]{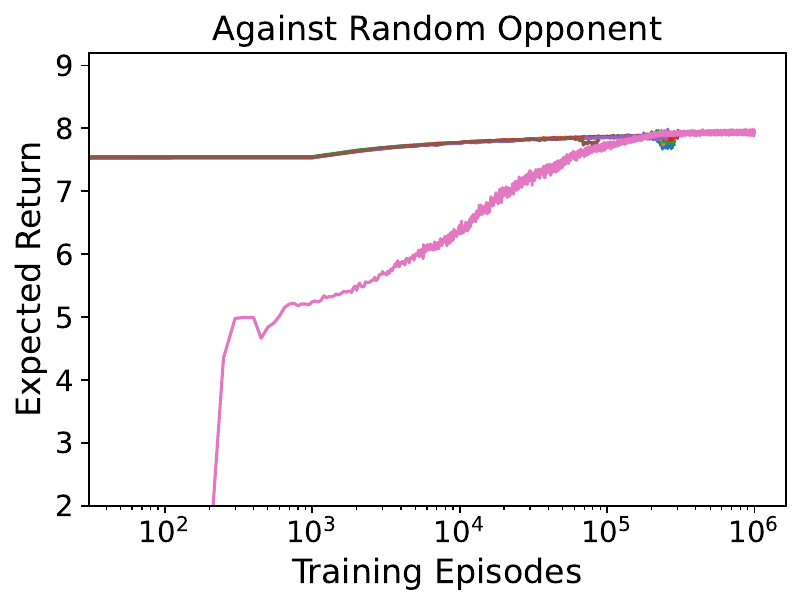} &
\includegraphics[width=0.28\textwidth]{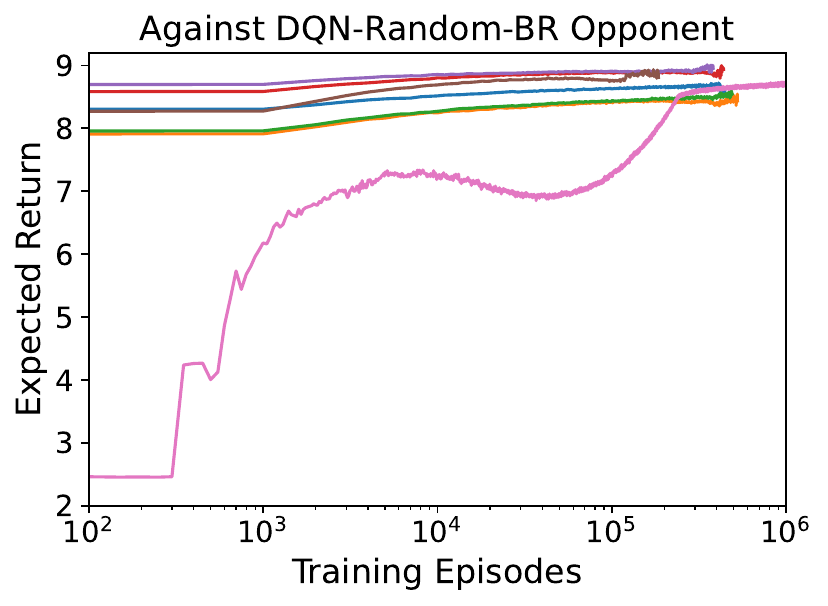} &
\includegraphics[width=0.39\textwidth]{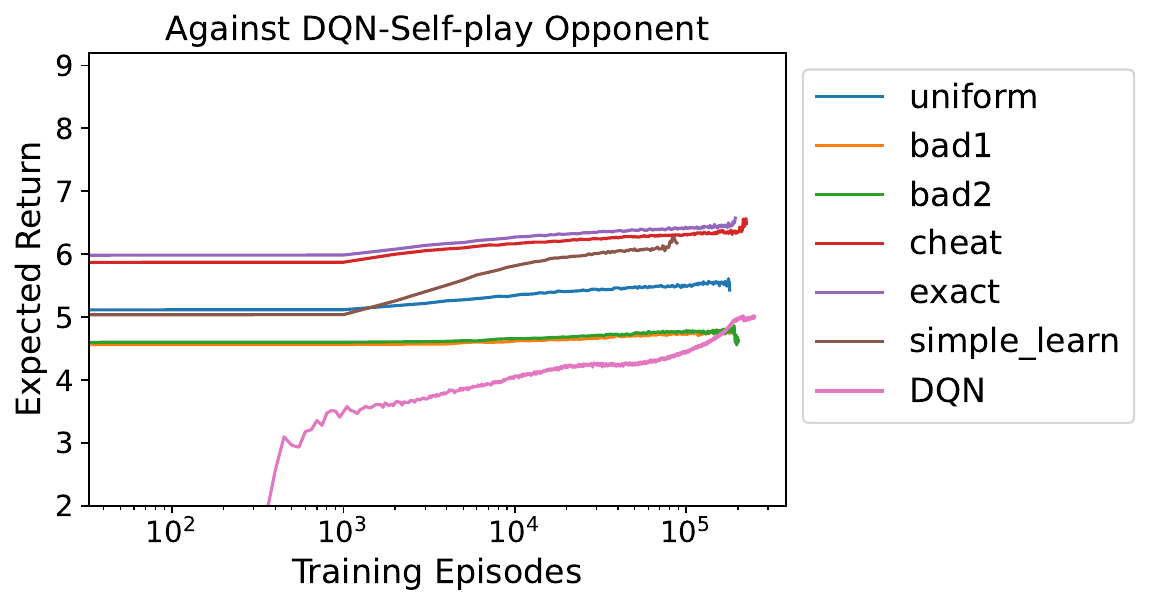}\\
\end{tabular}
\caption{Best response performance using different generative models, against (left) uniform random opponent, (middle) DQN response to uniform random, (right) self-play DQN opponent.
{\bf Uniform} samples a legal preference vector uniformly at random, {\bf bad1} always samples the first legal instance in the database, {\bf bad2} always samples the last legal instance in the database, {\bf cheat} always samples the actual underlying world state, {\bf exact} samples from the exact posterior, {\bf simple learn} is the method described in Algorithm~\ref{alg:abr}, and {\bf DQN} is a simple DQN responder that does not use a generative model nor search. All results are averaged across 30 random seeds. \label{fig:br-genmdl}}
\end{center}
\end{figure*}

\section{Experiments}
\label{sec:experiments}

In this section, we report our major results on the Deal-or-No-Deal (DoND) negotiation game.


\subsection{Negotiation Game: Deal or No Deal}
\label{sec:dond}

``Deal or No Deal'' (DoND) is a simple alternating-offer bargaining game with incomplete information, which has been used in many AI studies~\cite{Lewis17,Cao18Emergent}. Our focus is to train RL agents to play against humans {\it without human data}, similar to previous work~\cite{strouse2021collaborating}.
An example game of DoND is shown in Figure~\ref{fig:dond-example}.
Two players are assigned {\it private} preferences $\bm{w}_1 \ge \mathbf{0}, \bm{w}_2 \ge \mathbf{0}$ for three different items (books, hats, and basketballs). At the start of the game, there is a pool $\bm{c}$ of 5 to 7 items drawn randomly such that: (i)
the total value for a player of all items is 10: $\bm{w}_1 \cdot \bm{c} = \bm{w}_2 \cdot \bm{c} = 10$,
(ii) each item has non-zero value for at least one player: $\bm{w}_1 + \bm{w}_2 > \mathbf{0}$,
(iii) some items have non-zero value for both players, $\bm{w}_1 \odot \bm{w}_2 \not= \mathbf{0}$,
where $\odot$ represents element-wise multiplication.
 
The players take turns proposing how to split the pool of items, for up to 10 turns (5 turns each). If an agreement is not reached, the negotiation ends and players both receive 0. Otherwise, the agreement represents a split of the items to each player, $\bm{o}_1 + \bm{o}_2 = \bm{c}$, and player $i$ receives a utility of $\bm{w}_i \cdot \bm{o}_i$. 
DoND is an imperfect information game because the other player's preferences are private.
We use a database of 6796 bargaining instances made publicly available in~\cite{Lewis17}.
Deal or No Deal is a significantly large game, with an estimated $1.32 \cdot 10^{13}$ information states for player 1 and $5.69 \cdot 10^{11}$ information states for player 2 (see App. D.2 for details).

\subsection{Generative World State Sampling}
\label{sec:br-genmdl}

We now show that both the search and the generative model contribute to achieving higher reward than RL alone. 
The input of our deep generative model is one's private values $\bv_{i}$ and public observations, and the output is a distribution over $\bv_{-i}$ (detailed in the Appendix).
We compute approximate best responses to three opponents: uniform random, a DQN agent trained against uniform random, and a DQN agent trained in self-play. We compare different world state sampling models as well to DQN in Figure~\ref{fig:br-genmdl}, where the deep generative model approach is denoted as simple\_learn.

The benefit of search is clear: the search methods achieve a high value in a few episodes, a level that takes DQN many more episodes to reach (against random and DQN response to random) and a value that is not reached by DQN against the self-play opponent. 
The best generative models are the true posterior (exact) and the actual underlying world state (cheat). However, the exact posterior is generally intractable and the underlying world state is not accessible to the agent at test-time, so these serve as idealistic upper-bounds. Uniform seems to be a compromise between the bad and ideal models. The deep generative model approach is roughly comparable to uniform at first,
but learns to approximate the posterior as well as the ideal models as data is collected.
In contrast, DQN eventually reaches the performance of the uniform model against the weaker opponent but not against the stronger opponent even after $20000$ episodes.

\begin{table*}[t!]
\begin{center}
\begin{tabular}{r|cc|cc|cc|c}
Agent &
$\bar{u}_{\mbox{Humans}}$ & & $\bar{u}_{\mbox{Agent}}$ & & $\bar{u}_{\mbox{Comb}}$ & & NBS\\
\hline
IndRL & $5.86 $ & $[5.37, 6.40]$ & $\mathbf{6.50}$ & $\mathbf{[5.93, 7.06]}$ & $6.18$ & $[5.82, 6.56]$ & $38.12$ \\
Comp1 & $5.14$ & $[4.56, 5.63]$ & $5.49$ & $[4.87, 6.11]$ & $5.30$ & $[4.93, 5.76]$ & $28.10$  \\
Comp2 & $6.00$ & $[5.49, 6.55]$ & $5.54$ & $[4.96, 6.10]$ & $5.76$ & $[5.33, 6.12]$ & $33.13$ \\
Coop & $6.71$ & $[6.23, 7.20]$ & $6.17$ & $[5.66, 6.64]$ & $6.44$ & $[6.11, 6.75]$ & $41.35$ \\
Fair & $\mathbf{7.39}$ & $\mathbf{[6.89, 7.87]}$ & $5.98$ & $[5.44, 6.49]$ & $\mathbf{6.69}$ & $\mathbf{[6.34, 7.01]}$ & $\mathbf{44.23}$ \\
\end{tabular}
\end{center}
\caption{Humans vs. agents performance with $\bm{129}$ human participants, $\bm{547}$ games total. $\bm{\bar{u}_X}$ refers to the average utility to group $\bm{X}$ (for the humans when playing the agent, or for the agent when playing the humans), Comb refers to Combined (human and agent). Square brackets indicate 95\% confidence intervals. IndRL refers to Independent RL (DQN), Comp1 and Comp2 are the two top-performing competitive agents, Coop is the most cooperative agent, and Fair is fairest agent.
NBS is the Nash bargaining score (Eq~\ref{eq:nash-product}).
\label{tab:dond-hva}}
\end{table*}

\subsection{Studies with Human Participants}

We recruited participants from Prolific \cite{peer2021data,peer2017beyond} to evaluate the performance of our agents in DoND (overall $346$; 41.4\% female, 56.9\% male, 0.9\% trans or nonbinary; median age range: 30–40), following established ethical guidelines for research with human participants~\cite{mckee2024human}. Crucially, participants played DoND for real monetary stakes, with an additional payout for each point earned in the game.
Participants first read game instructions and completed a short comprehension test to ensure they understood key aspects of DoND's rules. Participants then played five episodes of DoND with a randomized sequence of opponents. Episodes terminated after players reached a deal, after 10 moves without reaching a deal, or after 120 seconds elapsed. After playing all five episodes, participants completed a debrief questionnaire collecting standard demographic information and open-ended feedback on the study.

\paragraph{Training Details}
Our infrastructure restricts that each human participant can only play five matches with our bots.
Therefore we decided to select five different agents so every participant can play each of these once.
For comparison, we decided to include one independent RL agent and four search-improved PSRO agents of different playing styles.
For the independent RL agent, we trained two classes of independent RL agents in self-play: (1) DQN~\cite{mnih2015human} and Boltzmann DQN~\cite{Cui21BQDN}, and (2) policy gradient algorithms such as A2C, QPG, RPG and RMPG~\cite{Srinivasan18RPG}.
We fine-tuned their hyperparameters and eventually select an instance of self-play DQN based on both individual returns and social welfare performances.

For PSRO agents, we consider 16 different meta-strategy solvers, and 4 different back-propagating value types during the tree search procedure, making it 64 different combination in total.
Notice that the original MCTS algorithm (Algorithm~\ref{alg:is-mcts-br}) back-propagates individual rewards during each simulation phase for the search agent.


We consider two way of extracting the final agents given sets of policies $\itPi$ produced by PSRO, one based on the final-iteration MSS mixture and another based on the final-iteration search-based best response (details are in App F).
We trained over 100 PSRO agents with different combinations of MSS, back-propagation targets, and extraction methods.
To select among these agents, we apply empirical game-theoretic analysis~\cite{wellman2025empirical} and obtain a head-to-head empirical game matrix.
We eventually selected: (i) two most competitive agents (Comp1, Comp2) (maximizing utility), (ii) the most cooperative agents (Coop) (maximizing social welfare), the (iii) the fairest agent (Fair) (minimizing social inequity~\cite{Fehr99InequityAversion}); (iv) a separate top-performing independent RL agent (IndRL) trained in self-play (DQN). 
Both Coop and Fair are using Nash product as the back-propagating values during tree search, while Comp1 uses inequity aversion and Comp2 uses individual rewards.
Comp1, Comp2 and Fair are trained using max-Gini correlated equilibrium notions~\cite{Marris21}, while Coop uses uniform distribution as the MSS.

\paragraph{Results} We collect data under two conditions: human vs. human (HvH), and human vs. agent (HvA). In the HvH condition, we collect 483 games: 482 end in deals made (99.8\%), and achieve a return of 6.93 (95\% c.i. [6.72, 7.14]), on expectation. We collect 547 games in the HvA condition: 526 end in deals made (96.2\%; see Table~\ref{tab:dond-hva}). DQN achieves the highest individual return. By looking at the combined reward, it achieves this by aggressively reducing the human reward (down to 5.86)--possibly by playing a policy that is less human-compatible. The competitive PSRO agents seem to do the same, but without overly exploiting the humans, resulting in the lowest social welfare overall. The cooperative agent achieves significantly higher combined utility playing with humans. Better yet is Human vs. Fair, the only Human vs. Agent combination to achieve social welfare comparable to the Human vs. Human social welfare.

Another metric is the objective value of the empirical NBS from Eq.~\ref{eq:nash-product}, over the symmetric game (randomizing the starting player) played between the different agent types. This metric favors Pareto-efficient outcomes, balancing between the improvement obtained by both sides. From App G.3, the NBS of Coop decreases when playing humans, from $44.51 \rightarrow 41.35$-- perhaps due to overfitting to agent-learned conventions. Fair increases slightly ($42.56 \rightarrow 44.23$).
The NBS of DQN rises from $23.48 \rightarrow 38.12$.
The NBS of the competitive agents also rises playing against humans ($24.70 \rightarrow 28.10$, and $25.44 \rightarrow 28.10$), and also when playing with Fair ($24.70 \rightarrow 29.63$, $25.44 \rightarrow 28.73$). 
The fair agent is both adaptive to many different types of agents, and cooperative, increasing the social welfare in all  groups it negotiated with. This could be due to its MSS putting significant weight on many policies leading to Bayesian prior with high support,
or its backpropagation of the product of utilities rather than individual return.


\section{Conclusion}

We proposed a general-purpose multiagent training regime that combines the power of MCTS search and a population-based training framework, for general-sum imperfect information domains.
We developed a novel search technique that combines IS-MCTS with a deep belief learning module coupled with the RL training loop, which scale to large belief and state spaces.
The outer loop of our algorithm is implemented by PSRO, which iteratively trains and adds search strategies guided by game-theoretic analysis.
On one hand, search serves as a strong best response method within the PSRO loop, which provides an instance of the framework of its own interests.
On the other hand, PSRO automatically produces a belief hierarchy over the opponents' strategies, which endows the search with the capability of inferring opponent types during online decision makings.
This dual view of the whole training architecture illustrates its effectiveness in producing agents that are capable of opponent modeling through game-theoretic analysis and planning forward at test-time.

\section*{Ethics Statement}
We believe our GenBR method with the PSRO training loop advances the general opponent modeling and planning techniques in multi-agent systems with little domain knowledge. Our methods can be potentially deployed in a variety of applications, including automated bidding in auctions, negotiation, cybersecurity, warehouse
robotics, and autonomous vehicle systems. All of these are multi-agent scenarios that involve general-sum, imperfect information elements.

One of the potential risks is value misalignment in negotiation. The method can produce strategies that are unpredictable and not easily explained, which could lead to exploitative behaviors in negotiation that are misaligned with the users' intents. This could potentially cause harm in the economic system and reduce market efficiency. Any deployed use of artificial agents built using our algorithm would need to first be thoroughly tested, ideally by third party, and undergo a controlled private study with humans to identify any potentially harmful behavior.

\bibliographystyle{named}
\bibliography{ijcai25}

\newpage
\newpage
\appendix
\onecolumn

\section{Meta-Strategy Solvers}
\label{sec:app-mss}
In this section, we describe the algorithms used for the MSS step of PSRO, which computes a set of meta-strategies $\sigma_i$ or a correlation device $\mu$ for the normal-form empirical game. 

\subsection{Joint-Space Meta-Strategy Solvers and Correlation Devices}

A ``correlation device'', $\mu \in \Delta(\itPi)$, is a distribution over the {\it joint} strategy space, which secretly recommends strategies to each player. Define $u_i(\pi'_i, \mu)$ to be the expected utility of $i$ when it deviates to $\pi'_i$ given that other players follow their recommendations from $\mu$. Then, $\mu$ is \term{coarse-correlated equilibrium (CCE)} when no player $i$ has an incentive to unilaterally deviate {\it before} receiving their recommendation: $u_i(\pi'_i, \mu)-u_i(\mu) \leq 0$ for all $i \in \playerset, \pi'_i \in \itPi_i$. Similarly, define $u_i(\pi'_i, \mu|\pi''_i)$ to be the expected utility of deviating to $\pi'_i$ given that other players follow $\mu$ and player $i$ has received recommendation $\pi''_i$ from the correlation device. A \term{correlated equilibrium (CE)} is a correlation device $\mu$ where no player has an incentive to unilaterally deviate {\it after} receiving their recommendation: $u_i(\pi'_i, \mu|\pi''_i) - u_i(\mu|\pi''_i) \le 0$ for all $i \in \playerset, \pi'_i \in \itPi_i, \pi''_i \in \itPi_i$.

\subsection{Classic PSRO Meta-Strategy Solvers}

\paragraph{Projected Replicator Dynamics (PRD)}

In the replicator dynamics, each player $i$ used by mixed strategy $\sigma^t_i$, often interpreted as a distribution over population members using pure strategies. The continuous-time dynamic then describes a change in weight on strategy $\pi_k \in \Pi_i$ as a time derivative:
\[
\frac{d \sigma_i^t(\pi_k)}{d t} = \sigma_i^t(\pi_k) 
[ u_i(\pi_k, \sigma_{-i}) - u_i(\sigma) ].
\]

The projected variant ensures that the strategies stay within the exploratory simplex such $\sigma^t_i$ remains a probability distribution, and that every elements is subject to a lower-bound $\frac{\gamma}{|\Pi_i|}$. In practice, this is simulated by small discrete steps in the direction of the time derivatives, and then projecting $\sigma^t_i$ back to the nearest point in the exploratory simplex.

\paragraph{Exploratory Regret-Matching}

Regret-Matching is based on the algorithm described in~\cite{Hart00} and used in extensive-form games for regret minimization~\cite{CFR}.
Regret-matching is an iterative solver that tabulates cumulative regrets 
$R_i^T(\pi)$, which initially start at zero. 
At each trial $t$, player $i$ would receive $u_i(\sigma^t_i, \sigma^t_{-i})$ by playing their strategy $\sigma^t_i$. The instantaneous regret of {\it not} playing pure strategy $\pi_k \in \Pi_i$ is
\[
r^t(\pi_k) = u_i(\pi_k, \sigma^t_{-i}) - u_i(\sigma^t_i, \sigma^t_{-i}).
\]
The cumulative regret over $T$ trials for the pure strategy is then defined to be:
\[
R^T_i(\pi_k) = \sum_{t = 1}^T r^t(\pi_k).
\]
Define $(x)^+ = \max(0, x)$.
The policy at time $t+1$ is derived entirely by these regrets:
\[
\sigma_{i, RM}^{t+1}(\pi_k) =
    \frac{R^{T,+}_i(\pi_k)}{\sum_{\pi_k' \in \Pi_i} R^{T,+}_i(\pi_k')},
\]
if the denominator is positive, or $\frac{1}{|\Pi_i|}$ otherwise.
As in original PSRO, in this work we also add exploration:
\[
\sigma_i^{t+1} = \gamma \textsc{Uniform}(\Pi_i) + (1 - \gamma) \sigma_{i, RM}^{t+1}.
\]
Finally, the meta-strategy returned for all players at time $t$ is their average strategy $\bar{\sigma}^T$.

\subsection{Joint and Correlated Meta-Strategy Solvers}\label{sec: joint}

The jointly correlated meta-strategy solvers were introduced in~\cite{Marris21}, which was the first to propose computing equilibria in the joint space of the empirical game. In general, a correlated equilibrium (and coarse-correlated equilibrium) can be found by satisfying a number of constraints, on the correlation device, so the question is what to use as the optimization criterion, which effectively selects the equilibrium.

One that maximizes Shannon entropy (ME(C)CE) seems like a good choice as it places maximal weight on all strategies, which could benefit PSRO due to added exploration among alternatives. However it was found to be slow on large games. Hence, Marris et al. ~\cite{Marris21} propose to use a different but related measure, the Gini impurity, for correlation device $\mu$,
\[
\textsc{GiniImpurity}(\mu) = 1 - \mu^T \mu,
\]
which is a form of Tsallis entropy. 
The resulting equilibria Maximum Gini (Coarse) Correlated Equilibria (MG(C)CE) have linear constraints and can be computed by solving a quadratic program. Also, Gini impurity has similar properties to Shannon entropy: it is maximized at the uniform distribution and minimized when all the weight is placed on a single element.

In the Deal-or-no-Deal experiments, 4 of 5 selected winners of tournaments used MG(C)CE (see Section~\ref{sec:app-dond}, including the one that cooperated best with human players, and the other used uniform. Hence, it is possible that the exploration motivated my high-support meta-strategies indeed does help when playing against a population of agents, possibly benefiting the Bayesian inference implied by the generative model and resulting search.

\subsection{ADIDAS}

Average Deviation Incentive with Adaptive Sampling (ADIDAS)~\cite{Gemp21ADIDAS} is an algorithm designed to approximate a particular Nash equilibrium concept called the \emph{limiting logit equilibrium} (LLE). The LLE is unique in almost all $N$-player, general-sum, normal-form games and is defined via a homotopy~\cite{mckelvey1995quantal}. Beginning with a quantal response (logit) equilibrium (QRE) under infinite temperature (known to be the uniform distribution), a continuum of QREs is then traced out by annealing the temperature. The LLE, aptly named, is the QRE defined in the limit of zero temperature. ADIDAS approximates this path in a way that avoids observing or storing the entire payoff tensor. Instead, it intelligently queries specific entries from the payoff tensor by leveraging techniques from stochastic optimization.

\section{New Meta-Strategy Solvers}
\label{sec:mss}

Recall from Section~\ref{sec:background} that a meta-strategy solver (MSS) selects a strategy profile from the current empirical game for use as best-response target.
This target can take the form of either: (i) $\mu$,  a joint distribution over $\Pi$, or (ii) $(\sigma_1, \sigma_2, \dotsc, \sigma_N)$, a set of (independent) distributions over $\Pi_i$, respectively. These distributions ($\mu_{-i}$ or $\sigma_{-i}$) are used to sample opponents when player $i$ is computing an approximate best response.
We use many MSSs: several new and from previous work, summarized in App.~\ref{sec:app-mss} and Table~\ref{tab:mss}.
We present several new MSSs
for general-sum games inspired by bargaining theory, which we now introduce.

\begin{table*}[tb]
    \centering
    \begin{adjustbox}{max width=\textwidth}
    \begin{tabular}{l|l|l|l|l}
    {\bf Algorithm} & {\bf Abbreviation} & {\bf Independent/Joint} & {\bf Solution Concepts}  &  {\bf Description}\\
    \hline
    $\alpha$-Rank  & --- & Joint & MCC&  \cite{omidshafiei2019alpha,muller2019generalized}\\
    ADIDAS         & --- & Independent & LLE/QRE &  \cite{Gemp21ADIDAS} \\
    Max Entropy (C)CE & ME(C)CE & Joint & (C)CE & \cite{ortix2007_mece}\\
    Max Gini (C)CE  & MG(C)CE & Joint & (C)CE &  \cite{Marris21} \\ 
    Max NBS (C)CE   & MN(C)CE & Joint & (C)CE &Sec \ref{sec:mss-nbs-cce} \\
    Max Welfare (C)CE  & MW(C)CE & Joint & (C)CE & \cite{Marris21} \\
    Nash Bargaining Solution (NBS) & NBS & Independent & P-E &  Sec \ref{sec:mss-nbs} \\
    NBS Joint      & NBS\_joint & Joint & P-E & Sec \ref{sec:mss-nbs} \\
    Projected Replicator Dynamics & PRD & Independent & ?  & \cite{lanctot2017unified,muller2019generalized} \\
    Regret Matching & RM & Independent & CCE & \cite{lanctot2017unified} \\
    Social Welfare & SW & Joint & MW &  Sec \ref{sec:mss}   \\
    Uniform         & --- & Independent  & ? & \cite{brown1951iterative,ShohamLB09}  \\
    \end{tabular}
    \end{adjustbox}
    \caption{Meta-strategy solvers.
    For each MSS, we indicate whether its output is over joint or individual strategy spaces, and the solution concept it captures.
    P-E stands for Pareto efficiency.}
\label{tab:mss}
\end{table*}

\subsection{Max-NBS (Coarse) Correlated Equilibria}
\label{sec:mss-nbs-cce}

The second new MSS we propose uses NBS to select a (C)CE.
For all normal-form games, valid (C)CEs are a convex polytope of joint distributions in the simplex defined by the linear constraints. Therefore, maximizing any strictly concave function can uniquely select an equilibrium from this space (for example Maximum Entropy \cite{ortix2007_mece}, or Maximum Gini \cite{Marris21}). The log Nash product (Equation~\eqref{eqn:lognashprod}) is concave but not, in general, strictly concave. Therefore to perform unique equilibrium selection, a small additional strictly concave regularizer (such as  maximum entropy) may be needed to select a uniform mixture over distributions with equal log Nash product. 
We use existing off-the-shelf exponential cone constraints solvers (e.g., ECOS~\cite{domahidi2013ecos}, SCS \cite{scs}) which are available in simple frameworks (CVXPY \cite{diamond2016cvxpy,agrawal2018cvxpy}) to solve this optimization problem.

NBS is a particularly interesting selection criterion. Consider the game of chicken in where players are driving head-on; each may continue (C), which may lead to a crash, or swerve (S), which may make them look cowardly. Many joint distributions in this game are valid (C)CE equilibria (Figure~\ref{fig:nbs_chicken_polytope}). The optimal outcome in terms of both welfare and fairness is to play SC and CS each 50\% of the time. NBS selects this equilibrium. Similarly in Bach-or-Stravinsky where players coordinate but have different preferences over events: the fairest maximal social welfare outcome is a compromise, mixing equally between BB and SS.
\begin{figure}
    \centering
    \vspace{-0.5cm}
    \begin{game}{2}{2}[][]
          & $C$ & $S$ \\
        $C$ & -5,-5 & +1,-1\\
        $S$ & -1,+1 & -1,-1\\
    \end{game}
    \hspace{1.5cm}
    \begin{game}{2}{2}[][]
    & $B$ & $S$ \\
    $B$ & 3,2 & 0,0\\
    $S$ & 0,0 & 2,3\\
    \end{game}  \\
    \includegraphics[width=0.25\linewidth]{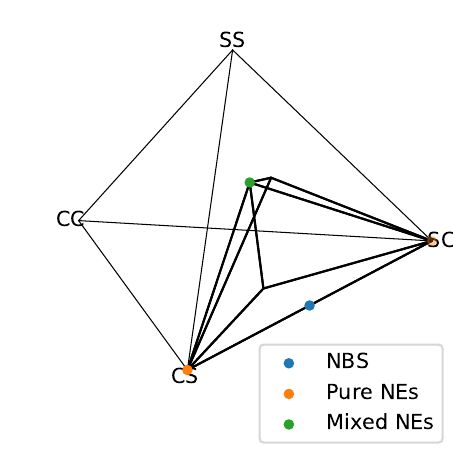}~
    \includegraphics[width=0.25\linewidth]{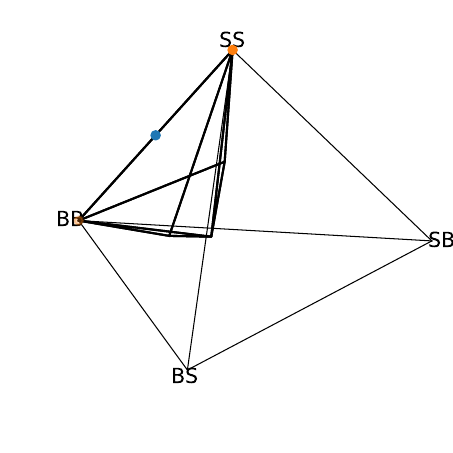}
    \caption{(C)CE polytopes in Chicken (left) and Bach-or-Stravinsky (right) showing NBS equilibrium selection.}
    \label{fig:nbs_chicken_polytope}
\end{figure}

\subsection{Social Welfare}
\label{sec:mss-sw}

This MSS selects the pure joint strategy of the empirical game that maximizes the estimated social welfare.

\section{Nash Bargaining Solution of Normal-form games via Projected Gradient Ascent}
\label{sec:app-nbs-oga}

In this section, we elaborate on the method proposed in Section~\ref{sec:mss-nbs}. As an abuse of notation, let $u_i$ denote the payoff tensor for player $i$ flattened into a vector; similarly, let $\bx$ be a vector as well. Let $d$ be the number of possible joint strategies, e.g., $M^N$ for a game with $N$ agents, each with $M$ pure strategies. Let $\Delta^{d-1}$ denote the $(d-1)$-simplex, i.e., $\sum_{k=1}^{d} \bx_k = 1$ and $\bx_k \ge 0$ for all $k \in \{1, \ldots, d\}$. We first make an assumption.

\begin{assumption}\label{assump:pos_deals}
Any agreement is better than no agreement by a positive constant $\kappa$, i.e.,
\begin{align}
    u_i^\top \bx - d_i \ge \kappa > 0 \,\, \forall \,\, i \in \{1, \ldots, N\}, \bx \in \Delta^{d-1}.
\end{align}
\end{assumption}

\begin{lemma}\label{lemma:convex}
Given Assumption~\ref{assump:pos_deals}, the negative log-Nash product, $f(\bx) = -\sum_i \log(u_i^\top \bx - d_i)$, is convex with respect to the joint distribution $\bx$.
\end{lemma}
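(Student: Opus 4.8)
The plan is to establish convexity of $f(\bx) = -\sum_i \log(u_i^\top \bx - d_i)$ by exhibiting it as a finite sum of convex functions of $\bx$. The core fact is the standard one that $t \mapsto -\log t$ is convex on $(0, \infty)$, and that composing a convex function with an affine map preserves convexity.

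First I would fix an index $i$ and consider the scalar affine map $\bx \mapsto u_i^\top \bx - d_i$. Under Assumption~\ref{assump:pos_deals}, this map takes values in $[\kappa, \infty) \subset (0, \infty)$ for every $\bx \in \Delta^{d-1}$, so the composition $\bx \mapsto -\log(u_i^\top \bx - d_i)$ is well-defined on the simplex (indeed on the affine hull of the simplex, or any open set containing it on which the argument stays positive). Since $-\log(\cdot)$ is convex on $(0,\infty)$ and precomposition with an affine function preserves convexity, each summand $\bx \mapsto -\log(u_i^\top \bx - d_i)$ is convex. Then I would invoke the fact that a nonnegative (here, unit-coefficient) sum of convex functions is convex to conclude that $f(\bx) = \sum_{i=1}^n \bigl(-\log(u_i^\top \bx - d_i)\bigr)$ is convex on $\Delta^{d-1}$.

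If a more self-contained argument is wanted, I could instead verify convexity via the Hessian: for a single term, $\nabla_\bx \bigl(-\log(u_i^\top \bx - d_i)\bigr) = -\frac{u_i}{u_i^\top \bx - d_i}$ and the Hessian is $\frac{u_i u_i^\top}{(u_i^\top \bx - d_i)^2}$, which is positive semidefinite as a rank-one outer product scaled by a positive number; summing over $i$ keeps the Hessian PSD, giving convexity. The denominator is bounded below by $\kappa^2 > 0$ by Assumption~\ref{assump:pos_deals}, so all quantities are finite and the second-derivative test is valid throughout the domain.

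There is no real obstacle here — the only thing to be careful about is the domain: one must use Assumption~\ref{assump:pos_deals} to guarantee the logarithm's argument is bounded away from zero (and in particular positive) on all of $\Delta^{d-1}$, so that $f$ is finite-valued and the convexity of $-\log$ applies; without that positivity hypothesis the claim would fail. Equivalently, this is exactly the observation made informally in the main text that $g(\bx) = -f(\bx)$ is ``a sum of concave functions, hence concave.''
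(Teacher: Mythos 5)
Your proof is correct. Your primary argument — each summand $\bx \mapsto -\log(u_i^\top \bx - d_i)$ is the composition of the convex function $-\log(\cdot)$ on $(0,\infty)$ with an affine map whose value is kept in $[\kappa,\infty)$ by Assumption~\ref{assump:pos_deals}, and a sum of convex functions is convex — takes a slightly different route from the paper, which argues directly via second-order conditions: it computes $\nabla f(\bx) = -\sum_i u_i/(u_i^\top \bx - d_i)$, forms the Hessian $H = \sum_i u_i u_i^\top/(u_i^\top \bx - d_i)^2$, and concludes $H \succeq 0$ because each rank-one outer product $u_i u_i^\top$ is positive semi-definite and the denominators are positive under the assumption. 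Your fallback Hessian computation is essentially identical to the paper's proof, so the two arguments coincide there. The composition-rule route is arguably cleaner: it needs no differentiability bookkeeping and sidesteps small slips of the kind the paper makes (it states the nonzero eigenvalue of $u_i u_i^\top$ as $\lVert u_i\rVert$ rather than $\lVert u_i\rVert^2$, which is immaterial to positive semi-definiteness but still imprecise), while the Hessian route has the side benefit that the explicit gradient it produces is reused in the paper's Lemma~\ref{lemma:bounded_grads} to bound $\lVert \nabla f\rVert_\infty$. Your observation that Assumption~\ref{assump:pos_deals} is exactly what keeps the logarithm's argument bounded away from zero on $\Delta^{d-1}$ is the right point of care in either version.
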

\begin{proof}
We prove $f(\bx)$ is convex by showing its Hessian is positive semi-definite. First, we derive the gradient:
\begin{align}
    \nabla f(\bx) &= -\sum_i \frac{u_i}{u_i^\top \bx - d_i}.
\end{align}
We then derive the Jacobian of the gradient to compute the Hessian. The $kl$-th entry of the Hessian is
\begin{align}
    H_{kl} &= \sum_i \frac{u_{ik} u_{il}}{(u_i^\top \bx - d_i)^2}.
\end{align}
We can write the full Hessian succinctly as
\begin{align}
    H &= \sum_i \frac{u_i u_i^\top}{(u_i^\top \bx - d_i)^2} = \sum_i \frac{u_i u_i^\top}{\gamma_i^2}.
\end{align}
Each outer product $u_i u_i^\top$ is positive semi-definite with eigenvalues $||u_i||$ (with multiplicity $1$) and $0$ (with multiplicity $M^N-1$).

Each $\gamma_i$ is positive by Assumption~\ref{assump:pos_deals}, therefore, $H$, which is the weighted sum of $u_i u_i^\top$ is positive semi-definite as well.
\end{proof}

We also know the following.

\begin{lemma}\label{lemma:bounded_grads}
Given Assumption~\ref{assump:pos_deals}, the infinity norm of the gradients of the negative log-Nash product are bounded by $\frac{u^{\max} N}{\kappa}$ where $u^{\max} = \max_{i,k} [u_{ik}]$.
\end{lemma}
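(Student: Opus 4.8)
The plan is to read off the gradient formula already computed in the proof of Lemma~\ref{lemma:convex}, namely
\[
\nabla f(\bx) = -\sum_{i=1}^{n} \frac{u_i}{u_i^\top \bx - d_i},
\]
and then bound its $\ell_\infty$ norm directly. Writing $c_i = 1/(u_i^\top \bx - d_i)$, which is a positive scalar for every $i$ by Assumption~\ref{assump:pos_deals}, the gradient is the linear combination $-\sum_i c_i u_i$. First I would apply the triangle inequality componentwise: for each coordinate $k$, $|\nabla f(\bx)_k| = |\sum_i c_i u_{ik}| \le \sum_i c_i |u_{ik}|$, and taking the max over $k$ gives $\|\nabla f(\bx)\|_\infty \le \sum_{i=1}^{n} c_i \|u_i\|_\infty$.

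Next I would bound the two factors in each summand. The numerator satisfies $\|u_i\|_\infty = \max_k |u_{ik}| \le u^{\max}$; this is the one place that quietly uses that $u^{\max} = \max_{i,k} u_{ik}$ dominates every entry in absolute value (equivalently, that payoffs may be taken nonnegative, or one replaces $u^{\max}$ by $\max_{i,k} |u_{ik}|$). For the denominator, Assumption~\ref{assump:pos_deals} gives exactly $u_i^\top \bx - d_i \ge \kappa > 0$ for every $i$ and every $\bx \in \Delta^{d-1}$, so $c_i \le 1/\kappa$. Combining the two estimates, each of the $n$ summands is at most $u^{\max}/\kappa$, and summing over players yields $\|\nabla f(\bx)\|_\infty \le n u^{\max}/\kappa$, as claimed.

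There is essentially no real obstacle here: the lemma is an immediate consequence of the gradient identity from Lemma~\ref{lemma:convex} together with Assumption~\ref{assump:pos_deals}. The only point deserving a sentence of care is matching the definition of $u^{\max}$ in the statement with the bound on $\|u_i\|_\infty$, i.e. making the (tacit) nonnegativity convention on payoffs explicit. Conceptually, the role of this bound is that it supplies the Lipschitz/subgradient constant $G = n u^{\max}/\kappa$ for the concave objective $g = -f$, which, combined with the simplex-diameter bound $\|\bx^0 - \bx^*\|_2 \le \sqrt{(d-1)/d}$ for the uniform initialization, plugs into the standard projected (sub)gradient-ascent rate and produces the $O(G\sqrt{d}/\sqrt{t})$ guarantee of Theorem~\ref{theorem:nbs_convergence_main}.
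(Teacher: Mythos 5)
Your proof is correct and follows essentially the same route as the paper's: apply the triangle inequality to $\nabla f(\bx) = -\sum_i u_i/(u_i^\top \bx - d_i)$, bound each numerator by $u^{\max}$ and each denominator below by $\kappa$ via Assumption~\ref{assump:pos_deals}, and sum over the $n$ players. Your side remark about $u^{\max}$ versus $\max_{i,k}\lvert u_{ik}\rvert$ (i.e.\ the tacit nonnegativity convention on payoffs) is a fair point of care that the paper itself glosses over, but it does not change the argument.
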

\begin{proof}
As derived in Lemma~\ref{lemma:convex}, the gradient of $f(\bx)$ is
\begin{align}
    \nabla f(\bx) &= -\sum_i \frac{u_i}{u_i^\top \bx - d_i}.
\end{align}
Using triangle inequality, we can upper bound the infinity (max) norm of the gradient as
\begin{align}
    ||\nabla f(\bx)||_{\infty} &= ||\sum_i \frac{u_i}{u_i^\top \bx - d_i}||_{\infty}
    \\ &\le \sum_i ||\frac{u_i}{u_i^\top \bx - d_i}||_{\infty}
    \\ &= \sum_i \frac{||u_i||_{\infty}}{\gamma_i}
    \\ &\le \frac{u^{\max} N}{\kappa}
\end{align}
where $u^{\max} = \max_{i,k} [u_{ik}]$.
\end{proof}

\begin{theorem}\label{theorem:nbs_convergence}
Let $\{\bx^t\}$ be the sequence generated by Algorithm~\ref{alg:nbs-gradient} with starting point $\bx^0 = d^{-1} \mathbf{1}$ and step size sequence $\alpha^t = \frac{\sqrt{(d-1)/d}}{||g'(\bx^s)||_{2}} (t+1)^{-1/2}$. Then, for all $t > 0$ one has
\begin{align}
    \max_{\bx \in \Delta^{d-1}} g(\bx) - \max_{0 \le s \le t} g(\bx^s) &\le \frac{\sqrt{2 B_{\psi}(\bx^*, \bx^0)} ||g'(\bx^s)||_{2}}{\sqrt{t+1}}
    \\ &\le \frac{\sqrt{\frac{d-1}{d}} ||g'(\bx^s)||_{2}}{\sqrt{t+1}}
    \\ &\le \frac{\sqrt{d} ||g'(\bx^s)||_{\infty}}{\sqrt{t+1}}.
\end{align}
where $g(x) = -f(x)$ is the log-Nash product defined in Sec~\ref{sec:mss-nbs}.
\end{theorem}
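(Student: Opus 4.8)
The plan is to invoke a standard convergence guarantee for projected (sub)gradient ascent of a concave function over a compact convex set, and to verify its hypotheses using the two lemmas already established. Since $g(\bx) = -f(\bx)$ and $f$ is convex by Lemma~\ref{lemma:convex}, $g$ is concave, and we are maximizing a concave function over the simplex $\Delta^{d-1}$, which is compact and convex. The standard mirror-descent / projected-subgradient analysis with a step size of the form $\alpha^t = c\,(t+1)^{-1/2}$ yields, after $t$ iterations, a bound on $\max_{\bx} g(\bx) - \max_{0\le s\le t} g(\bx^s)$ in terms of the Bregman divergence $B_\psi(\bx^*,\bx^0)$ between the optimum and the initial iterate and the norm of the (sub)gradients. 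Here $\psi$ is the Euclidean potential (so $B_\psi$ is $\tfrac12\|\cdot\|_2^2$), consistent with the Euclidean projection $\texttt{Proj}$ used in Algorithm~\ref{alg:nbs-gradient}.

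The key steps, in order: (i) State the projected-subgradient convergence theorem for concave maximization over a convex set with Euclidean projection and step sizes $\alpha^t \propto (t+1)^{-1/2}$, giving the bound $\frac{\sqrt{2B_\psi(\bx^*,\bx^0)}\,\|g'\|_2}{\sqrt{t+1}}$ on the best-so-far suboptimality; this is the first displayed inequality. (ii) Bound $B_\psi(\bx^*,\bx^0)$: with $\bx^0 = d^{-1}\mathbf 1$ the farthest point of the simplex is a vertex, so $\|\bx^*-\bx^0\|_2^2 \le 1 - 1/d = (d-1)/d$, hence $2B_\psi(\bx^*,\bx^0) \le (d-1)/d$; substituting gives the second inequality. (iii) Pass from the $\ell_2$ norm to the $\ell_\infty$ norm of the gradient via $\|g'\|_2 \le \sqrt{d}\,\|g'\|_\infty$ (the vector has $d$ coordinates), yielding the third inequality $\frac{\sqrt{d}\,\|g'\|_\infty}{\sqrt{t+1}}$. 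Finally, to recover the statement of Theorem~\ref{theorem:nbs_convergence_main}, combine step (iii) with Lemma~\ref{lemma:bounded_grads}, which gives $\|g'(\bx)\|_\infty = \|\nabla f(\bx)\|_\infty \le u^{\max} n / \kappa$, and note that $d = |\itPi|$; this produces the claimed rate $\frac{u^{\max} n \sqrt{|\itPi|}}{\kappa\sqrt{t+1}}$, and one checks that the step size in the main-text theorem is exactly the stated $\alpha^t$ specialized by replacing $\|g'\|_2$ with its upper bound $u^{\max} n\sqrt{d}/\kappa$, i.e.\ $\alpha^t = \frac{\kappa\sqrt{(d-1)/d}}{u^{\max} n}(t+1)^{-1/2}$.

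The main obstacle is not any single computation but getting the projected-subgradient bound cited in a form that matches the $\max_{0\le s\le t}$ (best-iterate) guarantee with a time-varying step size; the cleanest route is to appeal to a textbook mirror-descent theorem (e.g.\ the Bregman-divergence formulation in \cite{BoydVandenberghe} or a standard online-convex-optimization reference) rather than rederiving it. A minor subtlety to flag is that $g$ is only concave, not strictly concave, so the iterates need not converge to a unique maximizer, but this is irrelevant for the function-value bound being claimed, which is all that the theorem asserts. The remaining items — the simplex-diameter estimate and the $\ell_2$-to-$\ell_\infty$ norm conversion — are elementary.
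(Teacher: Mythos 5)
Your proposal is correct and follows essentially the same route as the paper: the paper verifies convexity of $f$ (Lemma~\ref{lemma:convex}) and the gradient bound (Lemma~\ref{lemma:bounded_grads}), then applies the projected-subgradient/mirror-descent bound of Theorem 4.2 in \cite{beck2003mirror} with the Euclidean potential $\psi(\bx)=\tfrac12\|\bx\|_2^2$ and the estimate $B_\psi(\bx^*,\bx^0)\le\tfrac12(d-1)/d$, exactly as you do. Your additional steps --- the simplex-diameter computation, the $\|g'\|_2\le\sqrt{d}\,\|g'\|_\infty$ conversion, and the specialization to the main-text Theorem~\ref{theorem:nbs_convergence_main} via Lemma~\ref{lemma:bounded_grads} --- match the paper's chain of inequalities and its intended derivation of the main-text constant.
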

\begin{proof}
Given Assumption~\ref{assump:pos_deals}, $f(x)$ is convex (Lemma~\ref{lemma:convex}) and its gradients are bounded in norm (Lemma~\ref{lemma:bounded_grads}). We then apply Theorem 4.2 of~\cite{beck2003mirror} with $f(x)=-g(x)$, $\psi(\bx) = \frac{1}{2} ||\bx||_2^2$ and note that $B_{\psi}(\bx^*, \bx^0) \le \frac{1}{2} (d-1)/d$ to achieve the desired result.
\end{proof}



\begin{theorem}
Let $\{\bx^t\}$ be the sequence generated by EMDA~\cite{beck2003mirror} with starting point $\bx^0 = d^{-1} \mathbf{1}$ and step size sequence $\alpha^t = \frac{\sqrt{2\log(d)}}{||g'(\bx^s)||_{\infty}} (t+1)^{-1/2}$. Then, for all $t > 0$ one has
\begin{align}
    \max_{\bx \in \mathcal{X}} g(\bx) - \max_{0 \le s \le t} g(\bx^s) \le \frac{\sqrt{2 \log d} ||g'(\bx^s)||_{\infty}}{\sqrt{t+1}}
\end{align}
\end{theorem}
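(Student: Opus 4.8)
The plan is to recognize this as the entropic-mirror-descent counterpart of Theorem~\ref{theorem:nbs_convergence} and obtain it by the same route, only changing the distance-generating function from the squared Euclidean norm to the negative entropy. Concretely, I would run EMDA on the convex function $f(\bx) = -g(\bx) = -\sum_i \log(u_i^\top \bx - d_i)$ over $\mathcal{X} = \Delta^{d-1}$ with mirror map $\psi(\bx) = \sum_{k=1}^{d} \bx_k \log \bx_k$, and then quote Theorem~4.2 of~\cite{beck2003mirror}.

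First I would assemble the structural facts already established in the appendix: under Assumption~\ref{assump:pos_deals}, $f$ is convex on $\Delta^{d-1}$ (Lemma~\ref{lemma:convex}), and every subgradient obeys $\|\nabla f(\bx)\|_{\infty} \le u^{\max} n/\kappa =: L_\infty$ (Lemma~\ref{lemma:bounded_grads}). These are precisely the two ingredients Beck--Teboulle require, provided the primal norm is chosen so that its dual norm is $\|\cdot\|_\infty$; that pins the primal norm to $\|\cdot\|_1$, which is exactly the norm naturally paired with the entropic mirror map. Second, I would check the two entropy-specific hypotheses: (i) $\psi$ is $1$-strongly convex with respect to $\|\cdot\|_1$ on the simplex (this is Pinsker's inequality), so the EMDA update and its $O(1/\sqrt{t})$ guarantee apply; and (ii) the Bregman divergence from the uniform start is small, $B_\psi(\bx^*, \bx^0) = \mathrm{KL}(\bx^*\,\|\,d^{-1}\mathbf{1}) = \log d + \sum_k \bx^*_k \log \bx^*_k \le \log d$ for any $\bx^* \in \Delta^{d-1}$.

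Plugging $L_\infty$ and the bound $B_\psi \le \log d$ into Theorem~4.2 of~\cite{beck2003mirror} with the stated step size $\alpha^t = \frac{\sqrt{2\log d}}{L_\infty}(t+1)^{-1/2}$ gives $\max_{\bx \in \mathcal{X}} g(\bx) - \max_{0 \le s \le t} g(\bx^s) \le \frac{\sqrt{2 B_\psi(\bx^*,\bx^0)}\, L_\infty}{\sqrt{t+1}} \le \frac{\sqrt{2\log d}\, \|g'(\bx^s)\|_{\infty}}{\sqrt{t+1}}$, which is the claim. The running maximum over $\bx^0,\dots,\bx^t$ (equivalently, an averaged iterate by concavity of $g$) is needed because $f$ is non-smooth, so subgradient steps need not improve the objective monotonically; this is the standard mirror-descent caveat rather than anything specific to this problem.

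The only non-boilerplate point, and hence the main obstacle, is the norm bookkeeping: one has to be careful that the $\ell_\infty$ gradient bound of Lemma~\ref{lemma:bounded_grads} is the dual-norm quantity entering Beck--Teboulle, and that the $1$-strong convexity of negative entropy is taken in the matching $\ell_1$ norm (not $\ell_2$). Getting this pairing right is exactly what separates this theorem from Theorem~\ref{theorem:nbs_convergence} and is the source of the improvement of the leading constant from $\sqrt{d}$ to $\sqrt{\log d}$; everything else (convexity, the gradient bound, and the KL computation at the uniform point) is already available or a one-line estimate.
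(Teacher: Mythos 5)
Your proposal is correct and follows essentially the same route as the paper: the paper's proof is a one-line application of Theorem~5.1 of \cite{beck2003mirror} (the EMDA guarantee on the simplex) to $f=-g$, using exactly the convexity and $\ell_\infty$-gradient-bound lemmas you invoke. The only difference is that you re-derive that specialization yourself from Theorem~4.2 via the entropic mirror map (Pinsker's $1$-strong convexity in $\|\cdot\|_1$ and $B_\psi(\bx^*,\bx^0)\le\log d$), which is precisely how Beck--Teboulle obtain their Theorem~5.1, so the content is the same.
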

where $g(x) = -f(x)$ is the log-Nash product defined in Sec~\ref{sec:mss-nbs}.
\begin{proof}
Given Assumption~\ref{assump:pos_deals}, $f(x)$ is convex (Lemma~\ref{lemma:convex}) and its gradients are bounded in norm (Lemma~\ref{lemma:bounded_grads}). We then apply Theorem 5.1 of~\cite{beck2003mirror} with $f(x)=-g(x)$.
\end{proof}

This argument concerns the regret of the joint version of the NBS where $\bx$ is a correlation device. However, it also makes sense to try compute a Nash bargaining solution where players use independent strategy profiles (without any possibility of correlation across players), $\sigma = (\sigma_1 \times \cdots \sigma_N)$. In this case, $\bx$ represents a concatenation of the individual strategies $\sigma_i$ and the projection back to the simplex is applied to each player separately. Ensuring convergence is trickier in this case, because the expected utility may not be a linear function of the parameters which may mean the function is nonconcave.




\section{Game Domain Descriptions and Details}
\label{sec:app-games}

\begin{table}[t]
    \centering
    \begin{tabular}{l|l|l|l}
    {\bf Game} &  $N$ & {\bf Type} & {\bf Description From}\\
    \hline
    Kuhn poker (2P)    & 2 & 2P Zero-sum &  \cite{Lockhart19ED} \\
    Leduc poker (2P)   & 2 & 2P Zero-sum &  \cite{Lockhart19ED} \\
    Liar's dice        & 2 & 2P Zero-sum &  \cite{Lockhart19ED} \\
    \hline
    Kuhn poker (3P)    & 3 & $N$P Zero-sum &  \cite{Lockhart19ED} \\
    Kuhn poker (4P)    & 4 & $N$P Zero-sum &  \cite{Lockhart19ED} \\
    Leduc poker (3P)   & 3 & $N$P Zero-sum &  \cite{Lockhart19ED} \\
    \hline
    Trade Comm         & 2 & Common-payoff & \cite{sokota2021solving,lanctot2019openspiel}\\
    Tiny Bridge (2P)   & 2 & Common-payoff & \cite{sokota2021solving,lanctot2019openspiel}\\
    \hline
    Battleship         & 2 & General-sum & \cite{Farina19:Correlation}\\
    Goofspiel (2P)     & 2 & General-sum & \cite{Lockhart19ED} \\
    Goofspiel (3P)     & 3 & General-sum & \cite{Lockhart19ED} \\
    Sheriff            & 2 & General-Sum  & \cite{Farina19:Correlation} \\
    \end{tabular}
    \caption{Benchmark games. $N$ is the number of players.}
    \label{tab:benchmark-games}
\end{table}

This section contains descriptions of the benchmark games and a size estimate for Deal-or-no-Deal.

\subsection{Benchmark Games}\label{sec:exp-benchmarks}

In this section, we describe the benchmark games used in this chapter. 
The game list is show in Table~\ref{tab:benchmark-games}.
As they have been used in several previous works, and are openly available, we simply copy the descriptions here citing the sources in the table.

\paragraph{Kuhn Poker}

Kuhn poker is a simplified poker game first proposed by
Harold Kuhn. Each player antes a single
chip, and gets a single private card from a totally-ordered
$(n+1)$-card deck, e.g. {J, Q, K} for the $(n = 2)$ two-player case. 
There is a single betting
round limited to one raise of 1 chip, and two actions:
pass (check/fold) or bet (raise/call). If a player folds,
they lose their commitment (2 if the player made a bet,
otherwise 1). If no player folds, the player with the
higher card wins the pot. The utility
for each player is defined as the number of chips after
playing minus the number of chips before playing.

\paragraph{Leduc Poker}

Leduc poker is significantly larger game with two rounds
and a two suits with $(N+1)$ cards each, e.g. {JS,QS,KS,
JH,QH,KH} in the $(N=2)$ two-player case. Like Kuhn, each player initially antes a
single chip to play and obtains a single private card and
there are three actions: fold, call, raise. There is a fixed
bet amount of 2 chips in the first round and 4 chips in the
second round, and a limit of two raises per round. After
the first round, a single public card is revealed. A pair is
the best hand, otherwise hands are ordered by their high
card (suit is irrelevant). Utilities are defined similarly to
Kuhn poker.

\paragraph{Liar's Dice}

Liar’s Dice(1,1) is dice game where each player gets a single
private die in $\{ 1, \dotsc, 6 \}$, rolled at the beginning of
the game. The players then take turns bidding on the
outcomes of both dice, i.e. with bids of the form $q$-$f$
referring to quantity and face, or calling “Liar”. The bids
represent a claim that there are at least $q$ dice with face
value $f$ among both players. The highest die value, 6,
counts as a wild card matching any value. Calling “Liar”
ends the game, then both players reveal their dice. If the
last bid is not satisfied, then the player who called “Liar”
wins. Otherwise, the other player wins. The winner
receives +1 and loser -1.

\paragraph{Trade Comm}

Trade Comm is a common-payoff game about communication and trading of a hidden item.
It proceeds as follows.
\begin{enumerate}
\item Each player is independently dealt one of $num$ items
with uniform chance.
\item Player 1 makes one of $num$ utterances utterances,
which is observed by player 2.
\item Player 2 makes one of $num$ utterances utterances,
which is observed by player 1.
\item Both players privately request one of the $num$ items $\times$ $num$ items possible trades.
\end{enumerate}
The trade is successful if and only if both player 1 asks to
trade its item for player 2’s item and player 2 asks to trade its
item for player 1’s item. Both players receive a reward of 1
if the trade is successful and 0 otherwise. We use $num$ items = $num$ utterances = 10.

\paragraph{Tiny Bridge}

A very small version of bridge, with 8 cards in total, created by Edward
Lockhart, inspired by a research project at University of Alberta by Michael
Bowling, Kate Davison, and Nathan Sturtevant.

This smaller game has two suits (hearts and spades), each with
four cards (Jack, Queen, King, Ace). Each of the four players gets
two cards each.

The game comprises a bidding phase, in which the players bid for the
right to choose the trump suit (or for there not to be a trump suit), and
perhaps also to bid a 'slam' contract which scores bonus points.

The play phase is not very interesting with only two tricks being played, so it is replaced with a perfect-information result, which is
computed using minimax on a two-player perfect-information game representing
the play phase.

The game comes in two varieties - the full four-player version, and a
simplified two-player version in which one partnership does not make
any bids in the auction phase.

Scoring is as follows, for the declaring partnership:
\begin{itemize}
     \item +10 for making 1H/S/NT (+10 extra if overtrick)
     \item +30 for making 2H/S
     \item +35 for making 2NT
     \item -20 per undertrick
\end{itemize}
Doubling (only in the 4p game) multiplies all scores by 2. Redoubling by a
 further factor of 2.

An abstracted version of the game is supported, where the 28 possible hands
are grouped into 12 buckets, using the following abstractions:
\begin{itemize}
   \item When holding only one card in a suit, we consider J/Q/K equivalent
   \item We consider KQ and KJ in a single suit equivalent
   \item We consider AK and AQ in a single suit equivalent (but not AJ)
\end{itemize}

\paragraph{Battleship}
Sheriff is a general-sum variant of the classic game Battleship. Each player takes turns to secretly place a set of ships S (of varying sizes and value) on separate grids of size $H \times W$. After placement, players take turns firing at their opponent—ships which have been hit at all the tiles they lie on are considered destroyed. The game continues until either one player has lost all of their ships, or each player has completed $r$ shots. At the end of the game, the payoff of each player is computed as the sum of the values of the opponent's ships that were destroyed, minus $\gamma$ times the value of ships which they lost, where $\gamma \ge 1$ is called the loss multiplier of the game.

In this work we use $\gamma = 2, H = 2, W = 2,$ and $r = 3$. The OpenSpiel game string is:
\begin{verbatim}
  battleship(board_width=2,board_height=2,
  ship_sizes=[1;2], ship_values=[1.0;1.0],
  num_shots=3,loss_multiplier=2.0)
\end{verbatim}

\paragraph{Goofspiel}

Goofspiel or the Game of Pure Strategy, is a bidding card
game where players are trying to obtain the most points.
shuffled and set face-down. Each turn, the top point card
is revealed, and players simultaneously play a bid card;
the point card is given to the highest bidder or discarded if
the bids are equal. In this implementation, we use a fixed
deck of decreasing points. In this work, we an imperfect
information variant where players are
only told whether they have won or lost the bid, but not
what the other player played.
The utility is defined as the total value of the point
cards achieved.

In this work we use $K = 4$ card decks. So, e.g. the OpenSpiel game
string for the three-player game is:

\begin{verbatim}
  goofspiel(imp_info=True,returns_type=total_points,
  players=3,num_cards=4)
\end{verbatim}

\paragraph{Sheriff}
Sheriff is a simplified version of the Sheriff of Nottingham board game. The game models the interaction of two players: the Smuggler—who is trying to smuggle illegal items in their cargo-- and the Sheriff-- who is trying to stop the Smuggler. At the beginning of the game, the Smuggler secretly loads his cargo with $n \in \{0, . . . , n_{max}\}$ illegal items. At the end of the game, the Sheriff decides whether to inspect the cargo. If the Sheriff chooses to inspect the cargo and finds illegal goods, the Smuggler must pay a fine worth $p \cdot n$ to the Sheriff. On the other hand, the Sheriff has to compensate the Smuggler with a utility if no illegal goods are found. Finally, if the Sheriff decides not to inspect the cargo, the Smuggler’s utility is $v \cdot n$ whereas the Sheriff’s utility is 0. The game is made interesting by two additional elements (which are also present in the board game): bribery and bargaining. After the Smuggler has loaded the cargo and before the Sheriff chooses whether or not to inspect, they engage in r rounds of bargaining. At each round $i = 1, . . . , r$, the Smuggler tries to tempt the Sheriff into not inspecting the cargo by proposing a bribe $b_i \in \{0, . . . b_{max} \}$, and the Sheriff responds whether or not they would accept the proposed bribe. Only the proposal and response from round r will be executed and have an impact on the final payoffs—that is, all but the $r^{th}$ round of bargaining are non-consequential and their purpose is for the two players to settle on a suitable bribe amount. If the Sheriff accepts bribe $b_r$, then the Smuggler gets $p\cdot n - b_r$, while the Sheriff gets $b_r$.

In this work we use values of $n_{max} = 10, b_{max} = 2, p = 1, v = 5$, and $r = 2$.
The OpenSpiel game string is:

\begin{verbatim}
  sheriff(item_penalty=1.0,item_value=5.0,
  max_bribe=2,max_items=2,num_rounds=2,
  sheriff_penalty=1.0)
\end{verbatim}

\subsection{Approximate Size of Deal or No Deal}
\label{sec:app-games-dond}

To estimate the size of Deal or No Deal described in Section~\ref{sec:dond}, we first verified that there are 142 unique preference vectors per player. Then, we generated 10,000 simulations of trajectories using uniform random policy computing the average branching factor (number of legal actions per player at each state) as $b \approx 23.5$. 

Since there are 142 different information states for player 1's first decision, about $142 b b$ player 1's second decision, etc. leading to $142 ( 1 + b^2 + b^4 + b^8) \approx 13.2 \times 10^{12}$ information states. Similarly, player 2 has roughly $142 (1 + b^1 + b^3 + b^5 + b^7) = 5.63 \times 10^{11}$ information states.

\section{Hyper-parameters and Algorithm Settings}
\label{sec:app-hypers}

\begin{table*}[t]
    \centering
    \begin{adjustbox}{max width=\textwidth}
    \begin{tabular}{l|l|l}
    {\bf Hyperparameter Name} & {\bf Values in DoND} & {\bf Values in CT (PSRO w/ DQN BR)}\\
    \hline
    AlphaZero training episodes $num\_eps$  & $10^4$ & $10^4$  \\
    Network input representation & an infoset vector $\in\mathbb{R}^{309}$ implemented in~\cite{lanctot2019openspiel} &
    an infoset vector $\in\mathbb{R}^{463}$\\
    Network size for policy \& value nets& [256, 256] for the shared torso & [1024, 512, 256] \\
    Network optimizer & SGD & SGD \\
    UCT constant $c_{uct}$ & 20 for IR and IE, 40 for SW, 100 for NP &\\
    Returned policy type & greedy towards $\bm{v}$ & argmax over learned q-net \\
    \# simulations in search & 300 &\\
    Learning rate for policy \& value net & 2e-3 for IR and IE, 1e-3 for SW, 5e-4 for NP & 1e-3\\
    Delayed \# episodes for replacing $\bm{v},\bm{p},\bm{g}$ & 200 & 200\\
    Replay buffer size $\lvert D\rvert$ & $2^{16}$ & \\
    Batch size & 64 & 128\\
    PSRO empirical game entries \# simulation & 200 & 2000\\
    L2 coefficient in evaluator $c_1$ & 1e-3 &\\
    Network size for generator net & [300, 100] MLP &\\
    Learning rate for generator net & 1e-3 &\\
    L2 coefficient in generator net $c_2$ & 1e-3 &\\
    PSRO minimum pure-strategy mass lower bound &0.005 & 0.005
    \end{tabular}
    \end{adjustbox}
    \caption{Hyper-parameters.}
    \label{tab:hyp-dod}
\end{table*}

We provide detailed description of our algorithm in this section.
Our implementation differs from the original ABR~\cite{Timbers20ABR} and AlphaZero~\cite{silver2018general} in a few places.
First, instead of using the whole trajectory as a unit of data for training, we use per-state pair data.
We maintain separate data buffers for policy net, value net, and generate net, and collect corresponding $(s_i, r)$, $(s_i, p)$, $(s_i, h)$ to them respectively.
On each gradient step we will sample a batch from each of these data buffer respectively.
The policy net and value net shared a torso part, while we keep a separate generative net (detailed in Section~\ref{sec:gen_model}).
The policy net and value net are trained by minimizing loss $(r-v)^2-\pi^{*T}\log p+c_1\lVert\theta\rVert_2^2$, where (1) $\pi^*$ are the policy targets output by the MCTS search, (2) $p$ are the output by the policy net, (3) $v$ are the output by the value net, (4) $r$ are the outcome value for the trajectories (5) $\theta$ are the neural parameters.
During selection phase the algorithm select the child that maximize the following PUCT~\cite{silver2018general} term: 
$\texttt{MaxPUCT}(s, \bm{p})={\arg\max}_{a\in\cA(s)}\frac{s.child(a).value}{s.child(a).visits}+c_{uct_c}\cdot\bm{p}(s)_a\cdot\frac{\sqrt{s.total\_visits}}{s.child(a).visits+1}$.
Other hyperparameters are listed in Table~\ref{tab:hyp-dod}.

\subsection{Generative Models}\label{sec:gen_model}

\paragraph{Deal or No Deal}
In DoND given an information state, the thing we only need to learn is the opponent's hidden utilities.
And since the utilities are integers from 0 to 10, we design our generative model as a supervised classification task.
Specifically, we use the 309-dimensional state $s_i$ as input and output three heads.
Each head corresponds to one item of the game.
each head consists of 11 logits corresponding to each of 11 different utility values.
Then each gradient step we sample a batch of $(s_i, h)$ from the data buffer.
And then do one step of gradient descent to minimize the cross-entropy loss $-h^T\log \bm{g}(s)+c_2\lVert\theta'\rVert^2_2$, where here we overload $h$ to represent a one-hot encoding of the actual opponent utility vectors.
And each time we need to generate a new state at information state $s_i$, we just feed forward $s_i$ and sample the utilities according to the output logits.
But since we have a constraints $\bv_1 \cdot \bp = \bv_2 \cdot \bp = 10$, the sampled utilities may not always be feasible under this constraint.
Then we will do an additional L2 projection on to all feasible utility vector space and then get the final results.

\section{Extracting a Final Agent at Test Time}
\label{sec:final-policy}

How can a single decision-making agent be extracted from $\itPi$?
The naive method samples $\pi_i \sim \sigma_i$ at the start of each episode, then follows $\sigma_i$ for the episode.
The {\bf self-posterior} method {\it resamples} a new oracle $\pi_i \in \itPi^T_i$ at information states each time an action or decision is requested at $s$. At information state $s$, the agent samples an oracle $\pi_i$ from the posterior over its own oracles: $\pi_i \sim \Pr(\pi_i~|~s, \sigma_i)$, using reach probabilities of its own actions along the information states leading to $s$ where $i$ acted, and then follows $\pi_i$. The self-posterior method is based on the equivalent behavior strategy distribution that Kuhn's theorem~\cite{Kuhn53} derives from the mixed strategy distribution over policies.
The {\bf aggregate policy} method takes this a step further and computes the average (expected self-posterior) policy played at each information state, $\bar{\pi}^T_i(s)$, exactly (rather than via samples); it is described in detail in~\cite[Section E.3]{lanctot2017unified}.
The {\bf rational planning} method enables decision-time search instantiated with the final oracles: it assumes the opponents at test time exactly match the $\sigma_{-i}$ of training time, and keeps updating the posterior $\Pr(h, \pi_{-i}~|~s, \sigma_{-i})$ during an online play.
Whenever it needs to take an action at state $s$ at test time, it employs Algorithm~\ref{alg:is-mcts-br} to search against this posterior.
This method combines online Bayesian opponent modeling and search-based best response, which resembles the rational learning process~\cite{kalai1993rational}. 

\section{Additional Results}
\label{sec:app-results}

\subsection{Overview of Approximate Nash equilibrium solving using PSRO}
\label{app:exp-benchmarks}

\begin{figure*}
\begin{tabular}{ccc}
\includegraphics[width=0.3\textwidth]{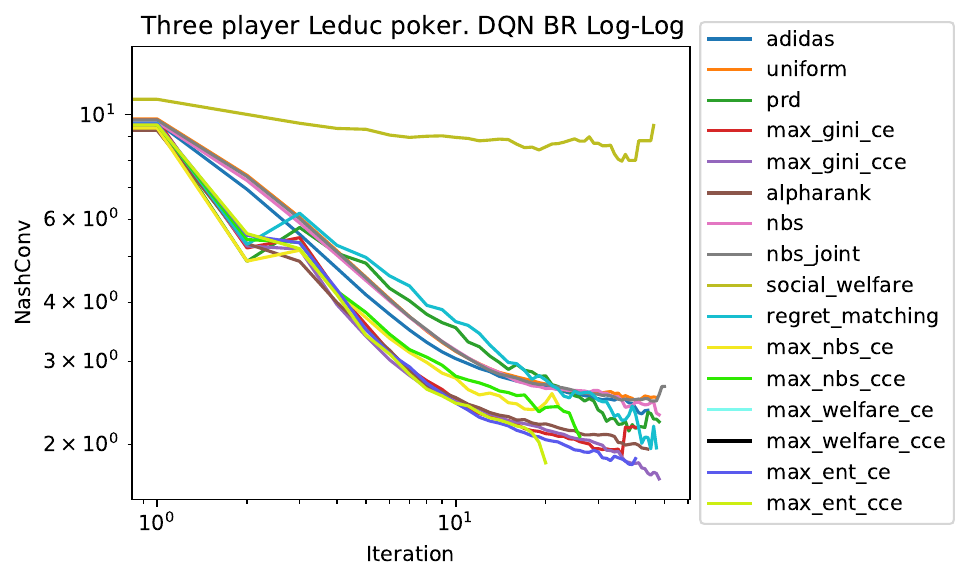} &
\includegraphics[width=0.3\textwidth]{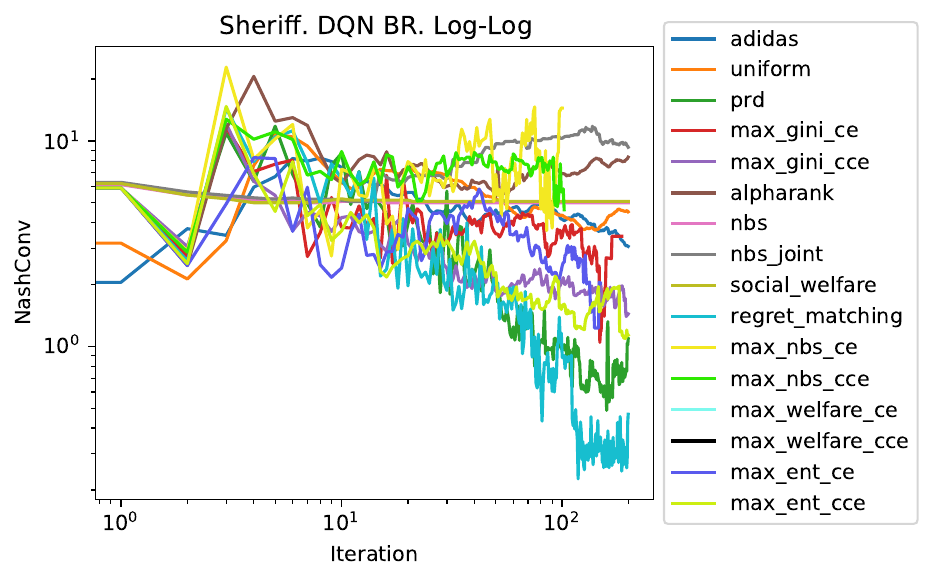} &
\includegraphics[width=0.3\textwidth]{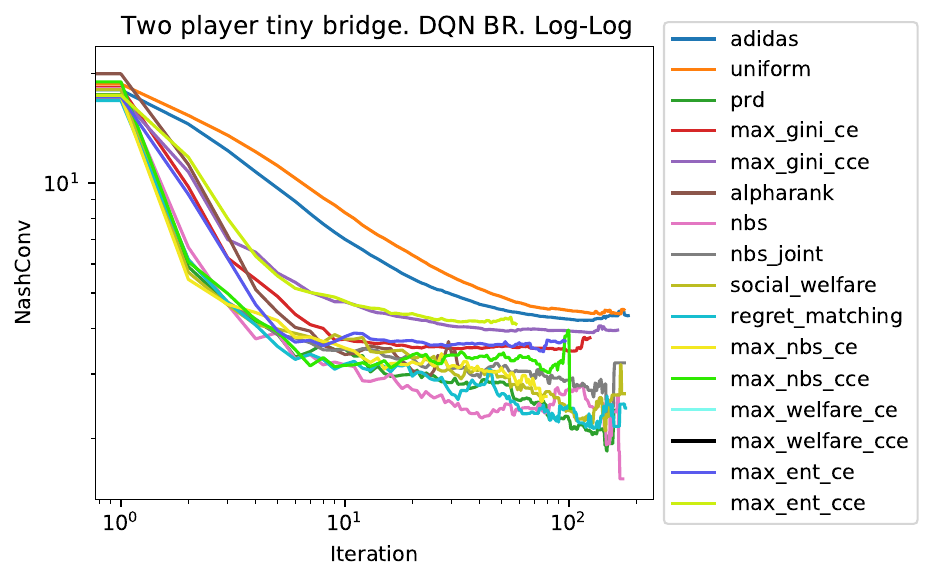} \\
(a) & (b) & (c) \\
\includegraphics[width=0.3\textwidth]{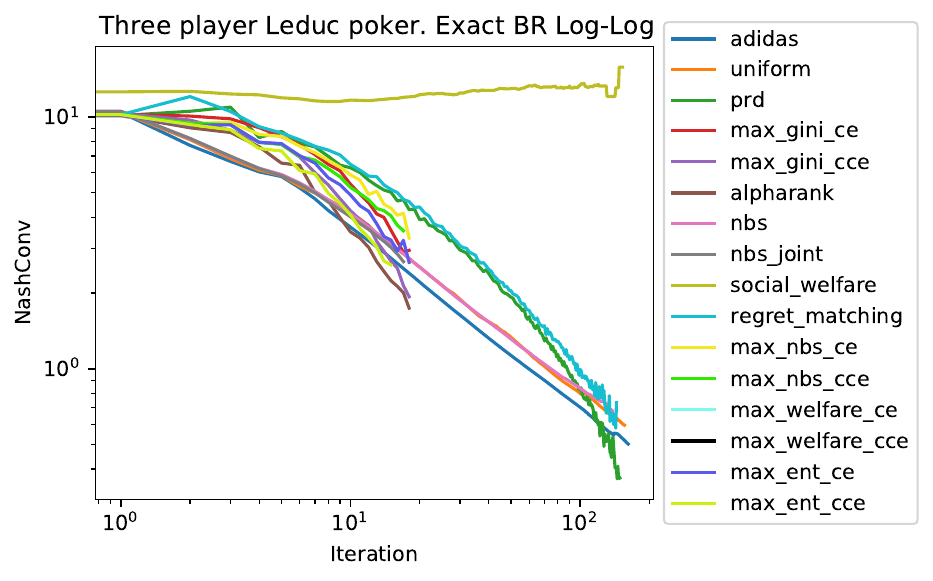} &
\includegraphics[width=0.3\textwidth]{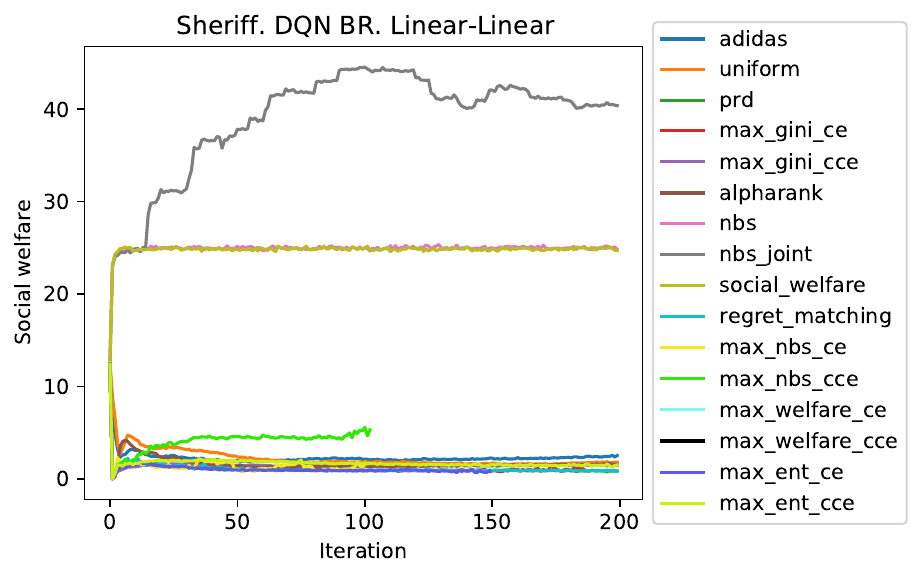} &
\includegraphics[width=0.3\textwidth]{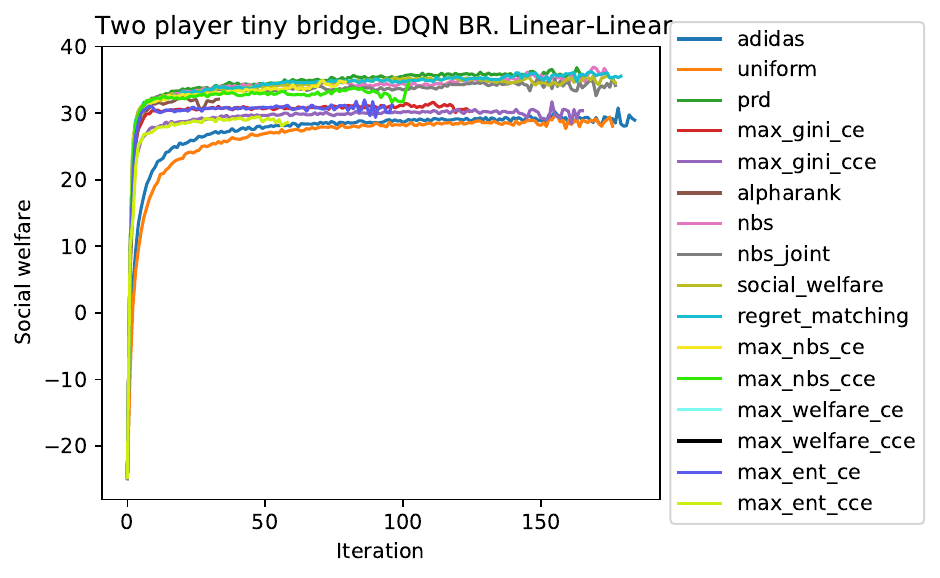} \\
(d) & (e) & (f) \\
\end{tabular}
\caption{\NashConv and social welfare along PSRO iterations across game types. \NashConv in 3P Leduc poker using (a) DQN oracles vs. (d) exact oracles. \NashConv (b) and (e) social welfare in Sheriff. \NashConv (c) and social welfare (f) in 2P Tiny Bridge.}
\label{fig:benchmark-results}
\end{figure*}

Here we evaluate the capacity of PSRO with different meta-strategy solvers
\footnote{For simplicity we use the same MSS for both guiding the BR step (strategy exploration) and evaluating $\NashConv$ at each iteration.
Firmer conclusions about relative effectiveness for strategy exploration would require defining a fixed solver for evaluation purposes \cite{wang2021evaluating}.
}
to act as a general Nash equilibrium solver for sequential $N$-player games. 
For these initial experiments, we run PSRO on its own without search. Since these are benchmark games, they are small enough to compute exact exploitability, or the Nash gap (called \NashConv by \cite{lanctot2017unified,lanctot2019openspiel}), and search is not necessary. We run PSRO using 16 different meta-strategy solvers across 12 different benchmark games (three 2P zero-sum games, three $N$-player zero-sum games, two common-payoff games, and four general-sum games): instances of Kuhn and Leduc poker, Liar's dice, Trade Comm, Tiny Bridge, Battleship, Goofspiel, and Sheriff.
These games have recurred throughout previous work, so we describe them in App.~\ref{sec:app-games}; they are all partially-observable and span a range of different types of games.

A representative sample of the results is shown in Figure~\ref{fig:benchmark-results}, whereas all the results are shown below.
$\NashConv(\bar{\pi})= \sum_{i \in \playerset} u_i(b_i(\bar{\pi}_{-i}), \bar{\pi}_{-i}) - u_i(\bar{\pi})$, where $b_i$ is an exact best response, and $\bar{\pi}$ is the exact average policy using the aggregator method described in Section~\ref{sec:final-policy}.
A value of zero means $\bar{\pi}$ is Nash equilibrium, and values greater than zero correspond to an gap from Nash equilibrium. 
Social welfare is defined as $\textsc{SW}(\bar{\pi}) = \sum_i u_i(\bar{\pi})$. 

Most of the meta-strategy solvers seem to reduce \NashConv faster than a completely uninformed meta-strategy solver (uniform) corresponding to fictitious play, validating the EGTA approach taken in PSRO.
In three-player (3P) Leduc, the NashConv achieved for the exact best response is an order of magnitude smaller than when using DQN.
ADIDAS, regret-matching, and PRD are a good default choice of MSS in competitive games.
The correlated equilibrium meta-strategy solvers are surprisingly good at reducing \NashConv in the competitive setting, but can become unstable and even fail when the empirical game becomes large in the exact case.
In the general-sum game Sheriff, the reduction of \NashConv is noisy, with several of the meta-strategy solvers having erratic graphs, with RM and PRD performing best.
Also in Sheriff, the Nash bargaining (and social welfare) meta-strategy solvers achieve significantly higher social welfare than most meta-strategy solvers.
Similarly in the cooperative game of Tiny Bridge, the MSSs that reach closest to optimal are NBS (independent and joint), social welfare, RM, PRD, and Max-NBS-(C)CE. 
Many of these meta-strategy solvers are not {\it guaranteed} to compute an approximate Nash equilibrium (even in the empirical game), but any limiting logit equilibrium (QRE) solver can get arbitrarily close. ADIDAS does not require the storage of the meta-tensor $U^t$, only samples from it. So, as the number of iterations grow ADIDAS might be one of the safest and most memory-efficient choice for reducing \NashConv long-term.

The full analysis of empirical convergence to approximate Nash equilibrium and social welfare are shown for various settings:
\begin{itemize}
    \item Two-player Zero-Sum Games: Figure~\ref{tab:app-benchmark-2pzs}.
    \item $N$-player Zero-Sum Games: Figure~\ref{tab:app-benchmark-npzs}.
    \item Common Payoff Games: Figure~\ref{tab:app-benchmark-cpg}.
    \item General-Sum Games: Figure~\ref{tab:app-benchmark-gsg}.    
\end{itemize}
 
\begin{figure*}[t]
\begin{tabular}{ccc}
\includegraphics[width=0.3\textwidth]{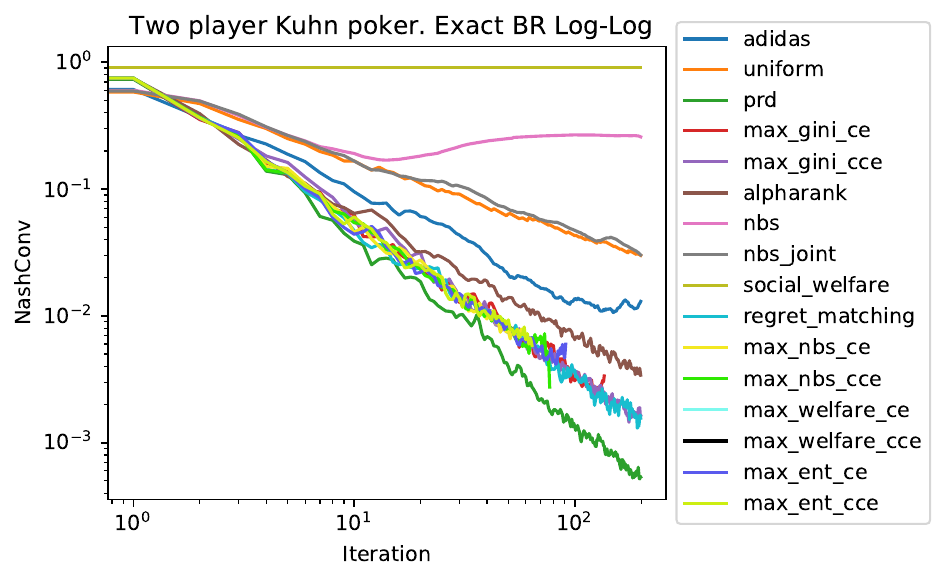} &
\includegraphics[width=0.3\textwidth]{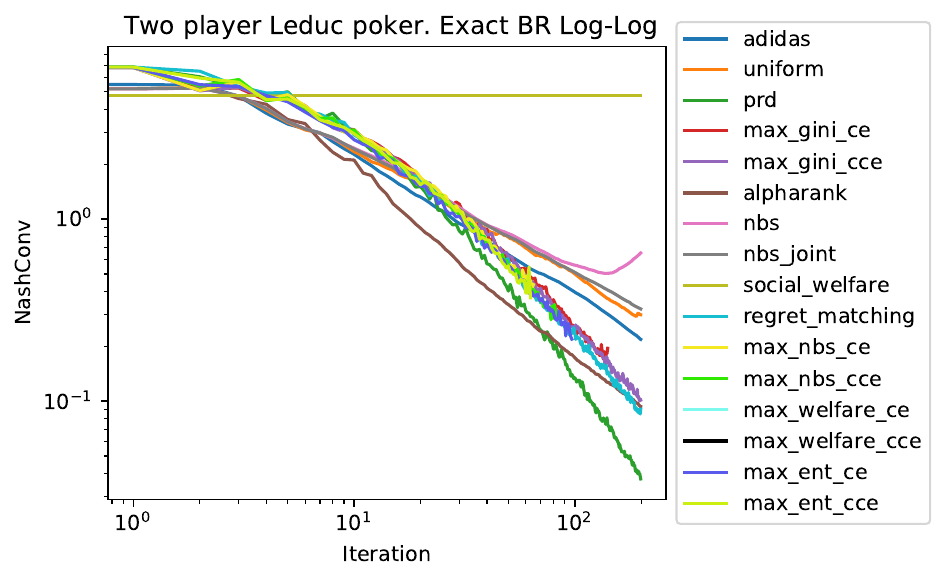} &
\includegraphics[width=0.3\textwidth]{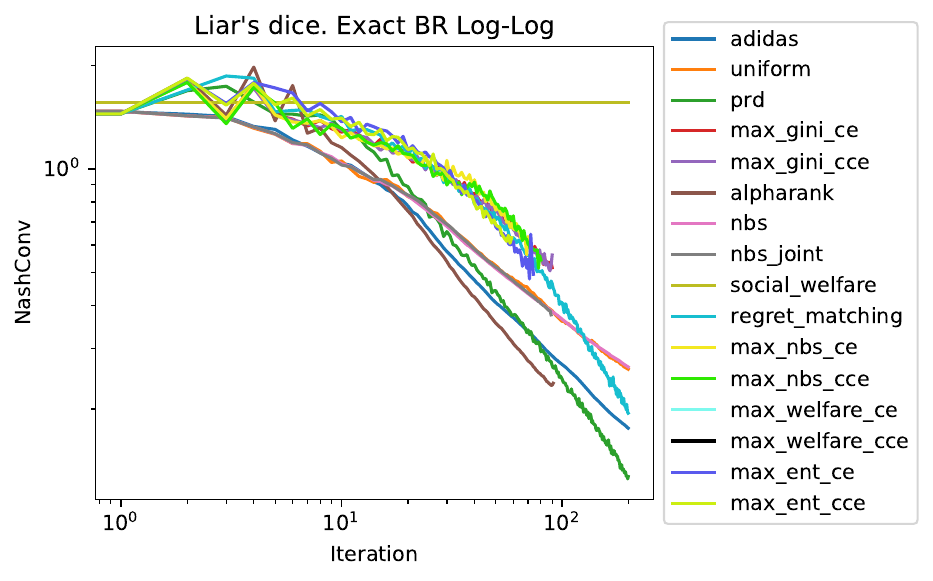} \\
\includegraphics[width=0.3\textwidth]{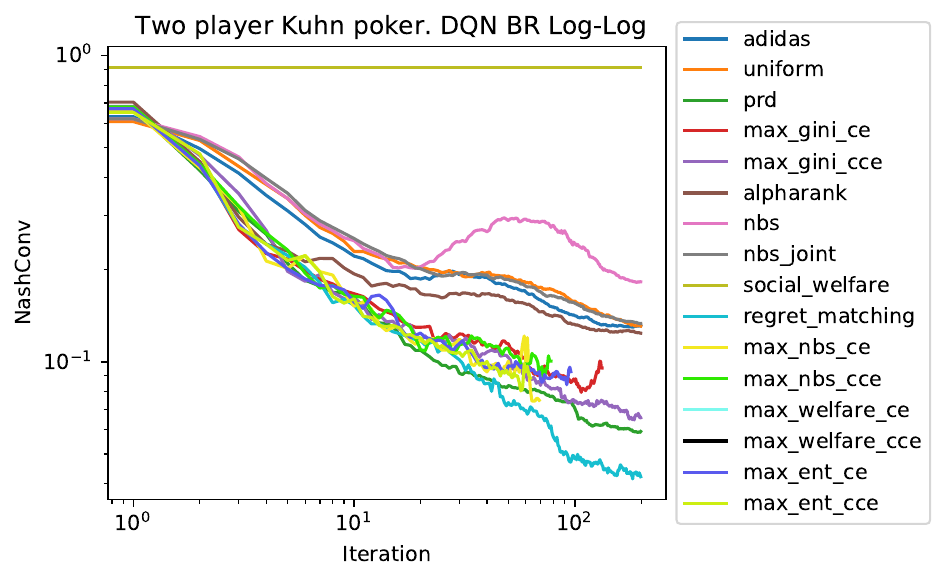} &
\includegraphics[width=0.3\textwidth]{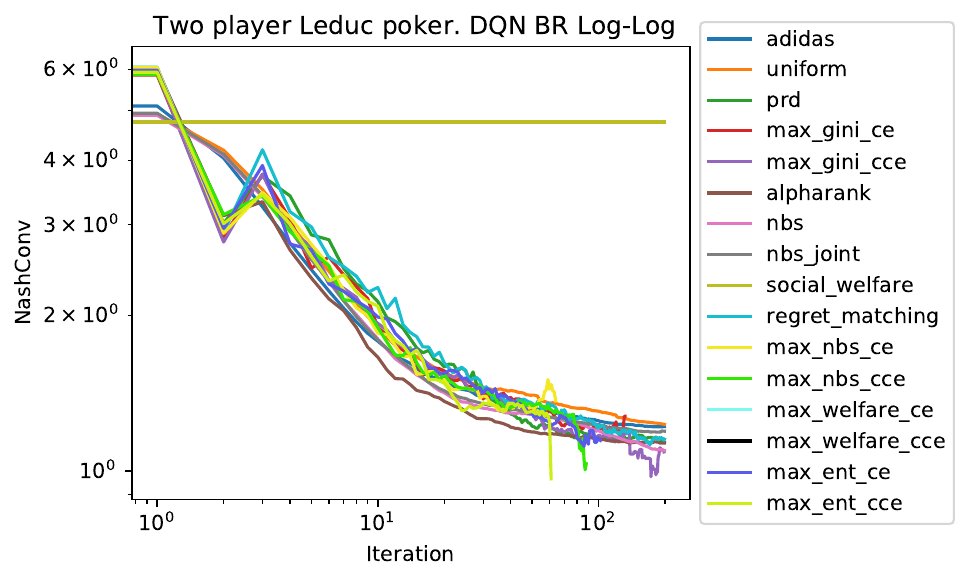} &
\includegraphics[width=0.3\textwidth]{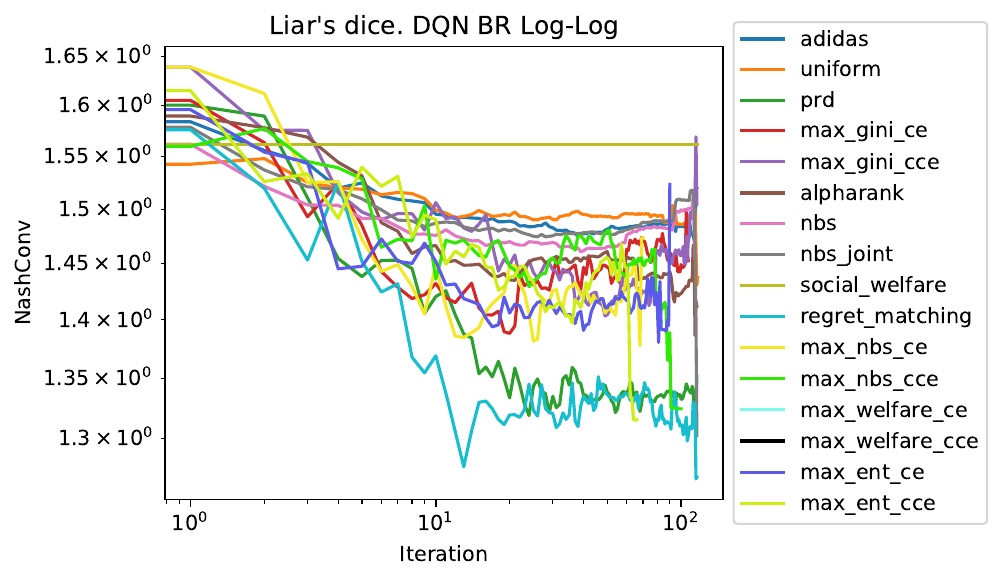} \\
\end{tabular}
\caption{Empirical Convergence to Nash Equilibria using Exact vs. DQN Best Response versus in Two-Player Zero-Sum Benchmark Games.}
\label{tab:app-benchmark-2pzs}
\end{figure*}

\begin{figure*}[t]
\begin{tabular}{ccc}
\includegraphics[width=0.3\textwidth]{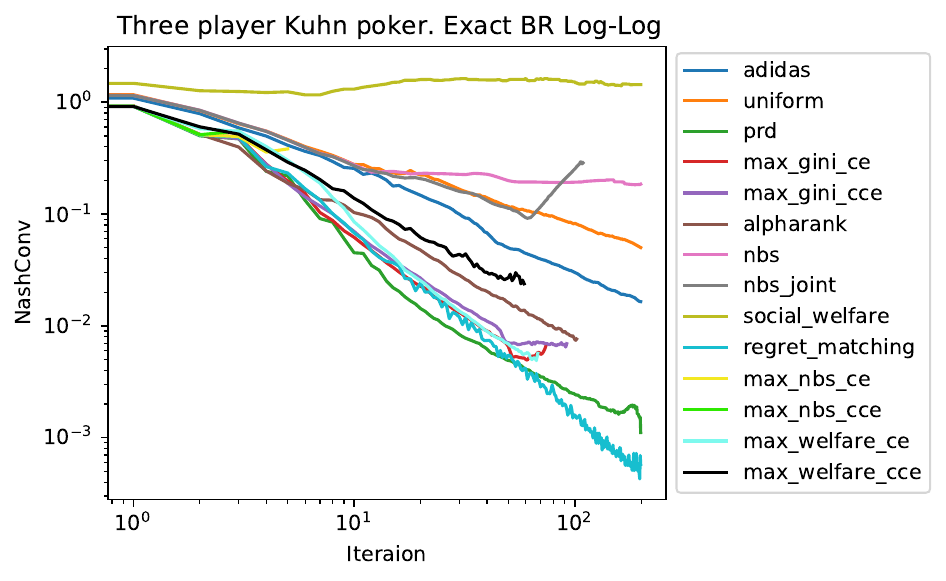} &
\includegraphics[width=0.3\textwidth]{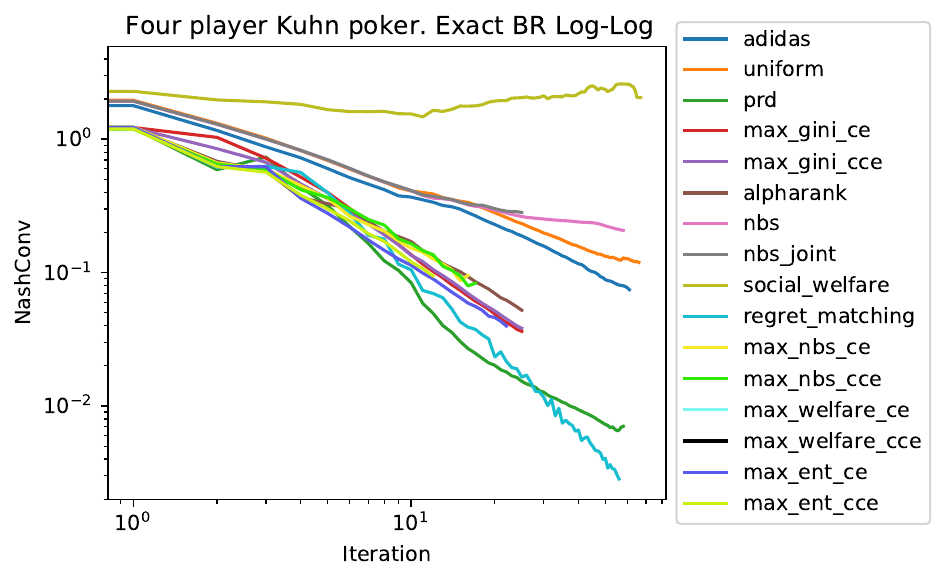} &
\includegraphics[width=0.3\textwidth]{figures/psro/three_player_leduc_poker_exact_log_log_nashconv.pdf} \\
\includegraphics[width=0.3\textwidth]{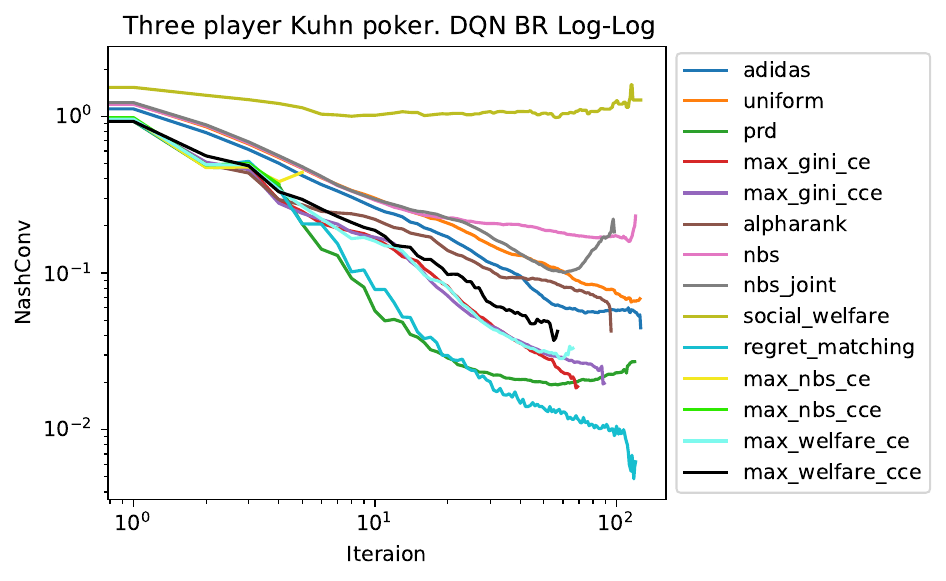} &
\includegraphics[width=0.3\textwidth]{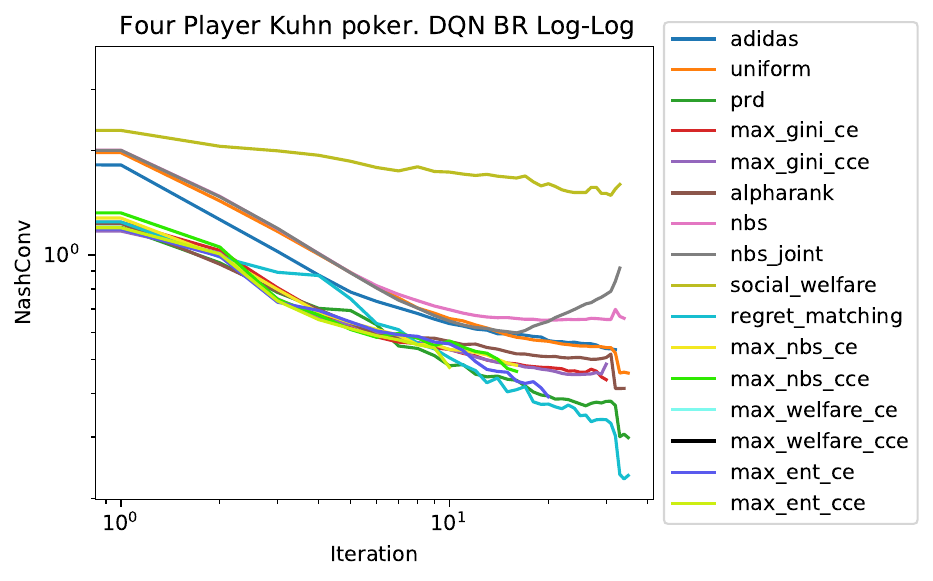} &
\includegraphics[width=0.3\textwidth]{figures/psro/three_player_leduc_poker_dqn_log_log_nashconv.pdf} \\
\end{tabular}
\caption{Empirical Convergence to Nash Equilibria using Exact vs. DQN Best Response in $N$-Player Zero-Sum Benchmark Games.}
\label{tab:app-benchmark-npzs}
\end{figure*}

\begin{figure*}[t]
\begin{tabular}{cc}
\includegraphics[width=0.4\textwidth]{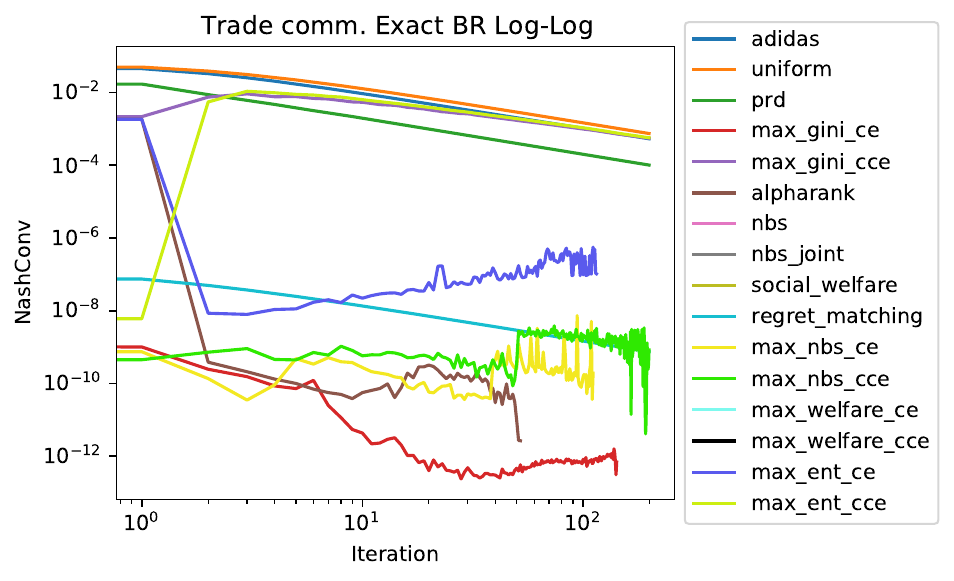} &
\includegraphics[width=0.4\textwidth]{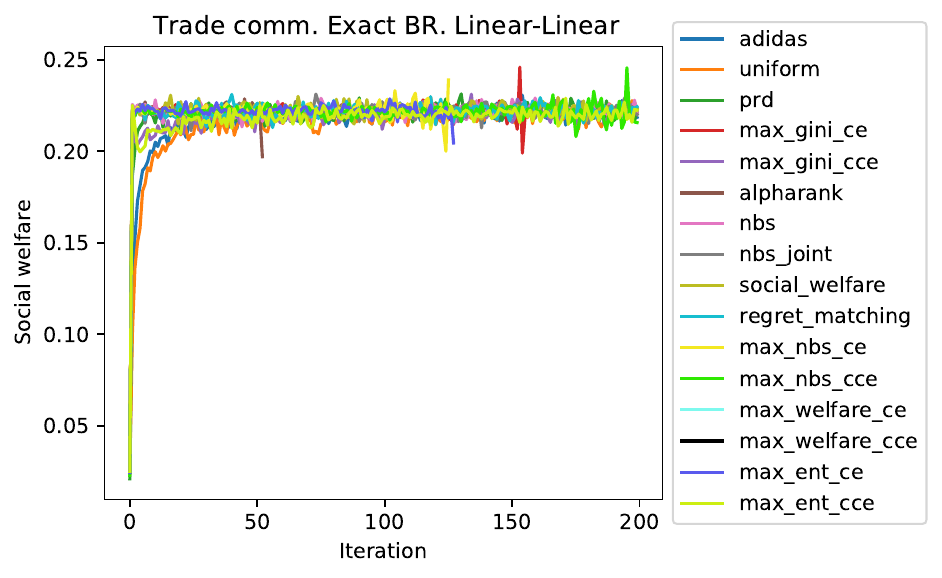}\\
\includegraphics[width=0.4\textwidth]{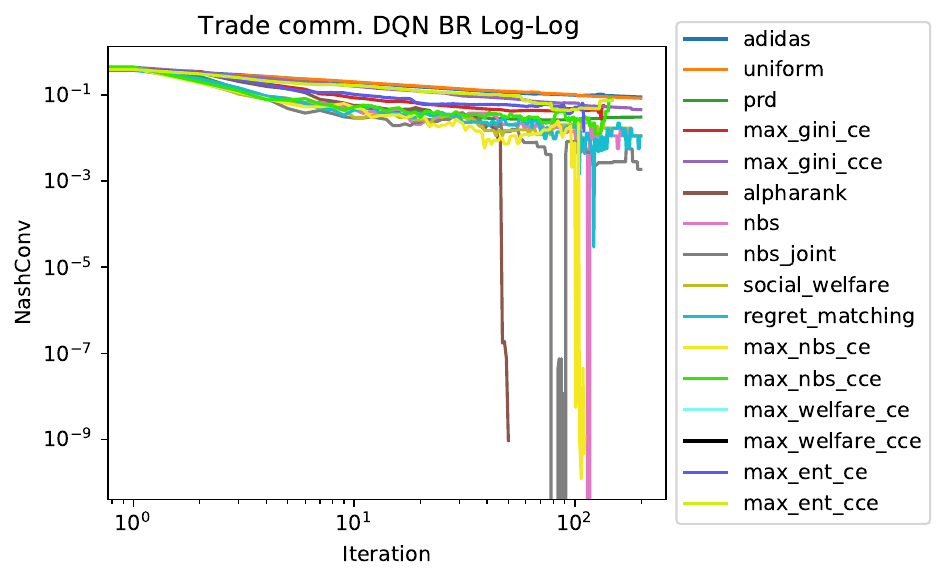} &
\includegraphics[width=0.4\textwidth]{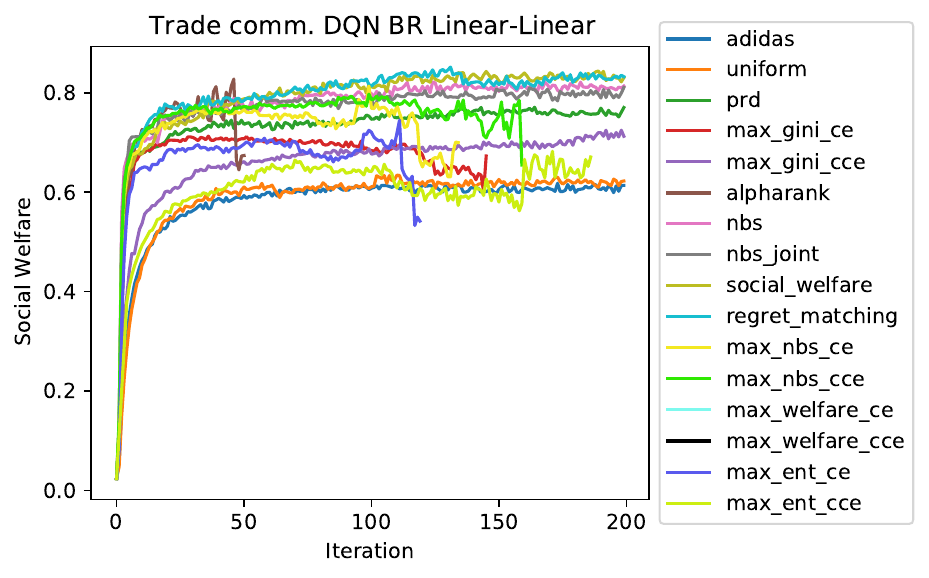} \\
\vspace{1cm} & \\
\includegraphics[width=0.4\textwidth]{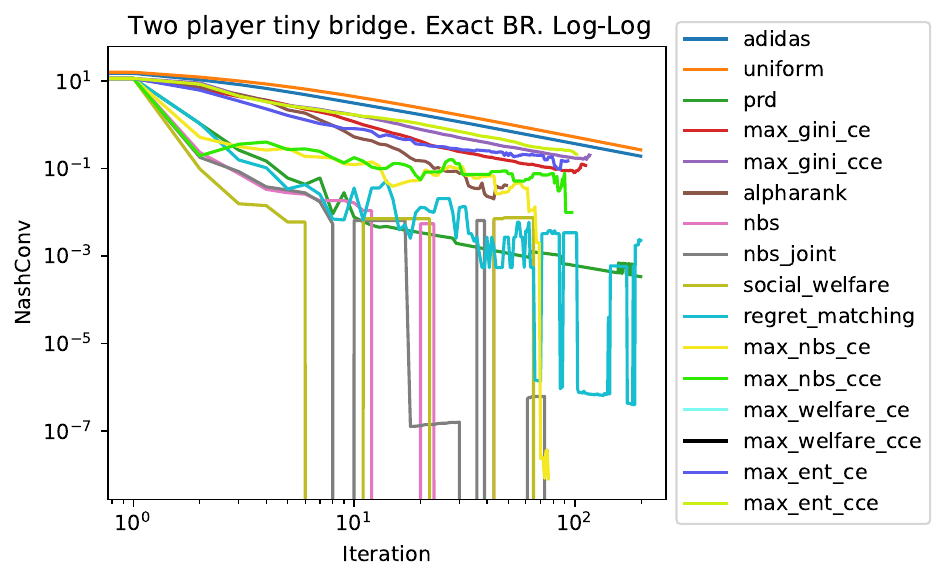} &
\includegraphics[width=0.4\textwidth]{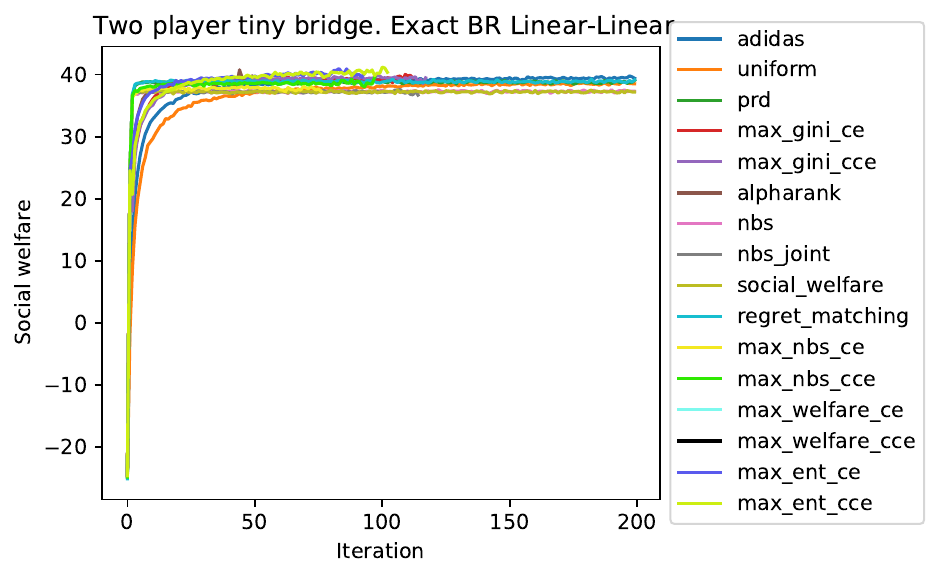}\\
\includegraphics[width=0.4\textwidth]{figures/psro/two_player_tiny_bridge_dqn_log_log_nashconv.pdf} &
\includegraphics[width=0.4\textwidth]{figures/psro/two_player_tiny_bridge_dqn_linear_linear_social_welfare.pdf} \\
\end{tabular}
\caption{Empirical Convergence to Nash Equilibria and Social Welfare in Common Payoff Benchmark Games.}
\label{tab:app-benchmark-cpg}
\end{figure*}

\begin{figure*}[t]
\begin{tabular}{cccc}
\includegraphics[width=0.22\textwidth]{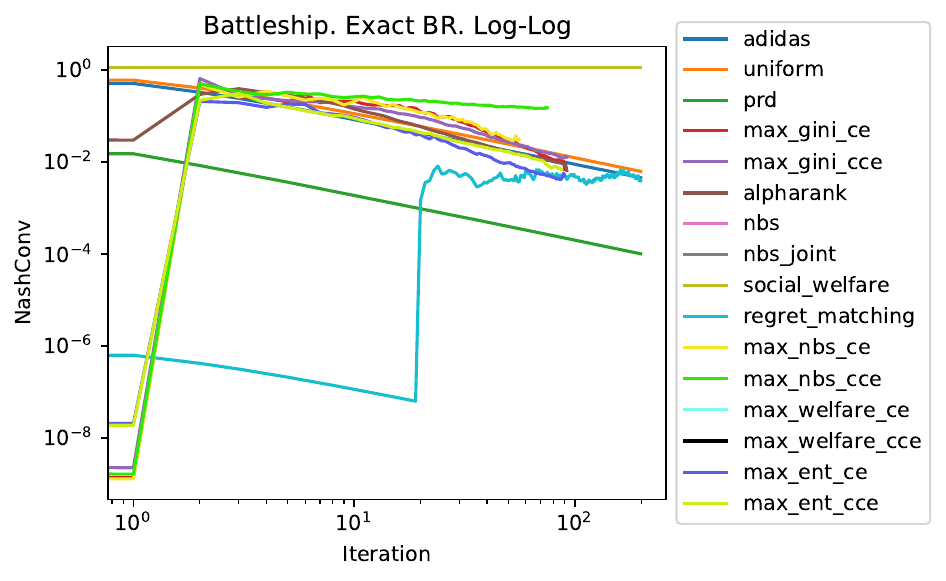} &  
\includegraphics[width=0.22\textwidth]{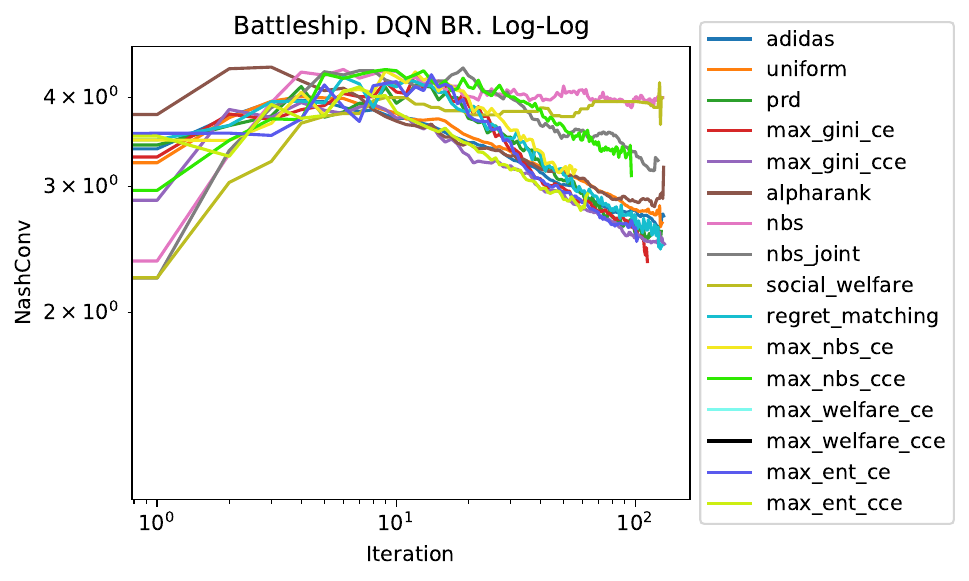} &  
\includegraphics[width=0.22\textwidth]{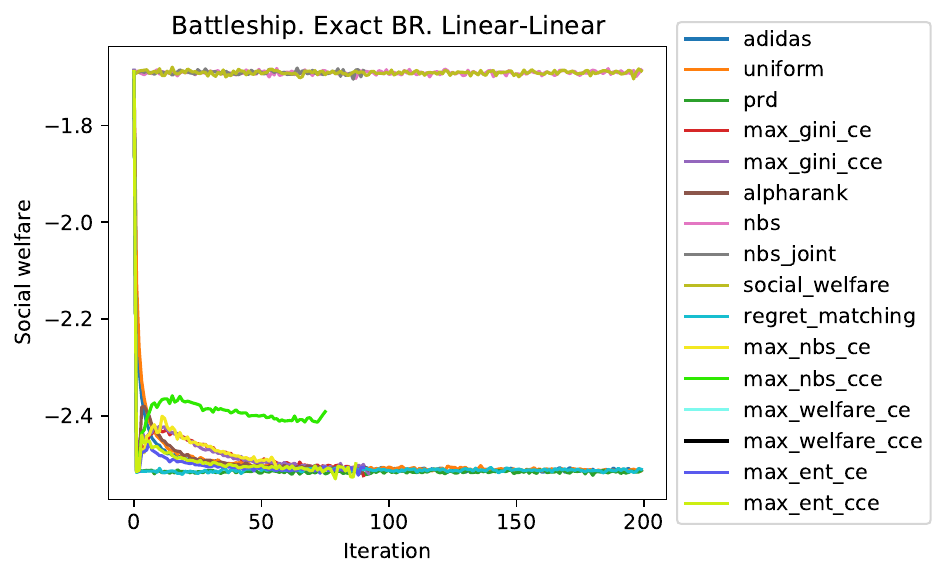} &  
\includegraphics[width=0.22\textwidth]{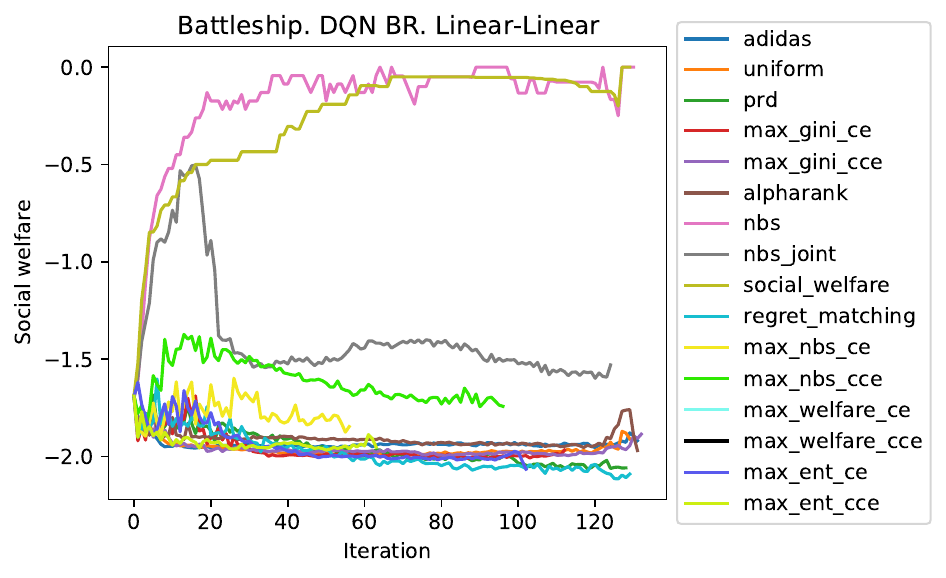}\\
\includegraphics[width=0.22\textwidth]{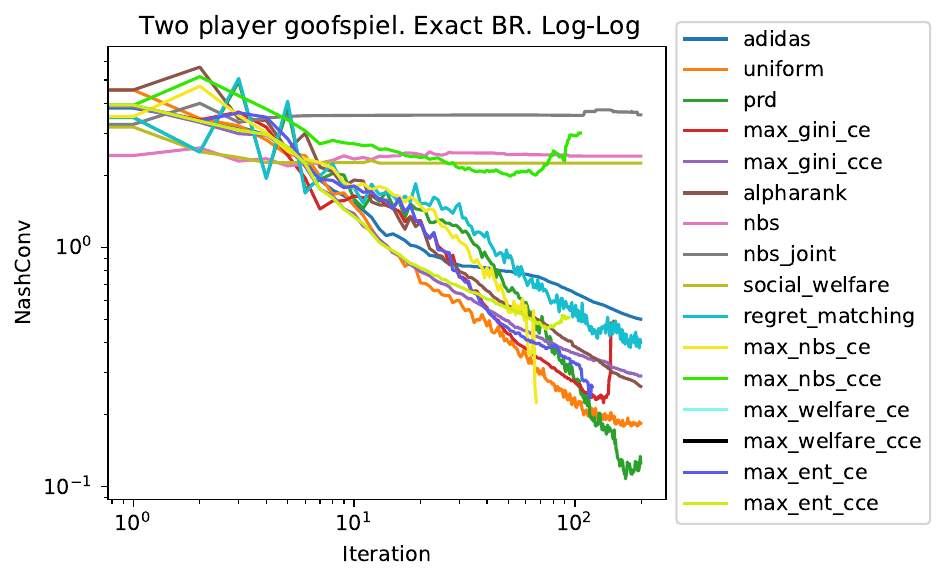} &  
\includegraphics[width=0.22\textwidth]{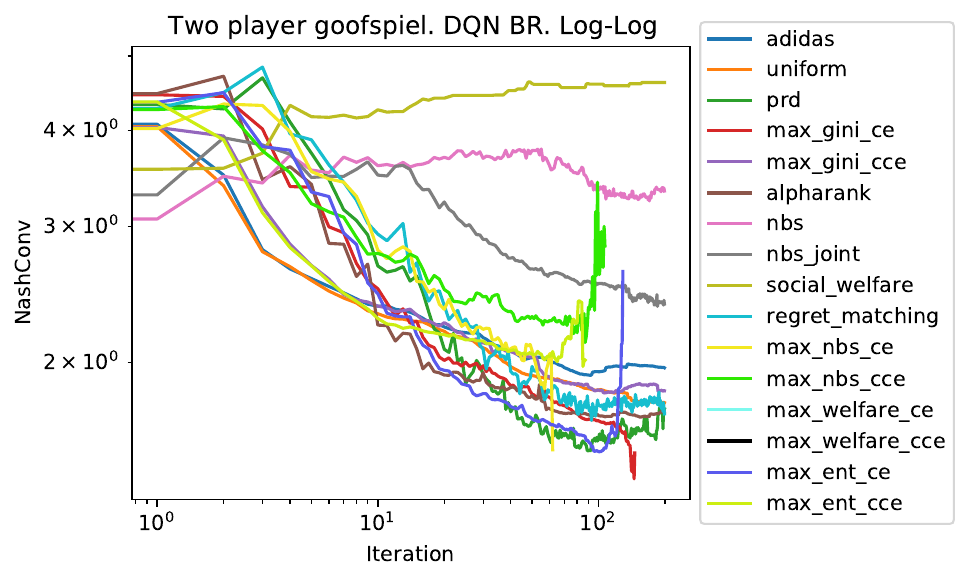} &  
\includegraphics[width=0.22\textwidth]{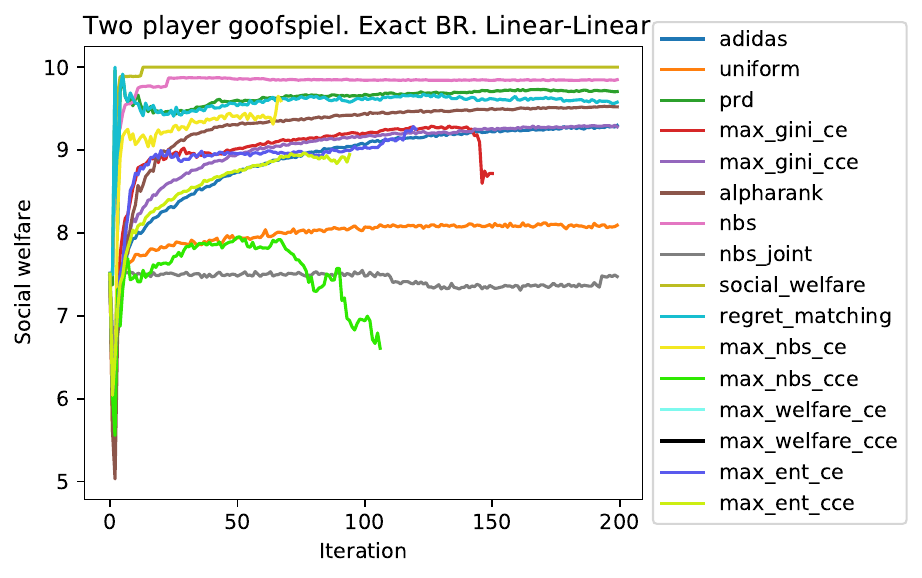} &  
\includegraphics[width=0.22\textwidth]{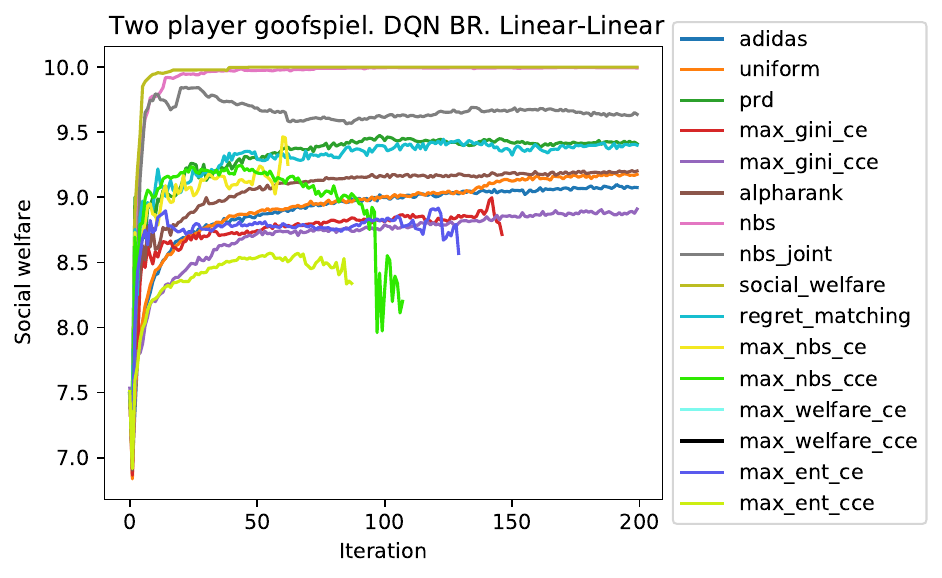}\\
\includegraphics[width=0.22\textwidth]{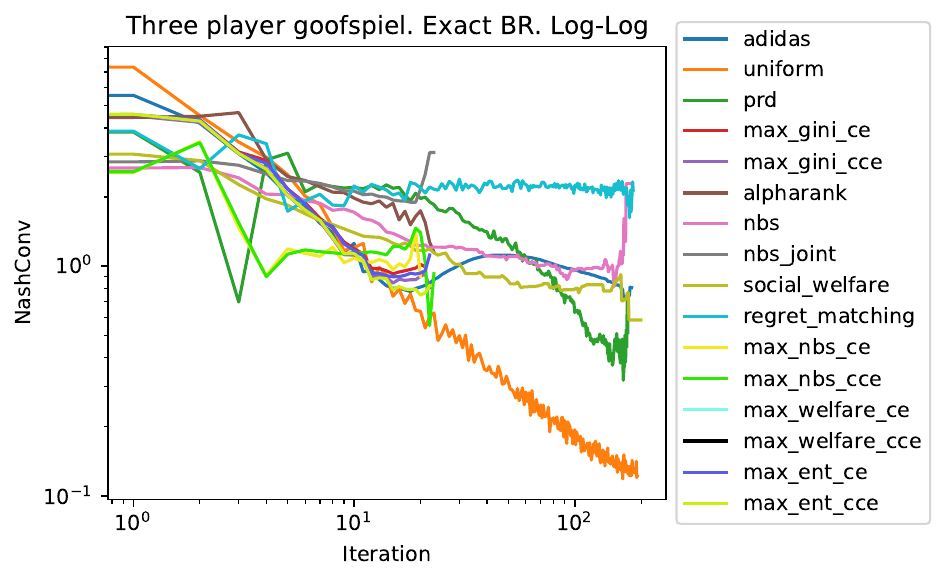} &  
\includegraphics[width=0.22\textwidth]{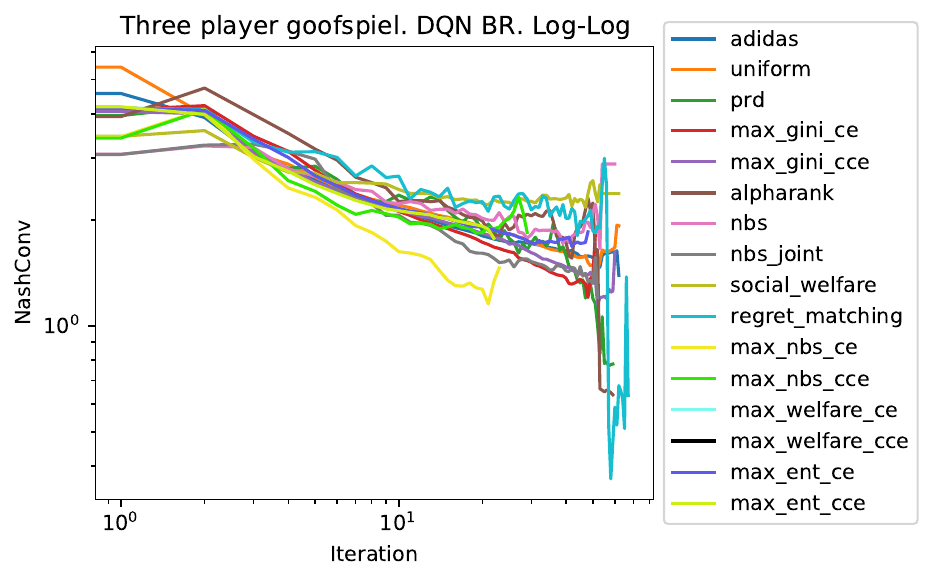} &  
\includegraphics[width=0.22\textwidth]{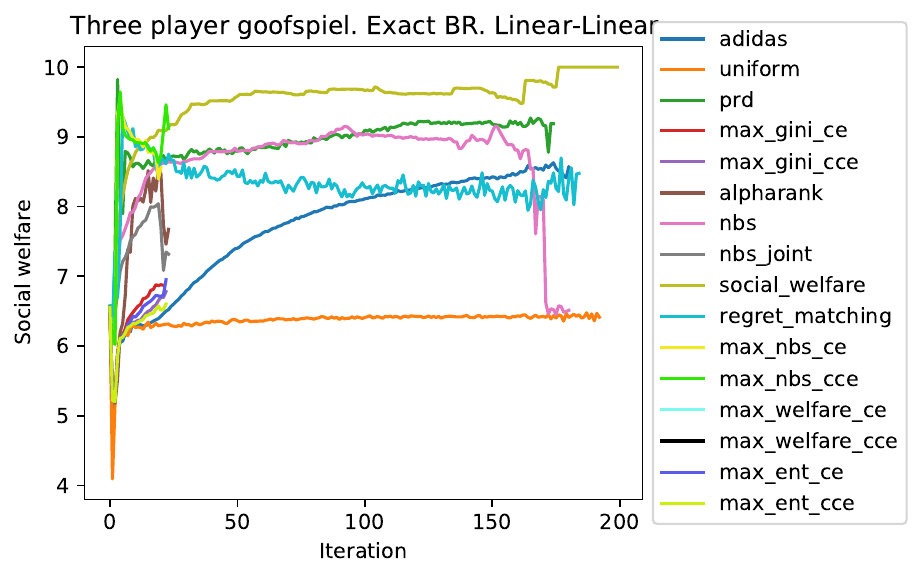} &  
\includegraphics[width=0.22\textwidth]{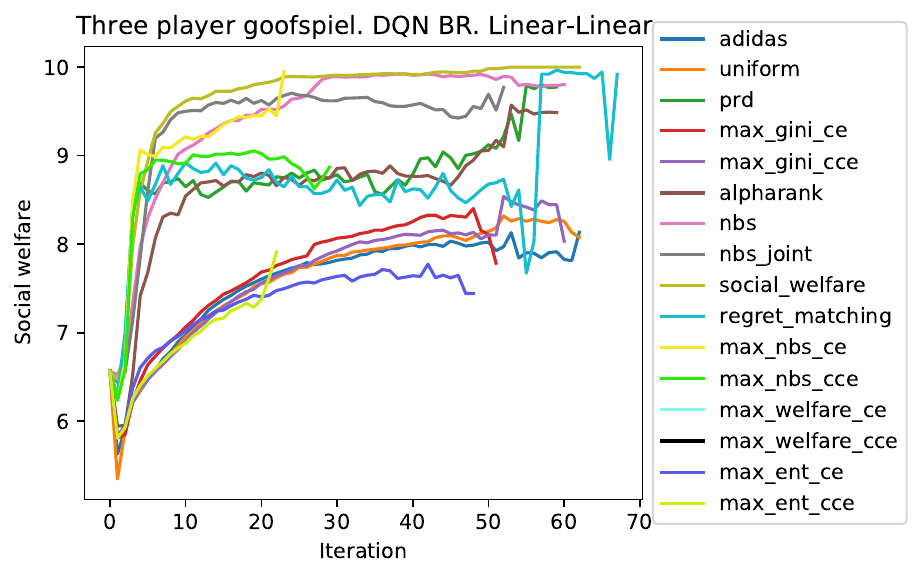}\\
\includegraphics[width=0.22\textwidth]{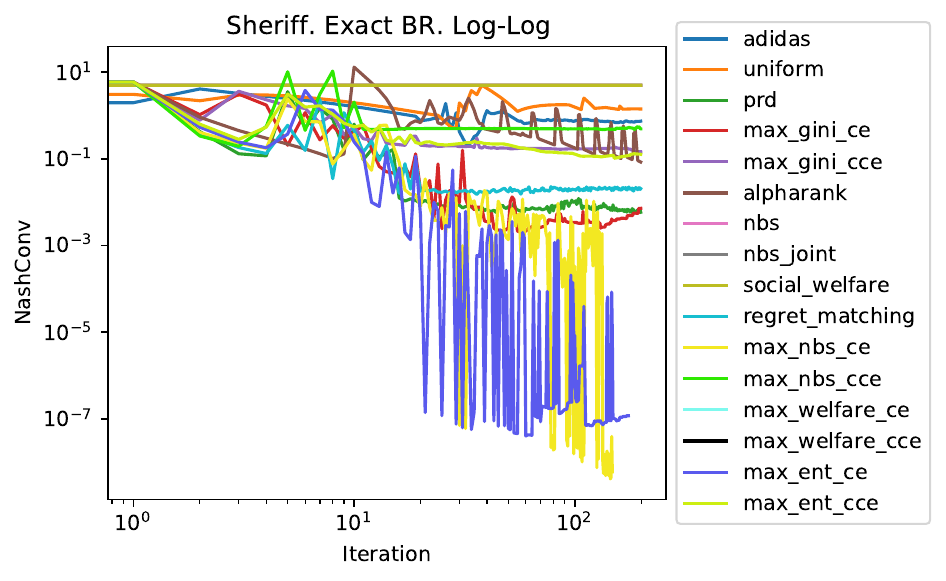} &  
\includegraphics[width=0.22\textwidth]{figures/psro/sheriff_dqn_log_log_nashconv.pdf} &  
\includegraphics[width=0.22\textwidth]{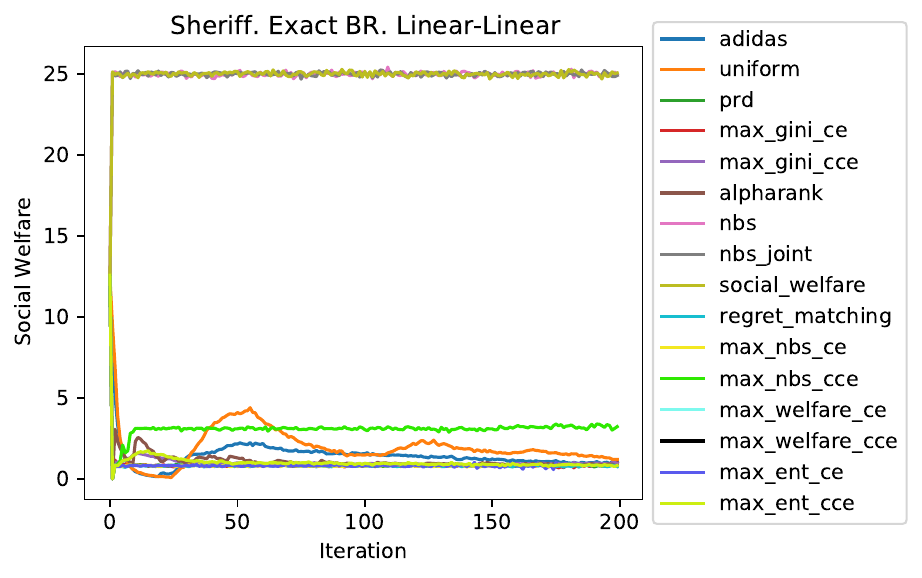} &  
\includegraphics[width=0.22\textwidth]{figures/psro/sheriff_dqn_linear_linear_social_welfare.pdf}\\
\end{tabular}
\caption{Empirical Convergence to Nash Equilibria and Social Welfare in General-Sum Benchmark Games.}
\label{tab:app-benchmark-gsg}
\end{figure*}

\subsubsection{Full Results}

The full Pareto gap graphs as a function of training time is shown in Figure~\ref{fig:ct_pareto_by_mss}.
Some examples of the evolution of expected outcomes over the course of PSRO training are shown in Figure~\ref{fig:ct_pareto_evol}.

\begin{figure*}[t]
\begin{tabular}{cc}
\centering
    \includegraphics[width=0.5\textwidth]{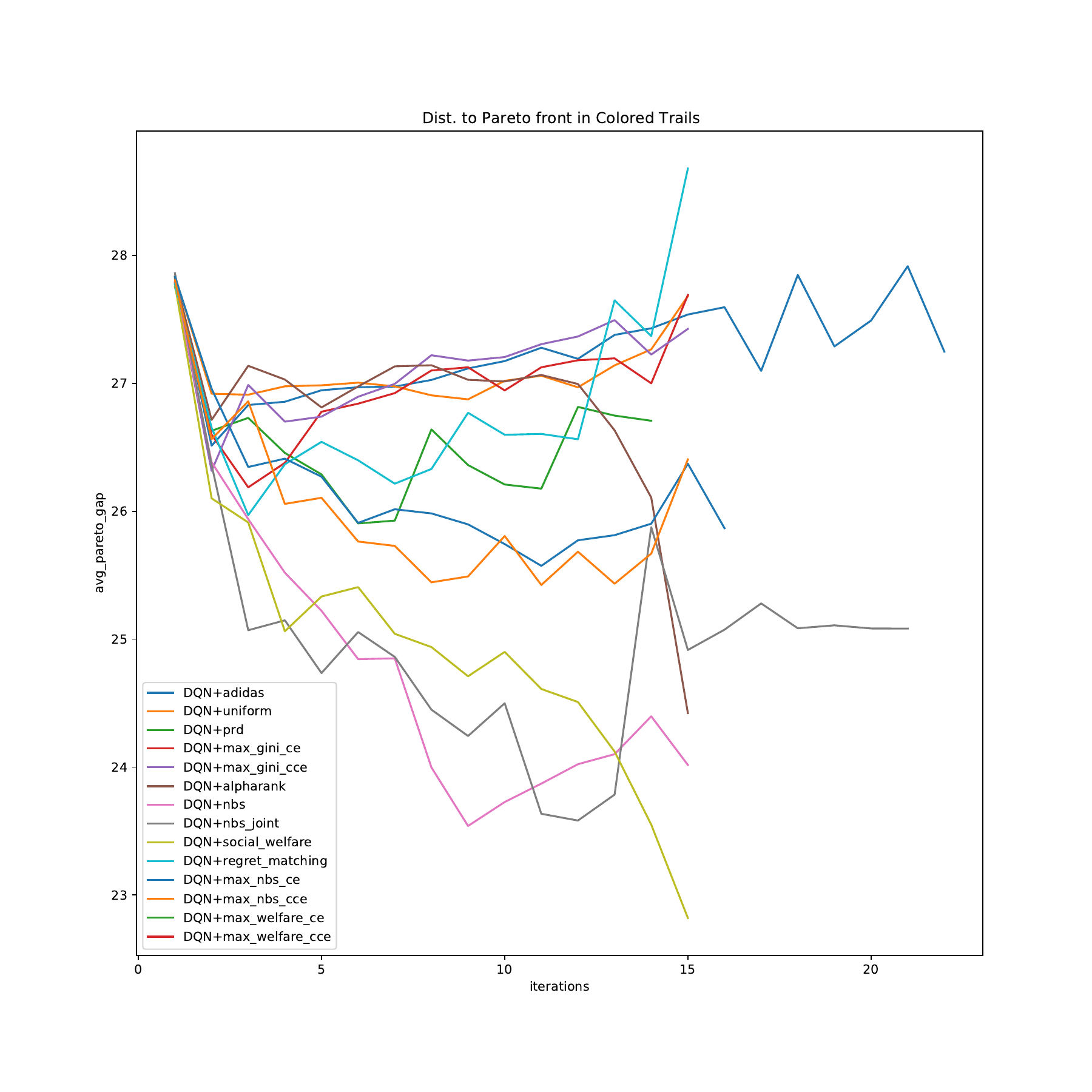} &
    \includegraphics[width=0.5\textwidth]{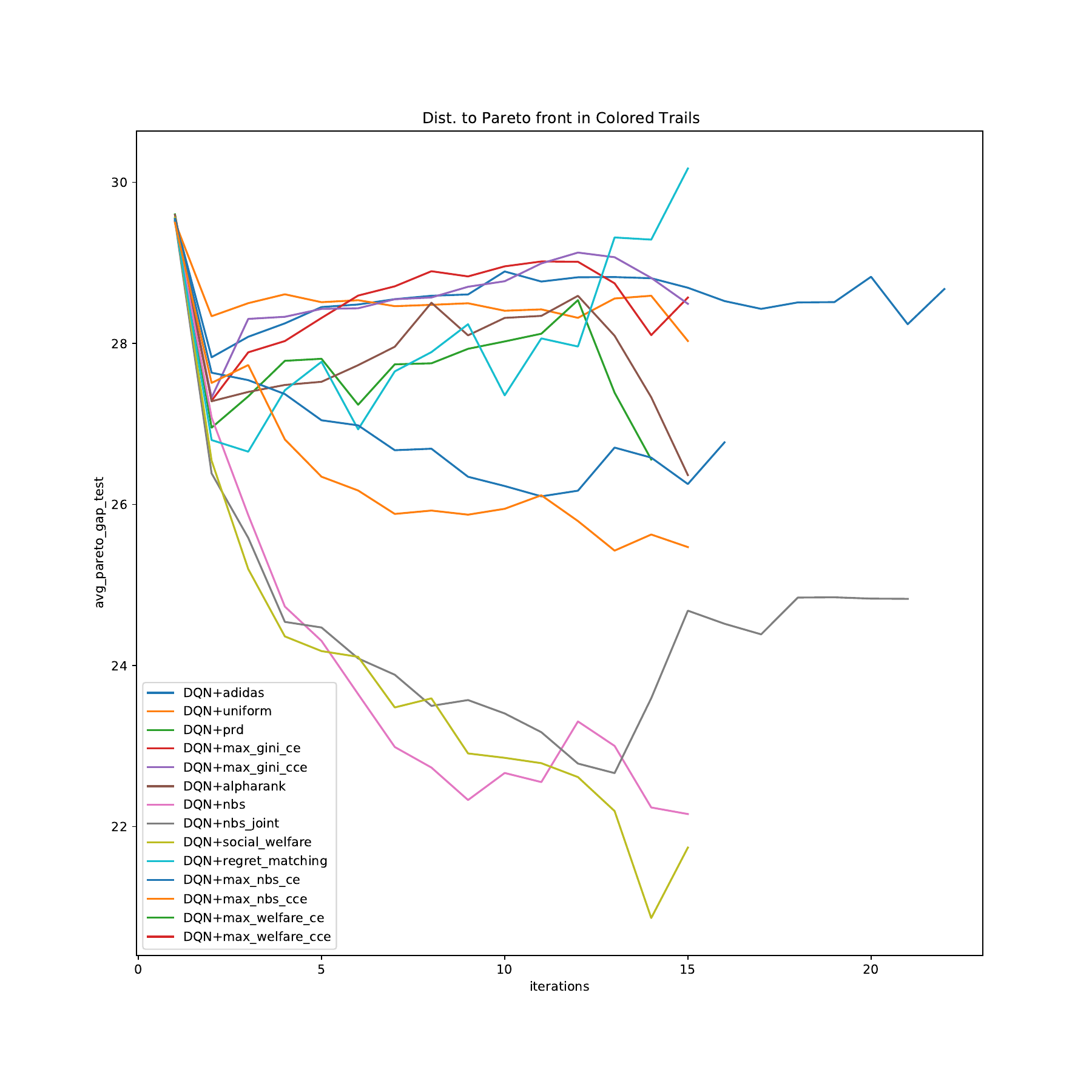} \\
    (a) & (b)\\
    \vspace{1cm} & \\
    \includegraphics[width=0.5\textwidth]{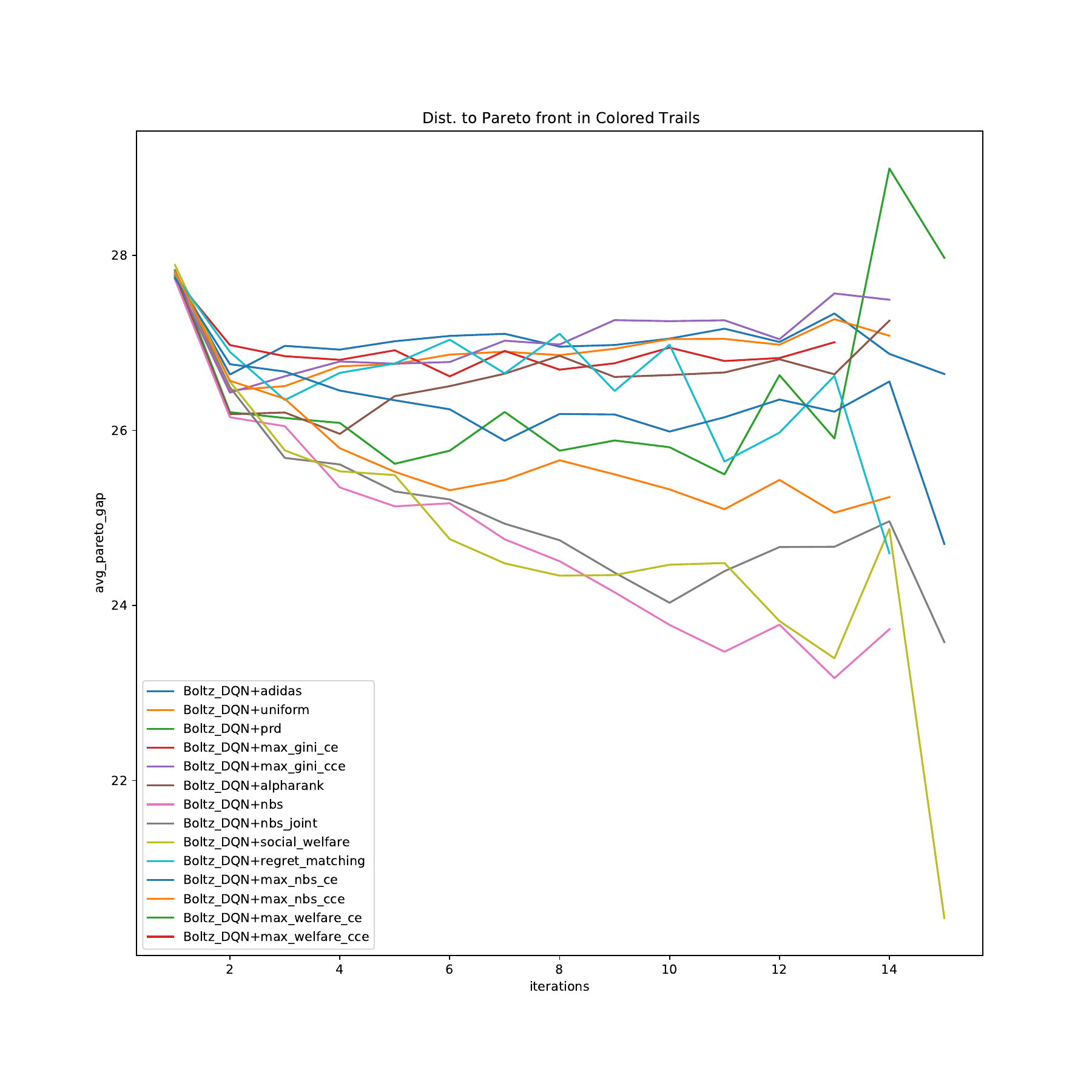} &
    \includegraphics[width=0.5\textwidth]{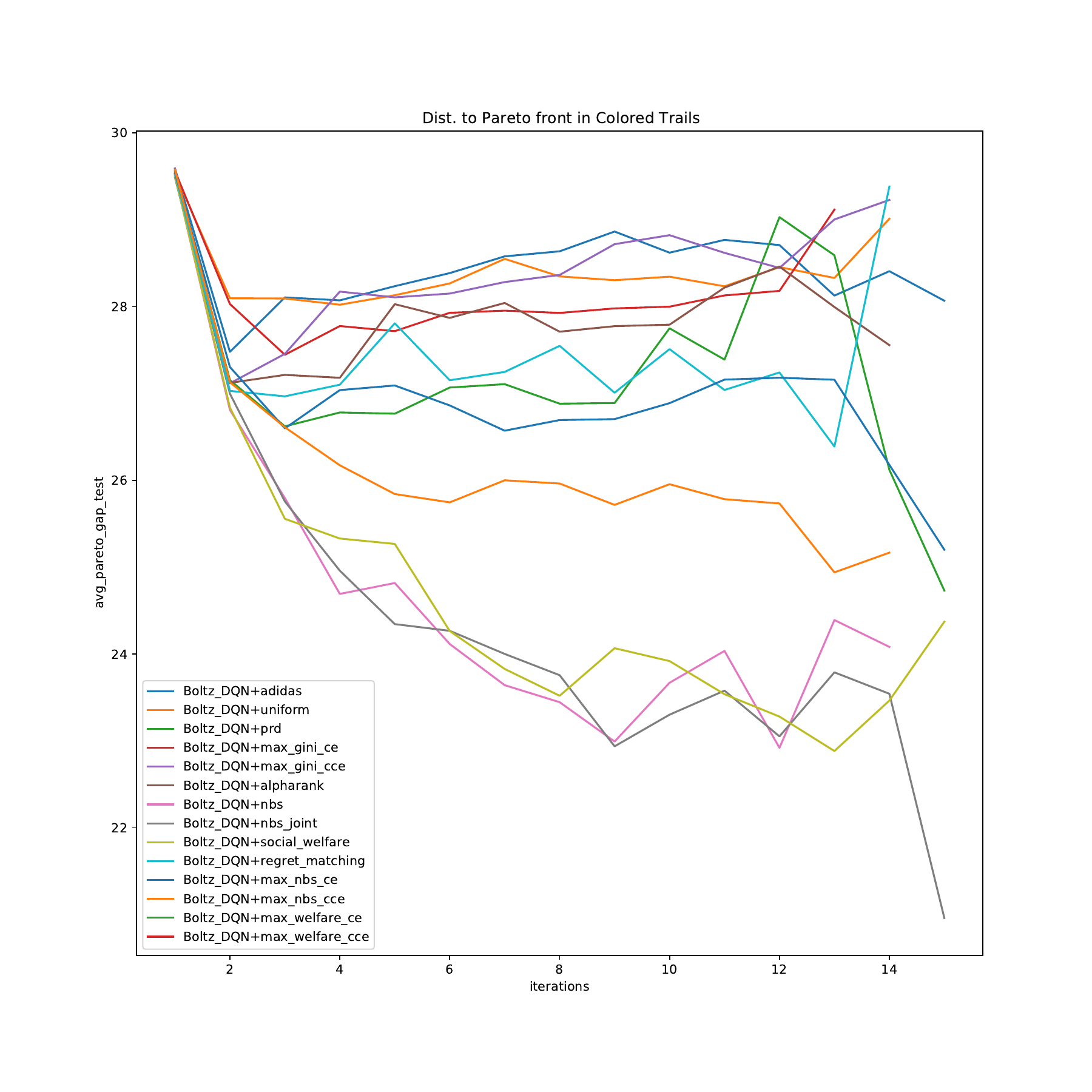} \\
    (c) & (d)\\
\end{tabular}
    \caption{Average Pareto gap using DQN best response (top: (a) and (b)) and Boltzmann DQN (bottom: (c) and (d)), training gap (left: (a) and (c)) and gap on held-out test boards (right: (b) and (d)).}
    \label{fig:ct_pareto_by_mss}
\end{figure*}

\begin{figure*}[t]
\begin{tabular}{cc}
    \includegraphics[width=0.38\textwidth]{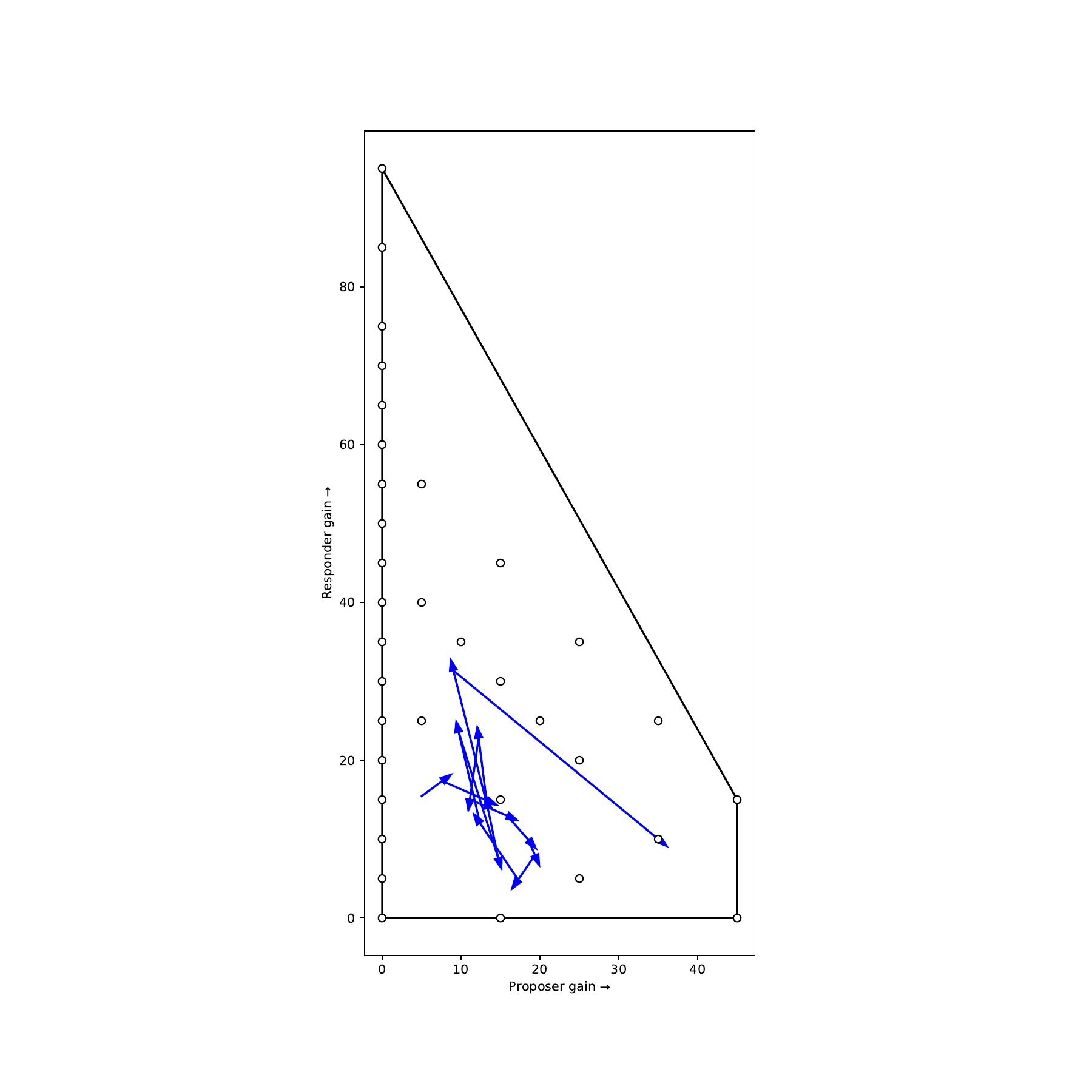} &
    \includegraphics[width=0.38\textwidth]{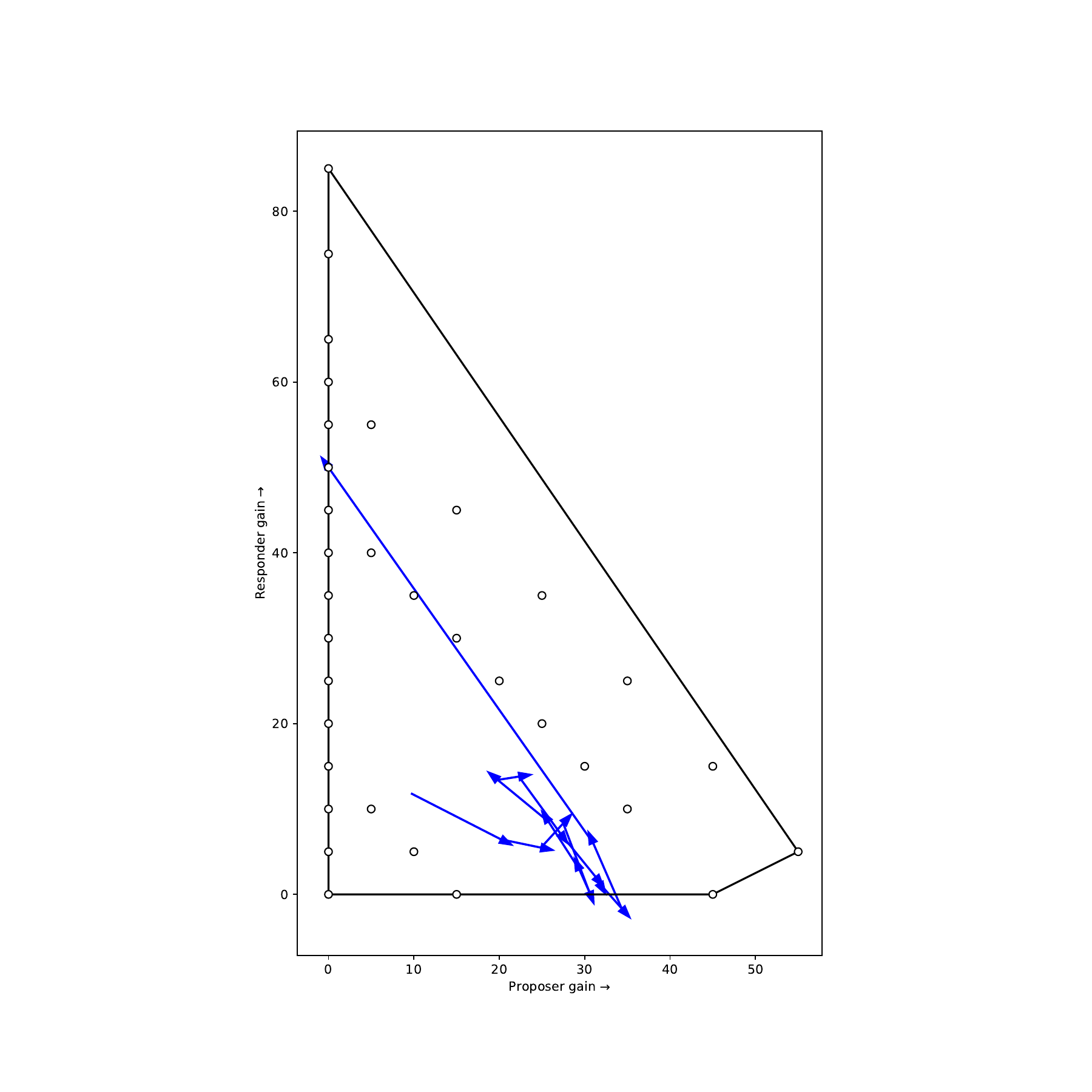} \\
    \includegraphics[width=0.38\textwidth]{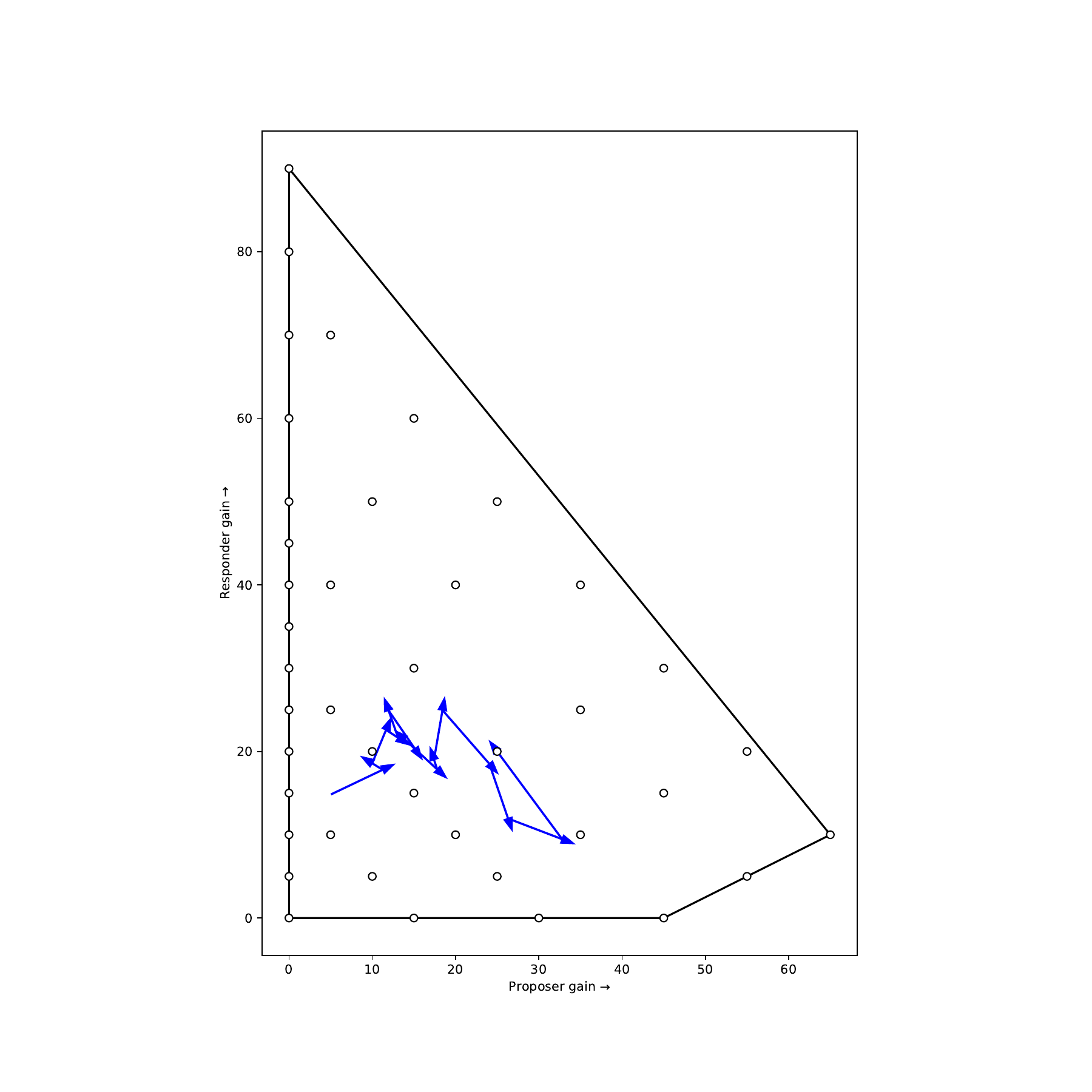} &
    \includegraphics[width=0.38\textwidth]{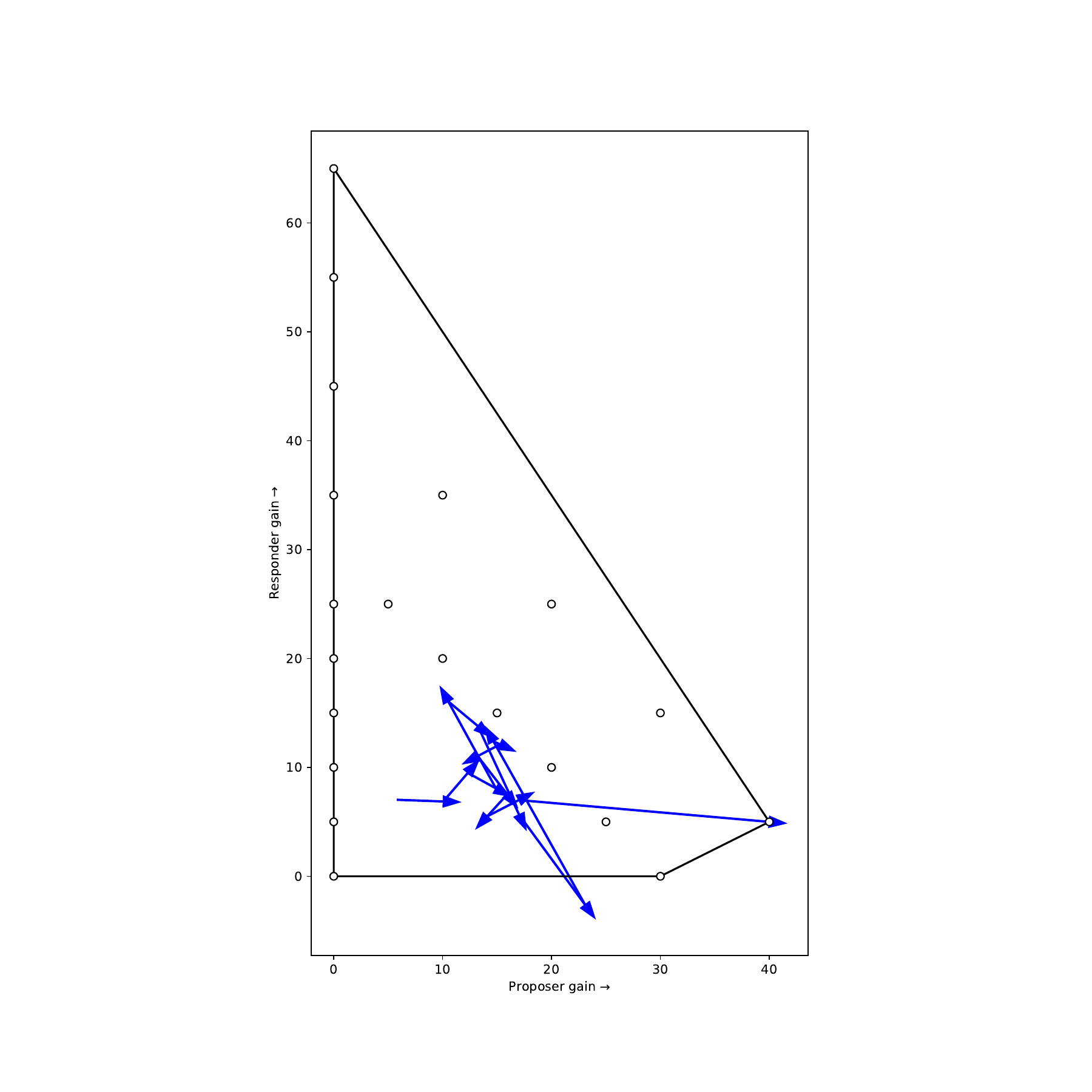} \\
    \includegraphics[width=0.38\textwidth]{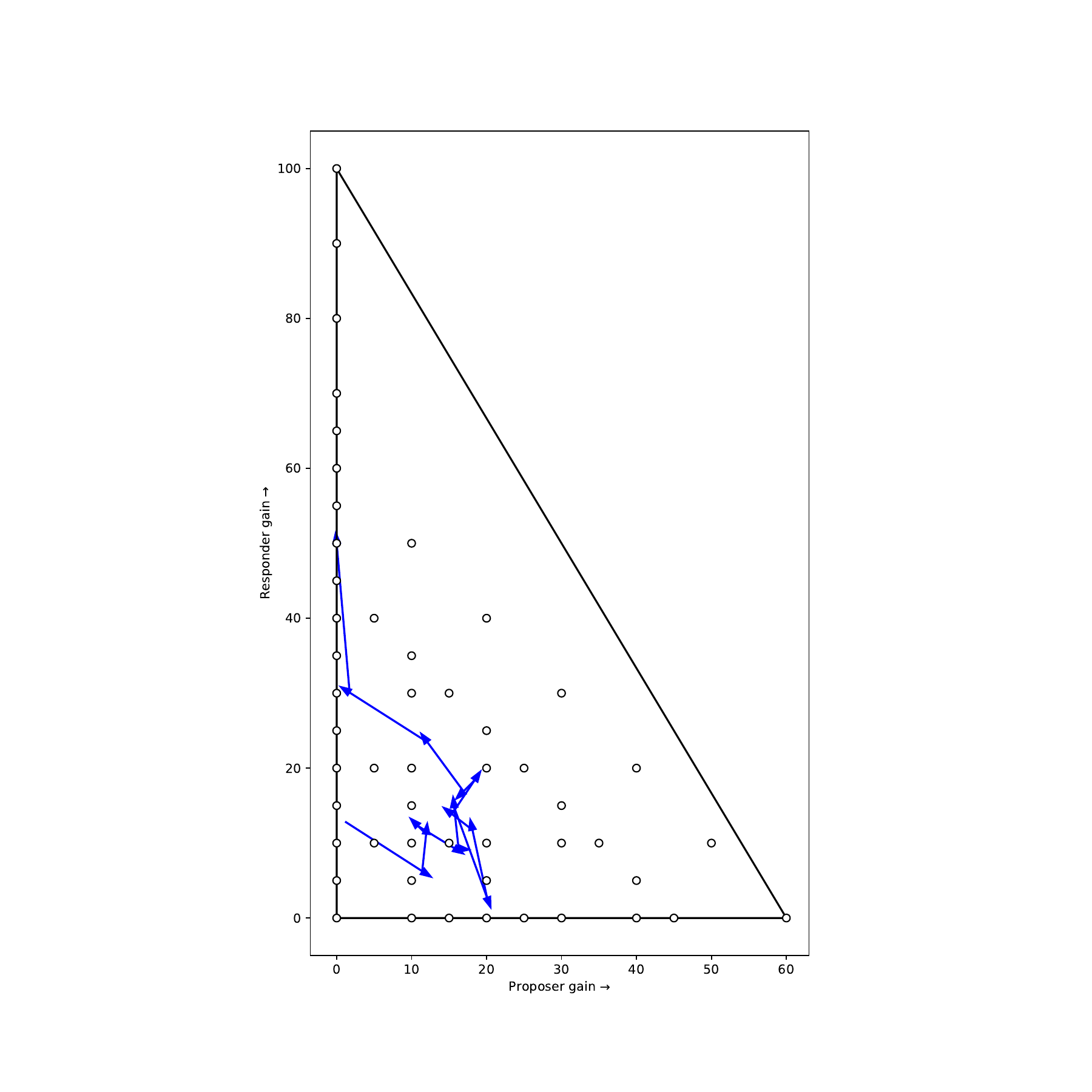} &
    \includegraphics[width=0.38\textwidth]{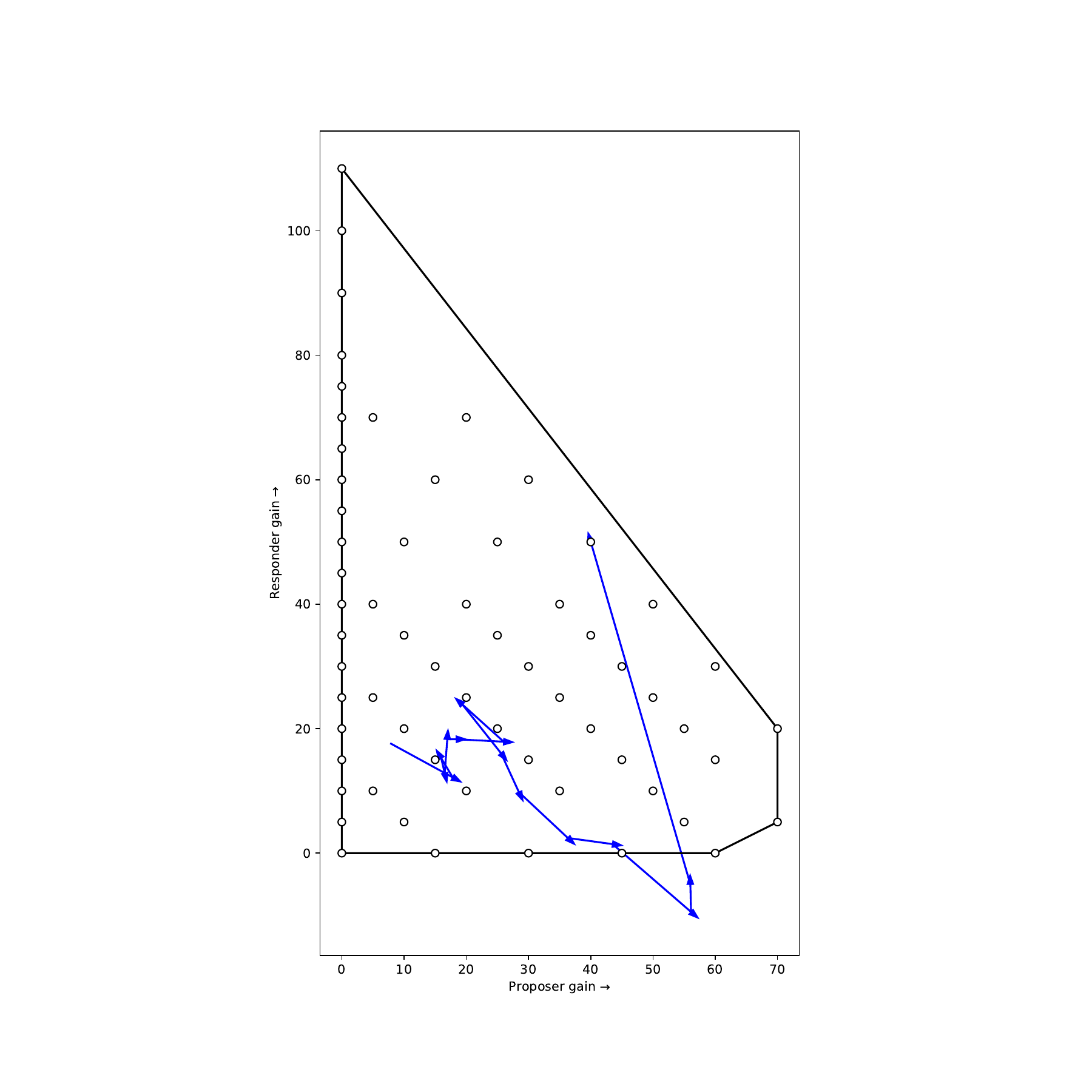} \\\end{tabular}
    \caption{Evolution of the expected outcomes of the PSRO agents using the DQN best response type and social welfare MSS. Each diagram depicts the outcome of the agent for a single configuration of Colored Trails: circles represent the rational outcomes (pure joint strategies where players have non-negative gain). The outer surface of the convex hull represents the Pareto front/envelope. To make a 2D image, the proposers' gains are aggregated and only the winning proposer's value is included in the outcome computation. The blue directed path represents the PSRO agents' expected outcomes at iterations $t \in \{0, 1, \cdots, 15\}$, where each point estimated from 100 samples.
    Note that values can be negative due to sampling approximation but also due to choosing legal actions that result in negative gain.}
    \label{fig:ct_pareto_evol}
\end{figure*}

\subsection{Deal or No Deal}
\label{sec:app-dond}
In this section, we described how we train and select the agents in DoND human behavioral studies.

For DQN and Boltzmann DQN, we used replay buffer sizes of $10^5$, $\epsilon$ decayed from $0.9 \rightarrow 0.1$ over $10^6$ steps, a batch size of $128$, and swept over learning rates of $\{0.01, 0.02, 0.005\}$. For Boltzmann DQN, we varied the temperature $\eta \in {0.25, 0.5, 1}$.
For all self-play policy gradient methods we used a batch size of 128, and swept over critic learning rate in $\{0.01, 0.001\}$, policy learning rate in $\{.001, 0.0005, 0.0001\}$, number of updates to the critic before updating the policy in $\{1, 4, 8\}$, and entropy cost in $\{0.01, 0.001\}$. DQN trained with the settings above and a learning rate of $0.005$ was the agent we found to achieve highest individual returns (and social welfare, and Nash bargaining score), so we select it as the representative agent for the independent RL category.

Due the experimental limitation, we can only select 5 of our agents to human experiments. 
For convenience from now on we label a PSRO agent as $(\textit{MSS}, \textit{BACKPROP\_TYPE}, \textit{FINAL\_TYPE})$ if it is trained under meta-strategy solver $\textit{MSS}$, its back-propagation type is $\textit{BACKPROP\_TYPE}$ and we use $\textit{FINAL\_TYPE}$ as its decision architecture. 
We apply standard empirical game theoretic analysis~\cite{wellman2025empirical} on our agent pool: we create a 113x113 symmetric empirical game by simulating head to head results between every pair of our agents.
During each simulation we toss a coin to assign the roles (first-mover or second-mover in DoND) to our agents.
Then we decide the selected agents based on this empirical game matrix.

Specifically, we want to select: (1) the most competitive agents  (2) the most collaborative agents and (3) the fairest agent in our pool in a principled way.
Now we explain how we approach these criterions one by one.

For (1), we apply our competitive MSSs (e.g., ADIDAS, CE/CCE solvers) on this empirical game matrix and solve for a symmetric equilibrium.
Then we rank the agents according to their expected payoff when the opponents are playing according to this equilibrium.
This approach is inspired by Nash-response ranking in~\cite{jordan2007empirical} and Nash averaging in~\cite{balduzzi2018re} which is recently generalized to any $N$-player general-sum games~\cite{marris2022game}.
We found that Independent DQN, (MGCE, IA, SP) (denoted as Comp1), (MGCCE, IR, SP) (denoted as Comp2) rank at the top under almost all competitive MSS.

For (2), we create two collaborative games where the payoffs of agent1 v.s. agent2 are their social welfare/Nash product. 
We conduct the same analysis as in (1), and found the agent (uniform, NP, RP) normally ranks the first (thus we label it as Coop agent).

For (3), we create an empirical game where the payoffs of agent1 v.s. agent2 are the negative of their absolute payoff difference, and apply the same analysis.
We also conduct a Borda voting scheme: we rank the agent pairs in increasing order of their absolute payoff difference.
Each of these agent pairs will got a Borda voting score.
Then the score of an agent is the summation of the scores of agents pairs that this agent is involved in.
In both approaches, we find the agent (MGCE, NP, RP) ranks the top. Therefore we label it as Fair agent.

To summary, the five agents we finally decided to conduct human experiments are (1) DQN trained through self-play (IndRL) (2) (MGCE, IA, SP) (Comp1) (3) (MGCCE, IR, SP) (Comp2) (4) (uniform, NP, RP) (Coop) (5) (MGCE, NP, RP) (Fair).

We show the head-to-head empirical game between these five agents in Table~\ref{tab:head-to-head}, social-welfare in Table~\ref{tab:head-to-head-sw}, empirical Nash product in Table~\ref{tab:head-to-head-np}.

\begin{table}[t]
\begin{tabular}{c|ccccc}
{\bf Agent} $\backslash$ {\bf Opponent} & IndRL & Com1 & Com2 & Coop & Fair\\
\hline
IndRL & $4.85$ & $4.98$ & $5.02$  & $7.05$ & $6.96$\\
Com1 & $3.75$ & $4.97$ & $4.66$ & $7.19$ & $6.70$\\
Com2 & $3.20$ & $5.30$ & $5.04$ & $6.84$ & $6.86$\\
Coop & $5.63$ & $4.43$ & $4.32$ & $6.67$ & $6.64$\\
Fair & $5.47$ & $4.43$ & $4.19$ & $6.59$ & $6.52$\\
\end{tabular}
\caption{Head-to-head empirical game matrix among our selected agents \label{tab:head-to-head}. The $(i, j)$-th entry is the payoff of the $i$-th agent when it is playing with the $j$-th agent.}
\end{table}

\begin{table}[t]
\begin{tabular}{c|ccccc}
{\bf Agent} $\backslash$ {\bf Opponent} & IndRL & Com1 & Com2 & Coop & Fair\\
\hline
IndRL & $9.70$ & $8.73$ & $8.22$  & $12.68$ & $12.42$\\
Com1 & $8.73$ & $9.94$ & $9.96$ & $11.62$ & $11.12$\\
Com2 & $8.22$ & $9.96$ & $10.09$ & $11.16$ & $11.05$\\
Coop & $12.68$ & $11.62$ & $11.16$ & $13.34$ & $13.22$\\
Fair & $12.42$ & $11.12$ & $11.05$ & $13.22$ & $13.05$\\
\end{tabular}
\caption{Head-to-head empirical social welfare matrix among our selected agents \label{tab:head-to-head-sw}. The $(i, j)$-th entry is the social welfare of when $i$-th agent is playing with the $j$-th agent.}
\end{table}

\begin{table}[t]
\begin{tabular}{c|ccccc}
{\bf Agent} $\backslash$ {\bf Opponent} & IndRL & Com1 & Com2 & Coop & Fair\\
\hline
IndRL & $23.48$ & $18.69$ & $16.05$  & $39.66$ & $38.02$\\
Com1 & $18.69$ & $24.70$ & $24.68$ & $31.84$ & $29.63$\\
Com2 & $16.05$ & $24.68$ & $25.44$ & $29.57$ & $28.73$\\
Coop & $39.66$ & $31.84$ & $29.57$ & $44.51$ & $43.70$\\
Fair & $38.02$ & $29.63$ & $28.74$ & $43.70$ & $42.56$\\
\end{tabular}
\caption{Head-to-head empirical Nash product matrix among our selected agents \label{tab:head-to-head-np}. The $(i, j)$-th entry is the Nash product when $i$-th agent is playing with the $j$-th agent.}
\end{table}

\section{Human Behavioral Studies}
\label{sec:app-human-data}

\subsection{Study Design}

The protocol for the human behavioral studies underwent independent ethical review and received a favorable opinion from the Human Behavioural Research Ethics Committee at DeepMind (\#19/004).
All participants provided informed consent before joining the study.

We collected data through ``tournaments'' of Deal or No Deal games in two different conditions: Human vs. Human (HvH) and Human vs. Agent (HvA). Upon joining the study, participants were randomly assigned to one of the two conditions. Participants in both conditions proceeded through the following sequence of steps:

\begin{enumerate}
    \item Read study instructions and gameplay tutorial (Figures \ref{fig:prestudy_screens_00}--\ref{fig:prestudy_screens_04}).
    \item Take comprehension test (Figures \ref{fig:prestudy_screens_05} \& \ref{fig:prestudy_screens_06}).
    \item Wait for random assignment to a tournament with five other participants (HvH) or wait for agents to load for a tournament (HvA; Figure \ref{fig:prestudy_screens_07}).
    \item Play episode of Deal or No Deal game (Figure \ref{fig:study_screens_00}).
    \item See score confirmation for last episode and wait for next episode (Figure \ref{fig:study_screens_01/a}).
    \item Repeat steps 4 and 5 for four additional episodes.
    \item Note total earnings and transition to post-game questionnaire (Figure \ref{fig:study_screens_01/b}).
\end{enumerate}

We required participants to answer all four questions in the comprehension test correctly to continue to the rest of the study. The majority of participants (71.4\%) passed the test and were randomly sorted into tournaments in groups of $n = 6$ (for the HvH condition) or $n = 1$ (for the HvA condition). We provided the remainder a show-up payment for the time they spent on the study tutorial and test.

The participants played DoND for real monetary stakes, receiving an additional \$0.10 of bonus for each point they earned in the study. To ensure that one non-responsive participant did not disrupt the study for any other participants in their tournament, episodes had a strict time limit of 120 seconds. After the time limit for a given episode elapsed, all uncompleted games were marked as ``timed out'' and any participants in the tournament were moved to the next step of the study.

We apply two exclusion criteria to build our final datasets of game episodes: first, exclude all episodes that timed out before reaching a deal or exhausting all turns; and second, exclude all episodes involving a participant who was non-responsive during the tournament (i.e., did not take a single action in a episode).

In the HvH condition, 228 participants passed the comprehension test and were sorted into tournaments. Eleven participants were non-responsive and an additional 6.3\% of games timed out before reaching a deal or exhausting all turns, resulting in a final sample of $N = 217$ participants (39.5\% female, 59.1\% male, 0.9\% trans or nonbinary; median age range: 30–40).

In HvH tournaments, each participant plays one episode against each of the five other participants in their group. We use a round-robin structure to efficiently match participants for each episode, ensuring each game involves two participants who have not interacted before. We randomize player order in each game. The final HvH dataset contains $k = 483$ games.

In the HvA condition, 130 participants passed the comprehension test and were sorted into tournaments. One participant was non-responsive and an additional 15.2\% of games timed out before reaching a deal or exhausting all turns, resulting in a final sample of $N = 129$ participants (44.5\% female, 53.1\% male, 0.8\% trans or nonbinary; median age range: 30–40).

In HvA tournaments, each participant plays one episode against each of the five agents evaluated in the study. We randomize agent order in each tournament and player order in each game. The final HvA dataset contains $k = 547$ games.

During each episode of the HvA tournaments, agent players waited a random amount of time after participant actions to send their own actions. The agents randomly sampled their time delays from a normal distribution with a mean of 10 seconds and a standard deviation of 3 seconds.

After completing all episodes in their tournament, participants proceeded to complete a post-task questionnaire that included the slider measure of Social Value Orientation \cite{murphy2011measuring}, demographic questions, and open-ended questions soliciting feedback on the study.

Participants completed the study in an average of 18.2 minutes. The base pay for the study was \$2.50, with an average performance bonus of \$3.70.

\subsection{Analysis}

We fit linear mixed- and random-effects models to estimate and compare the average returns generated by our agents and by study participants.

We first fit a linear mixed-effects model using just the HvA data, predicting human return from each episode with one fixed effect (a categorical variable representing each type of agent playing in the episode) and one random effect (representing each participant in the HvA condition). The effect estimates (the mean points earned by a human player against each agent) are shown in the $\bar{u}_{\mbox{Humans}}$ column of Table~\ref{tab:dond-hva}. We estimate 95\% confidence intervals for the individual effects through bootstrapping with 500 resamples.

We next fit a linear mixed-effects model using just the HvA data, predicting agent return from each episode with one fixed effect (a categorical variable representing each type of agent playing in the episode) and one random effect (representing each participant in the HvA condition). The effect estimates (the mean points earned by each agent) are shown in the $\bar{u}_{\mbox{Agent}}$ column of Table~\ref{tab:dond-hva}. We estimate 95\% confidence intervals for the individual effects through bootstrapping with 500 resamples.

We similarly fit a linear mixed-effects model using just the HvA data, predicting social welfare (average return) from each episode with one fixed effect (a categorical variable representing each type of agent playing in the episode) and one random effect (representing each participant in the HvA condition). The effect estimates (the mean social welfare catalyzed by each agent) are shown in the $\bar{u}_{\mbox{Comb}}$ column of Table~\ref{tab:dond-hva}. We estimate 95\% confidence intervals for the individual effects through bootstrapping with 500 resamples.

We then fit a linear random-effects model using just the HvH data, predicting the return for one player (randomly selected) in each episode with one random effect (representing each participant in the HvH condition). Participants in the HvH condition achieve an individual return of 6.93 [6.72, 7.14], on expectation. We estimate the 95\% confidence interval for this effect through bootstrapping with 500 resamples.

\onecolumn

\clearpage \vfill

\begin{figure*}[h]
	\centering
    \subfloat[Screen 1: Welcome participants to the experiment.]{\includegraphics[width=0.9\textwidth]{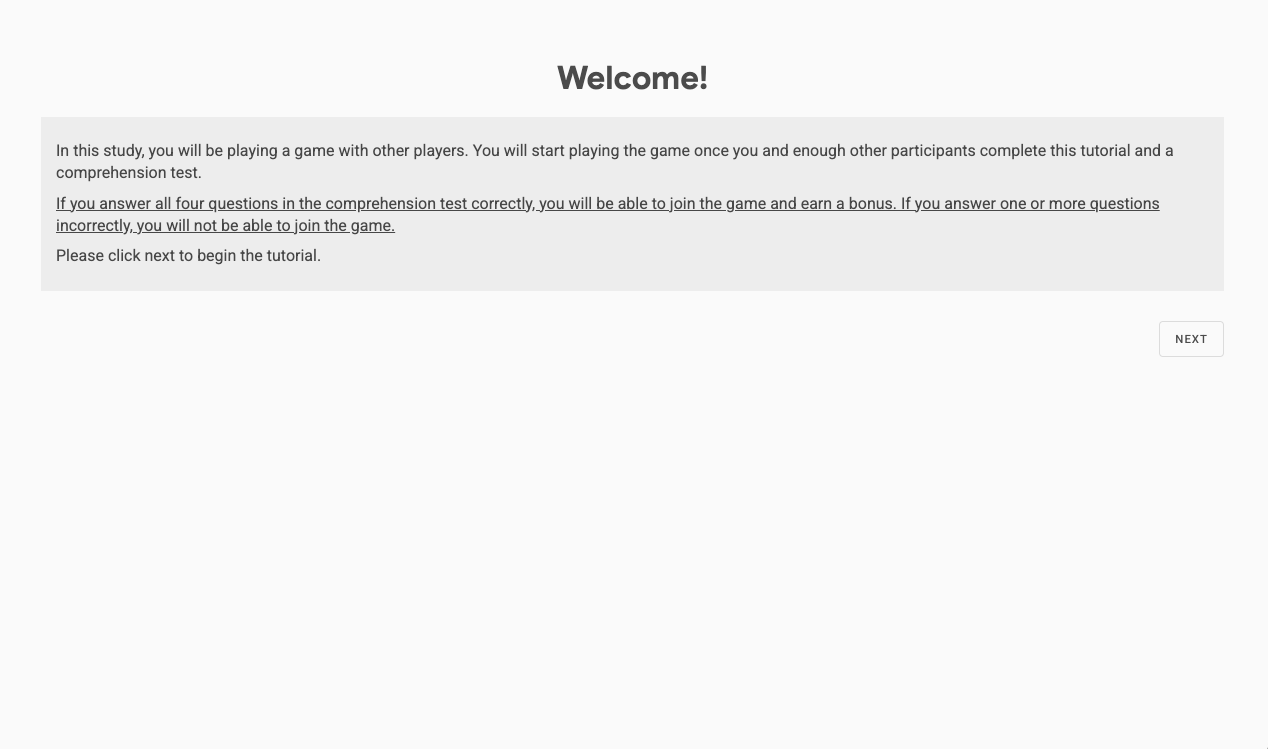}} \\
    \subfloat[Screen 2: Explain study length and player matching.]{\includegraphics[width=0.9\textwidth]{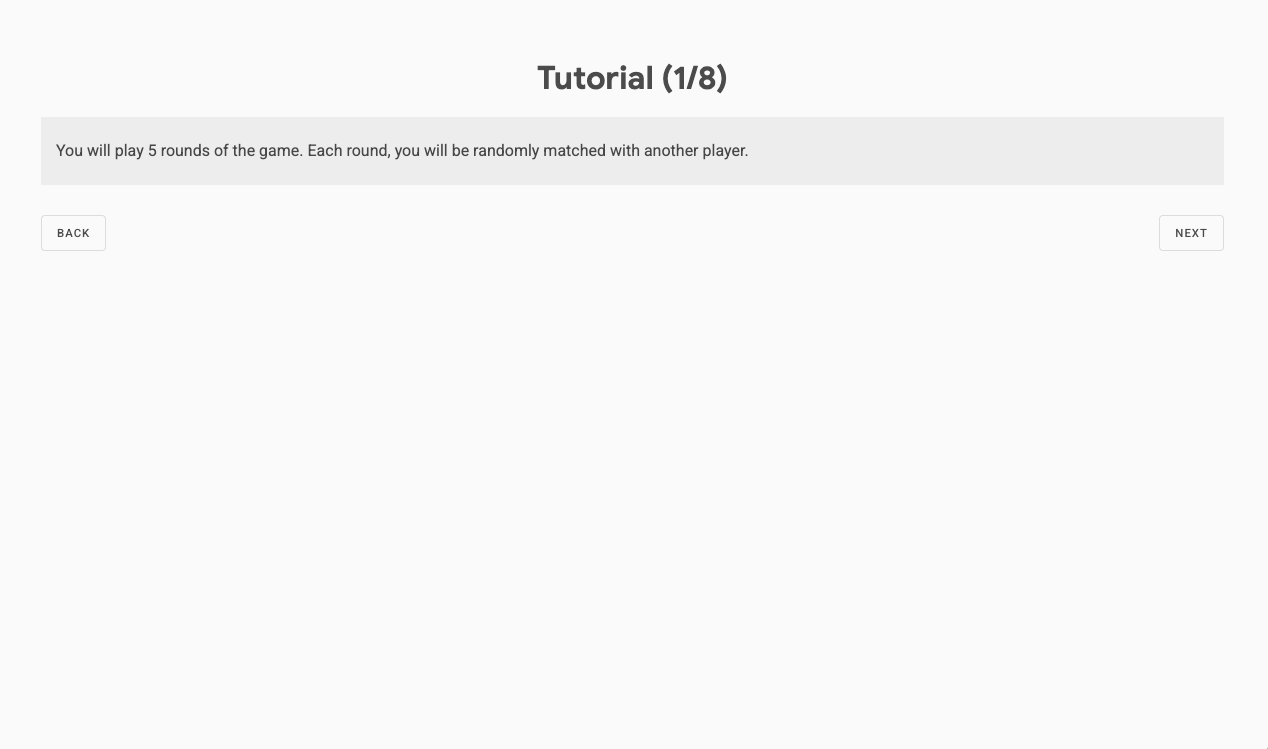}}
	\caption{Screenshots of the participant interface for the ``Deal or No Deal'' study. (a) The participant reads general study information. (b) The participant reads tutorial information about the game.}
	\label{fig:prestudy_screens_00}
\end{figure*}

\vfill \clearpage \vfill

\begin{figure*}[h]
	\centering
    \subfloat[Screen 3: Explain items.]{\includegraphics[width=0.9\textwidth]{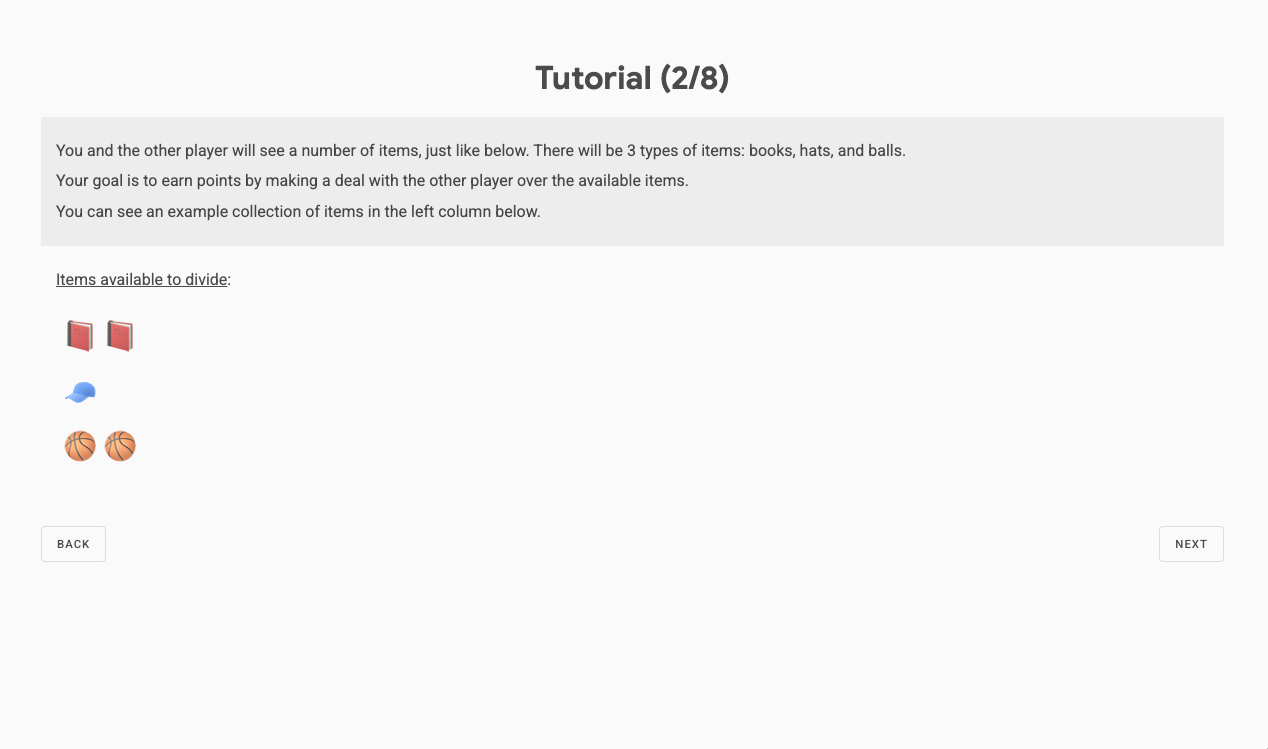}} \\
    \subfloat[Screen 4: Explain the goal of item distribution.]{\includegraphics[width=0.9\textwidth]{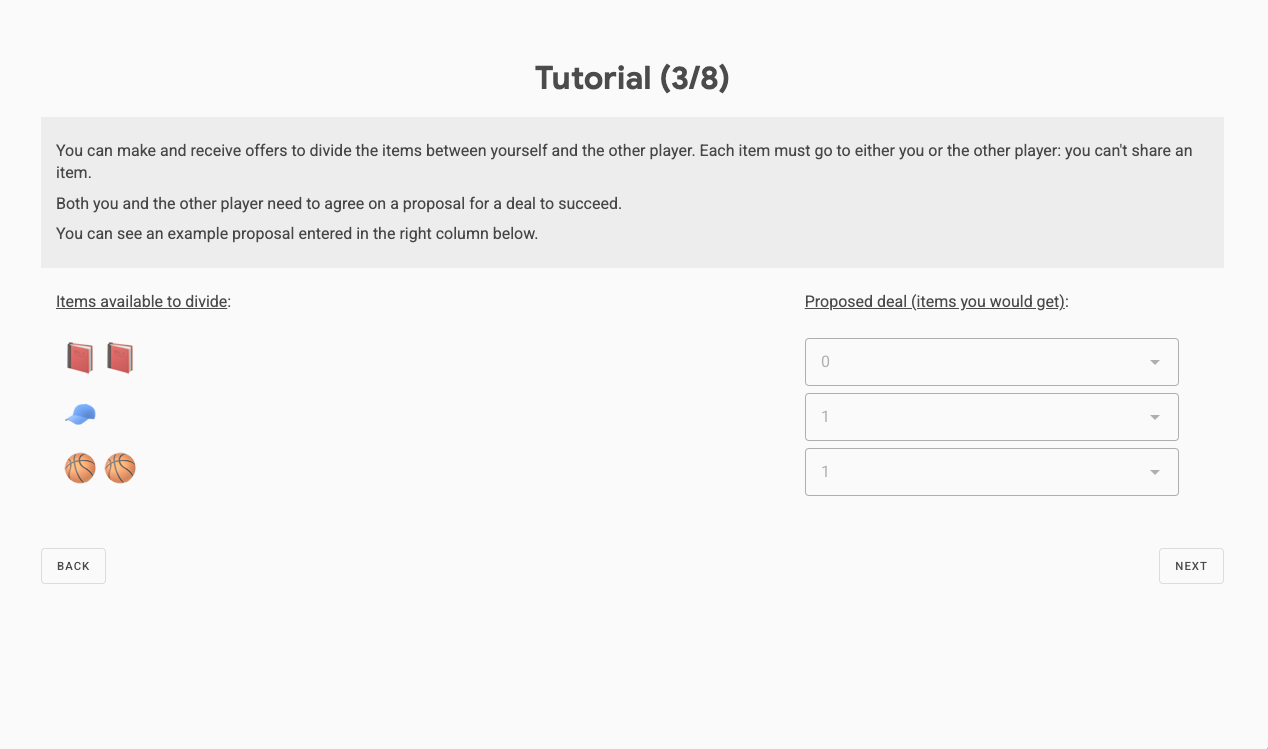}}
	\caption{Screenshots of the participant interface for the ``Deal or No Deal'' study. (a) The participant reads tutorial information about the game. (b) The participant reads tutorial information about the game.}
	\label{fig:prestudy_screens_01}
\end{figure*}

\vfill \clearpage \vfill

\begin{figure*}[h]
	\centering
    \subfloat[Screen 5: Explain points.]{\includegraphics[width=0.9\textwidth]{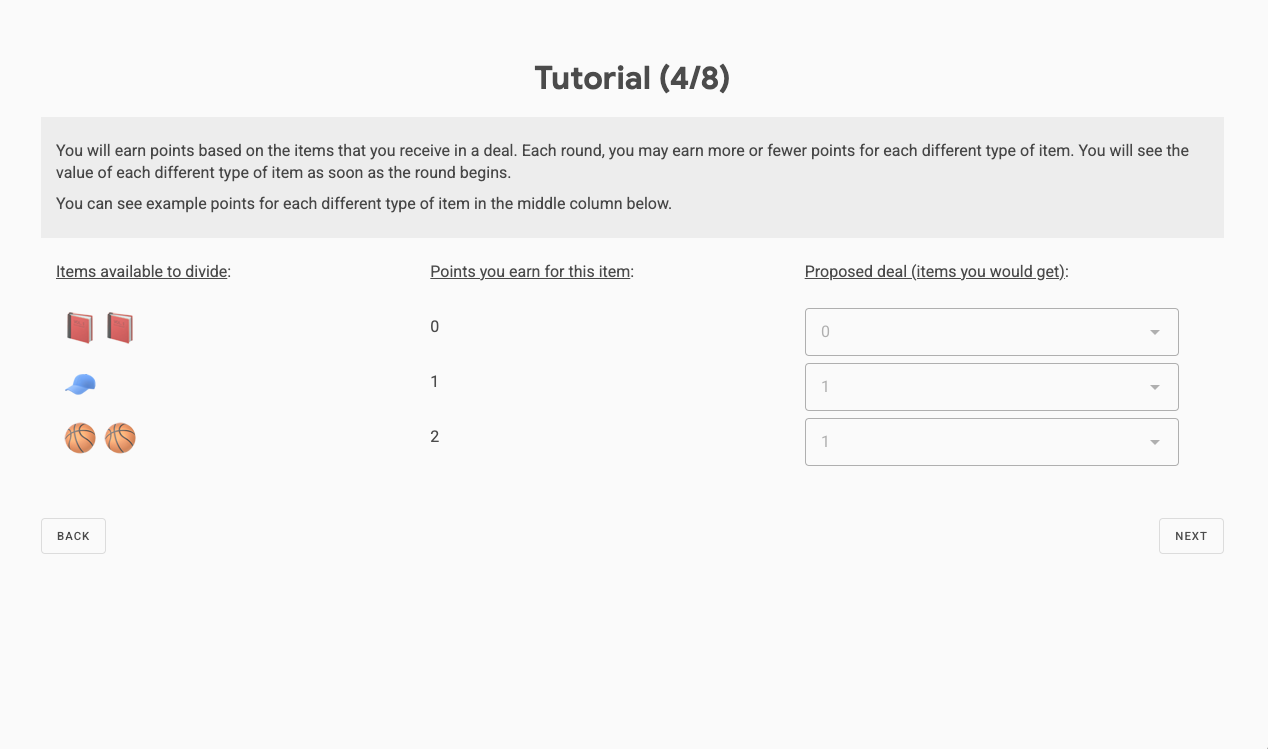}} \\
    \subfloat[Screen 6: Explain other player's points.]{\includegraphics[width=0.9\textwidth]{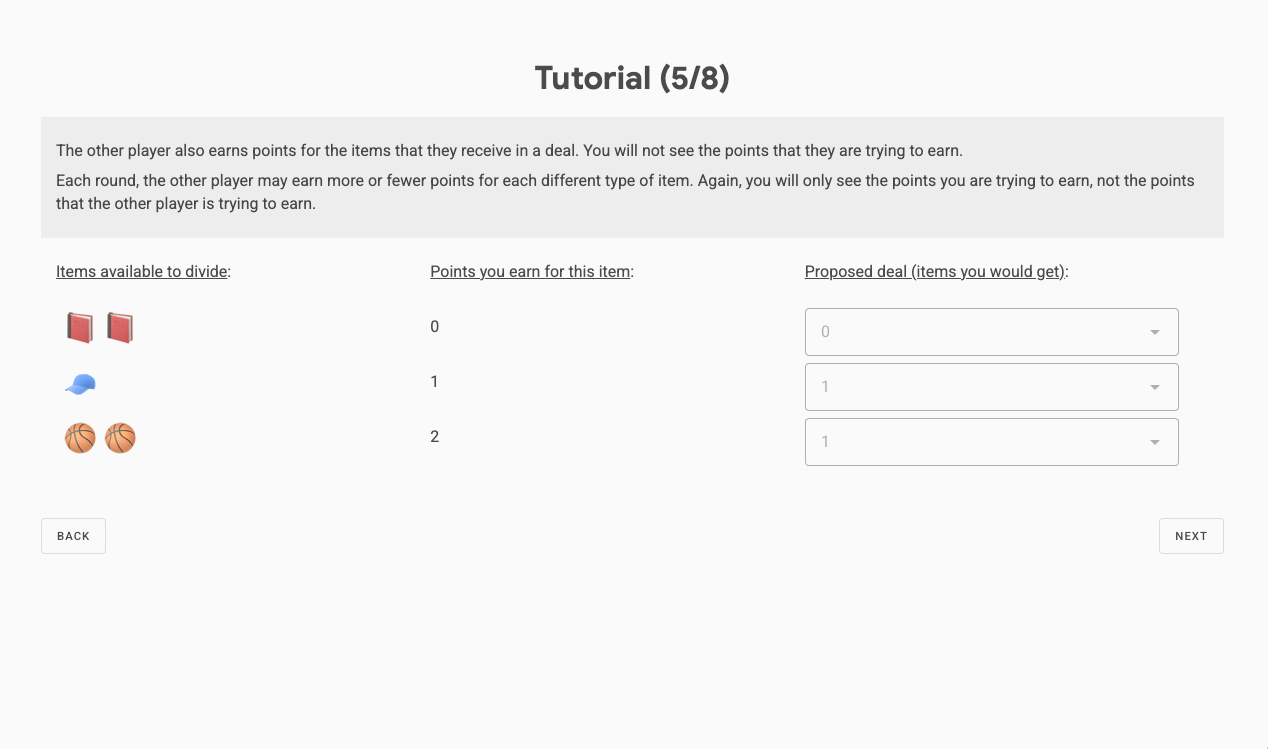}}
	\caption{Screenshots of the participant interface for the ``Deal or No Deal'' study. (a) The participant reads tutorial information about the game. (b) The participant reads tutorial information about the game.}
	\label{fig:prestudy_screens_02}
\end{figure*}

\vfill \clearpage \vfill

\begin{figure*}[h]
	\centering
    \subfloat[Screen 7: Explain actions.]{\includegraphics[width=0.9\textwidth]{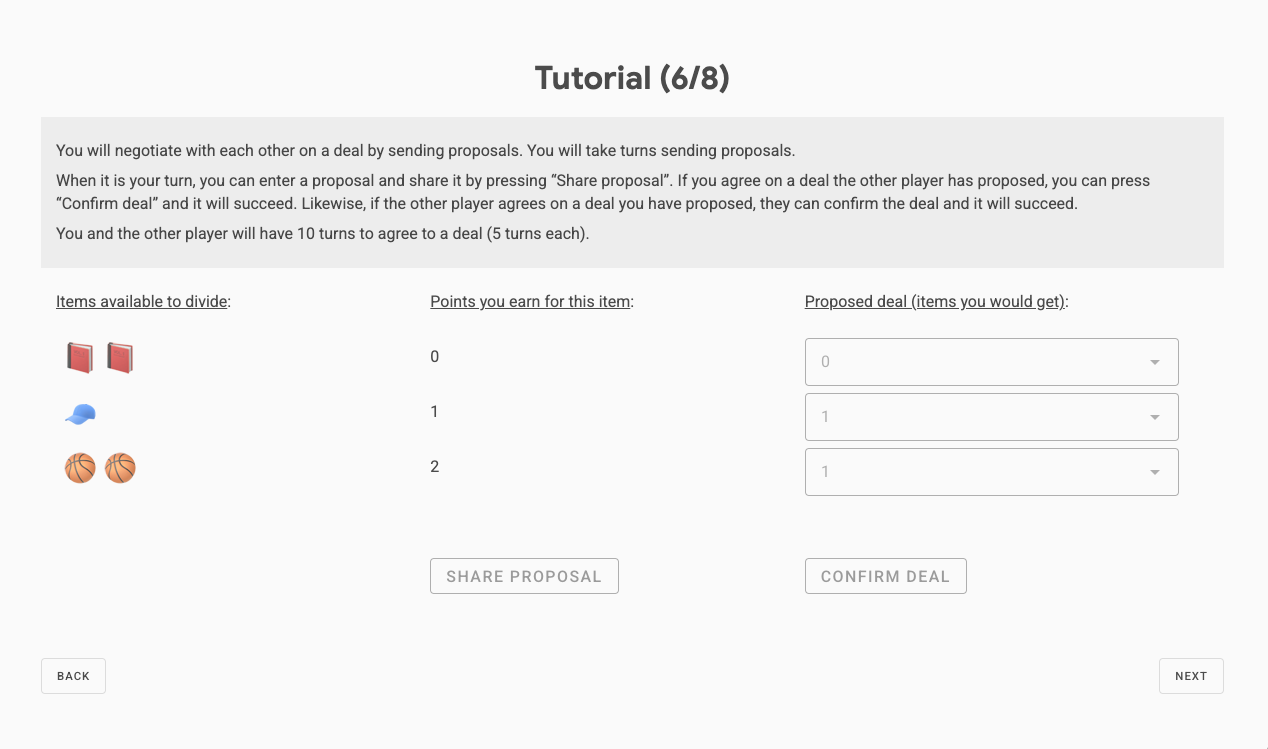}} \\
    \subfloat[Screen 8: Explain communication.]{\includegraphics[width=0.9\textwidth]{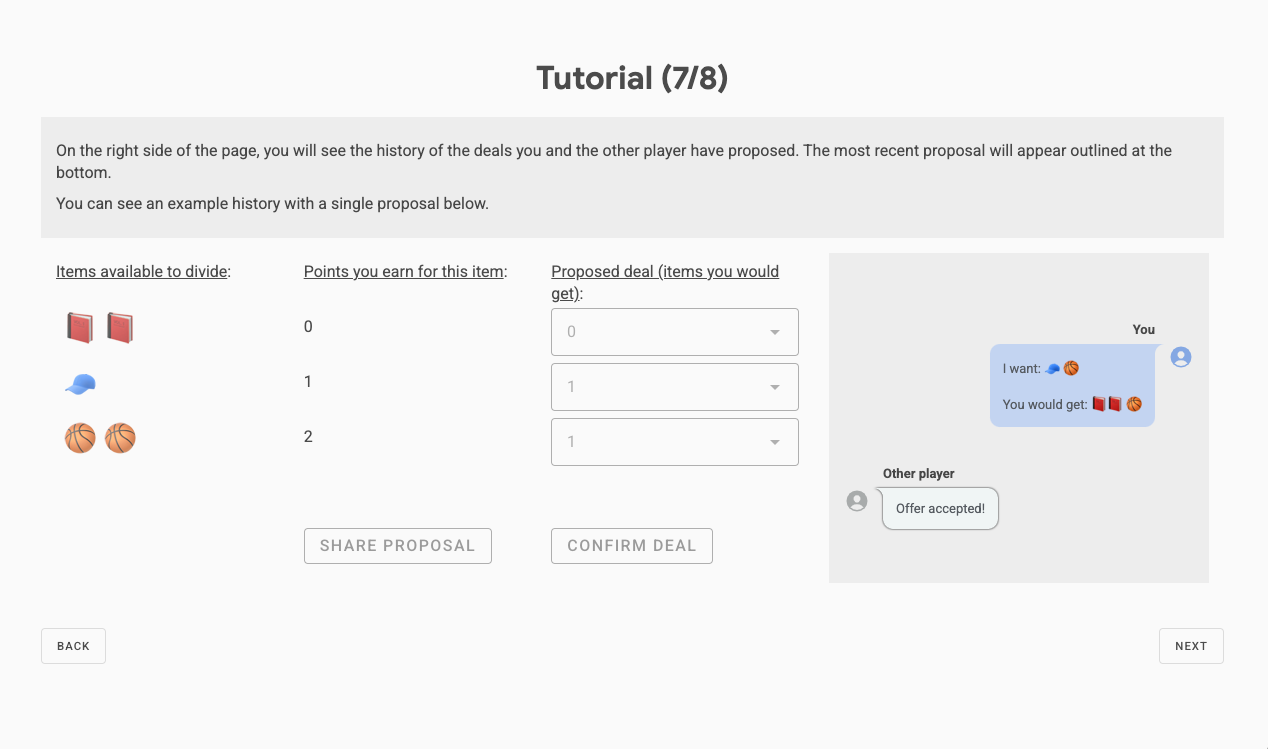}}
	\caption{Screenshots of the participant interface for the ``Deal or No Deal'' study. (a) The participant reads tutorial information about the game. (b) The participant reads tutorial information about the game.}
	\label{fig:prestudy_screens_03}
\end{figure*}

\vfill \clearpage \vfill

\begin{figure*}[h]
	\centering
    \subfloat[Screen 9: Explain episode length.]{\includegraphics[width=0.9\textwidth]{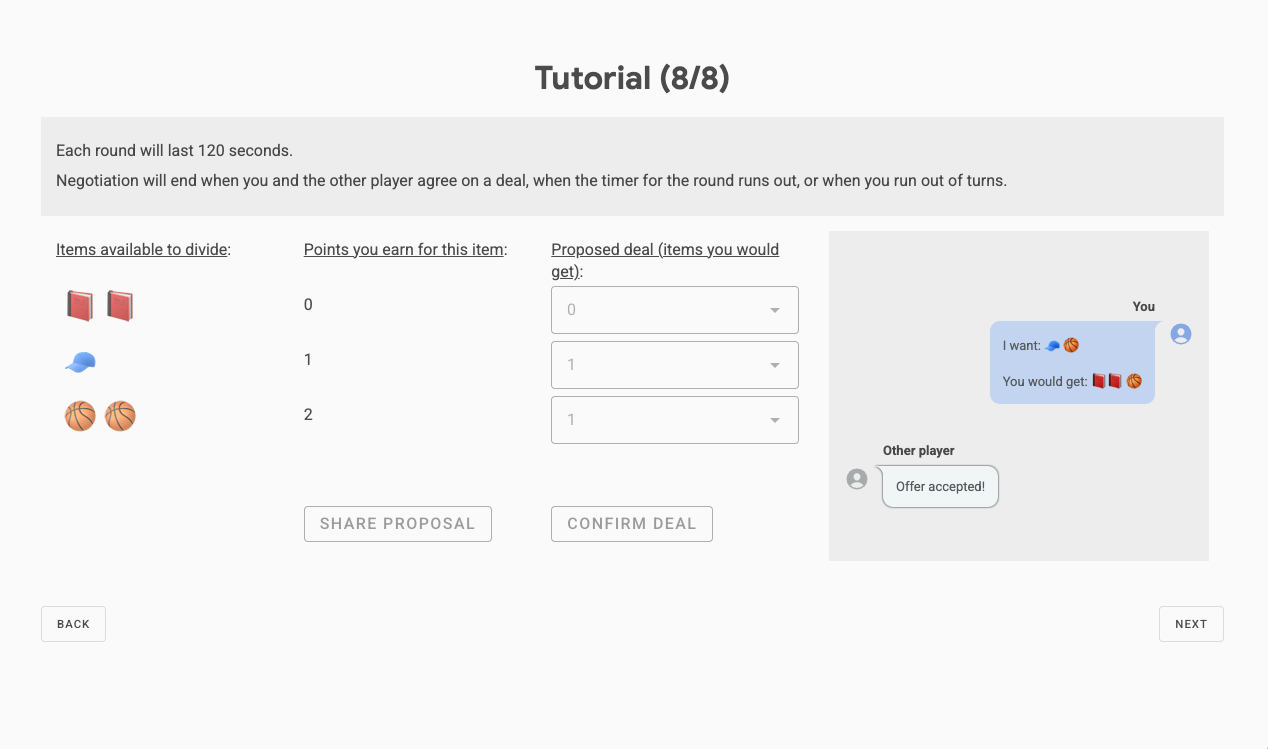} \label{fig:prestudy_screens_04/a}} \\
    \subfloat[Screen 10: Introduce comprehension test.]{\includegraphics[width=0.9\textwidth]{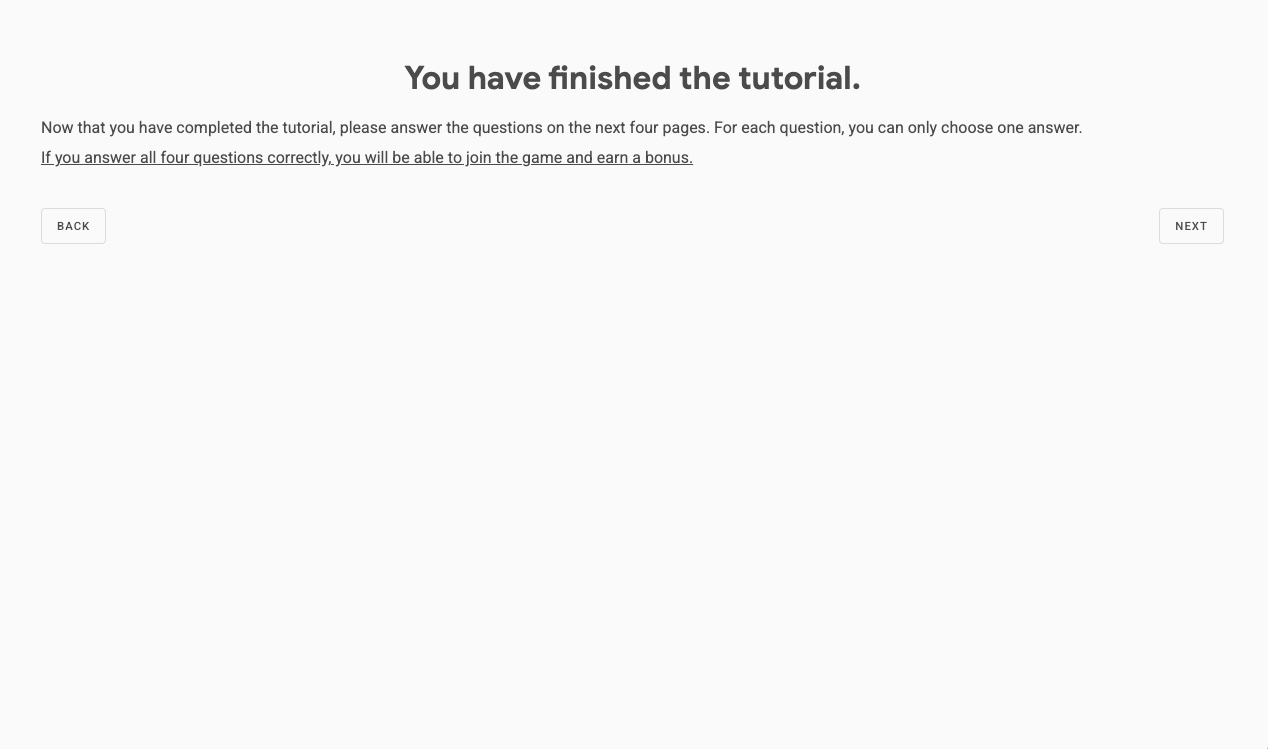}}
	\caption{Screenshots of the participant interface for the ``Deal or No Deal'' study. (a) The participant reads tutorial information about the game. (b) The participant reads information about the comprehension test.}
	\label{fig:prestudy_screens_04}
\end{figure*}

\vfill \clearpage \vfill

\begin{figure*}[h]
	\centering
    \subfloat[Screen 11: Test on player matching.]{\includegraphics[width=0.9\textwidth]{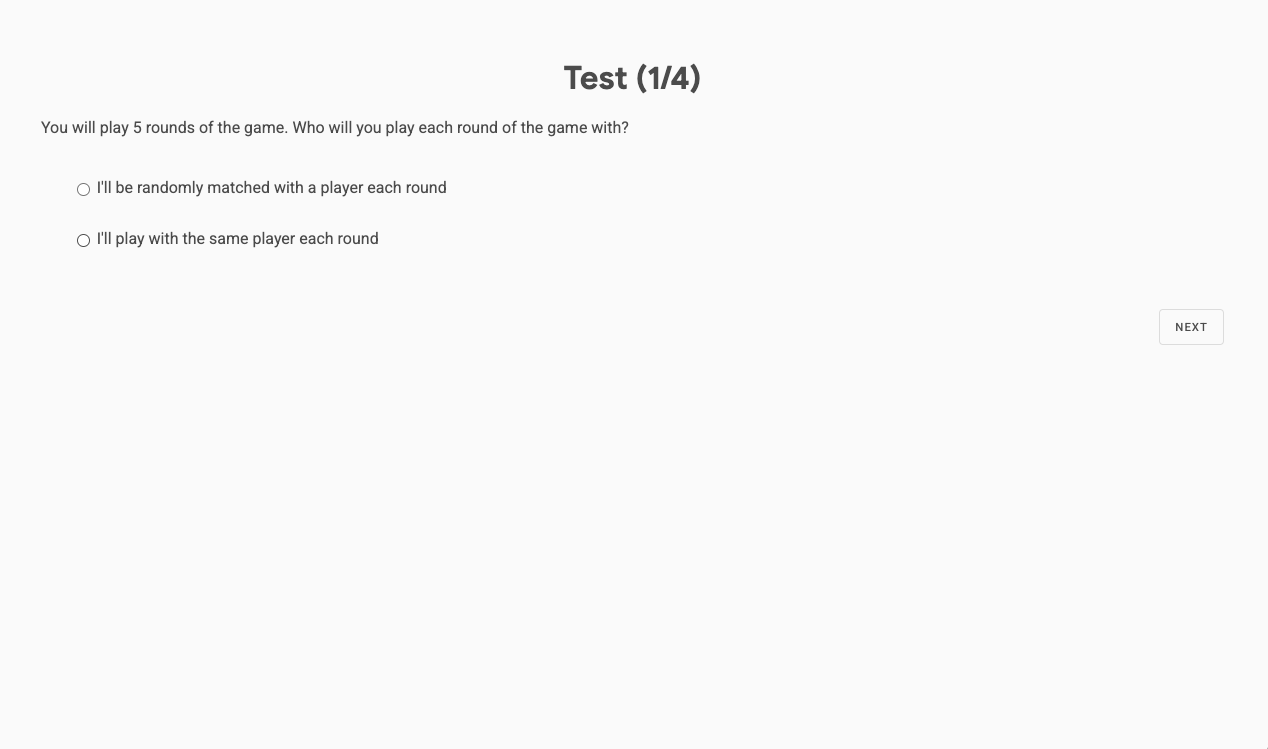}} \\
    \subfloat[Screen 12: Test on item distribution.]{\includegraphics[width=0.9\textwidth]{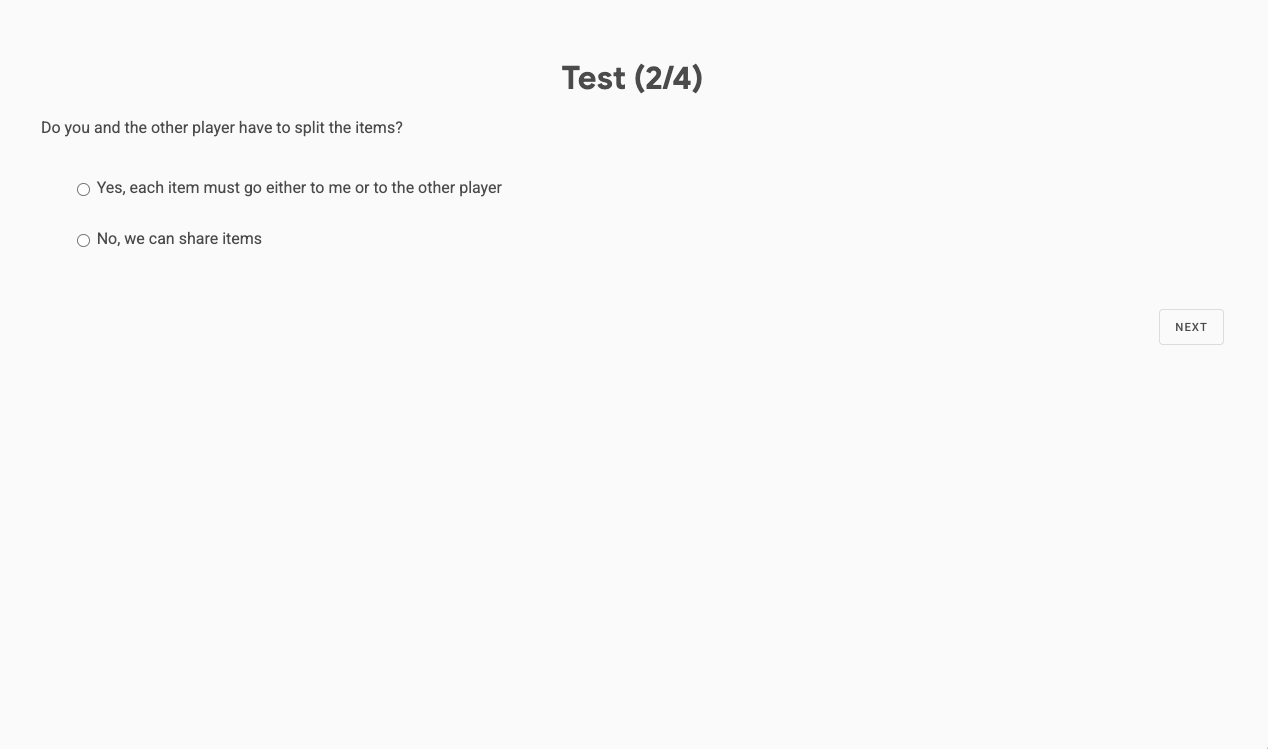}}
	\caption{Screenshots of the participant interface for the ``Deal or No Deal'' study. (a) The participant takes the comprehension test. (b) The participant takes the comprehension test.}
	\label{fig:prestudy_screens_05}
\end{figure*}

\vfill \clearpage \vfill

\begin{figure*}[h]
	\centering
    \subfloat[Screen 13: Test on general-sum nature of game.]{\includegraphics[width=0.9\textwidth]{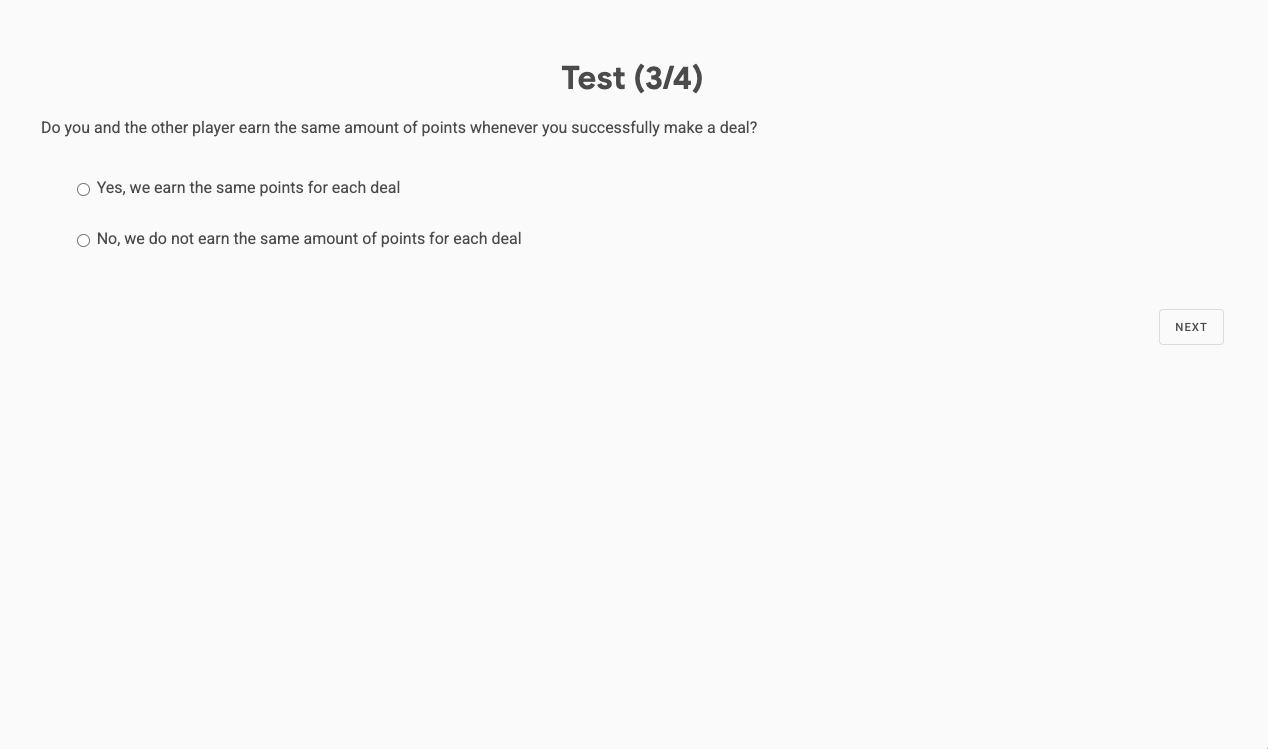}} \\
    \subfloat[Screen 14: Test on conditions for deal success.]{\includegraphics[width=0.9\textwidth]{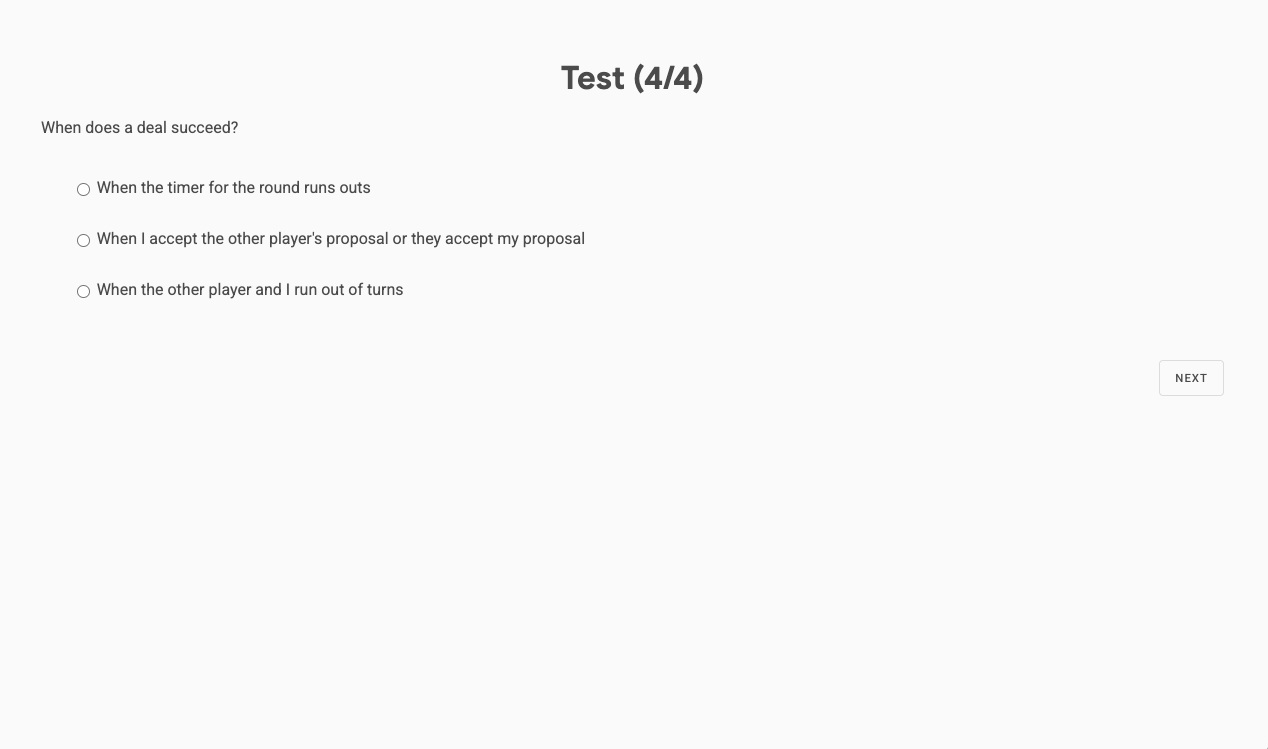}}
	\caption{Screenshots of the participant interface for the ``Deal or No Deal'' study. (a) The participant takes the comprehension test. (b) The participant takes the comprehension test.}
	\label{fig:prestudy_screens_06}
\end{figure*}

\vfill \clearpage \null \vfill

\begin{figure*}[h]
	\centering
    \includegraphics[width=0.9\textwidth]{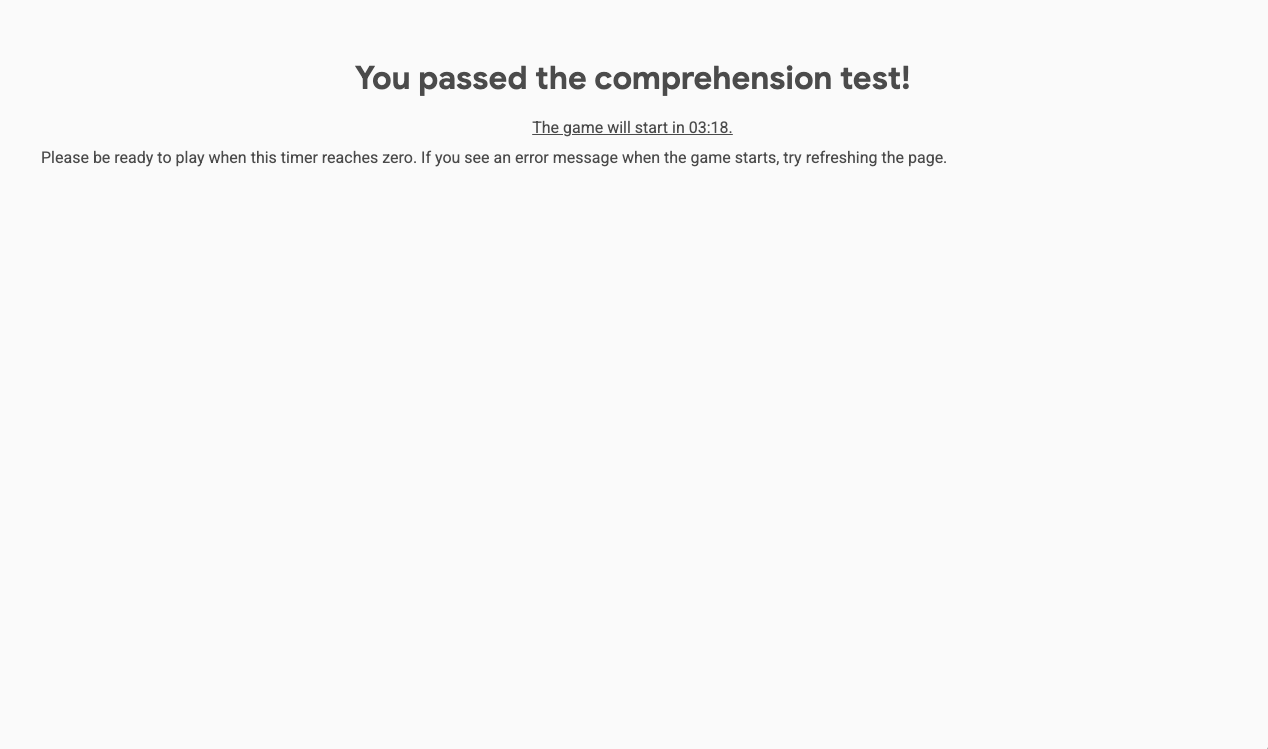}
	\caption{Screenshots of the participant interface for the ``Deal or No Deal'' study. The participant sees their results for the comprehension test. If they answered all four questions correctly, the participant waits to be randomly assigned to a game session.}
	\label{fig:prestudy_screens_07}
\end{figure*}

\vfill \null \clearpage \vfill

\begin{figure*}[h]
	\centering
    \subfloat[Game screen: Prompt participant to take their turn.]{\includegraphics[width=0.9\textwidth]{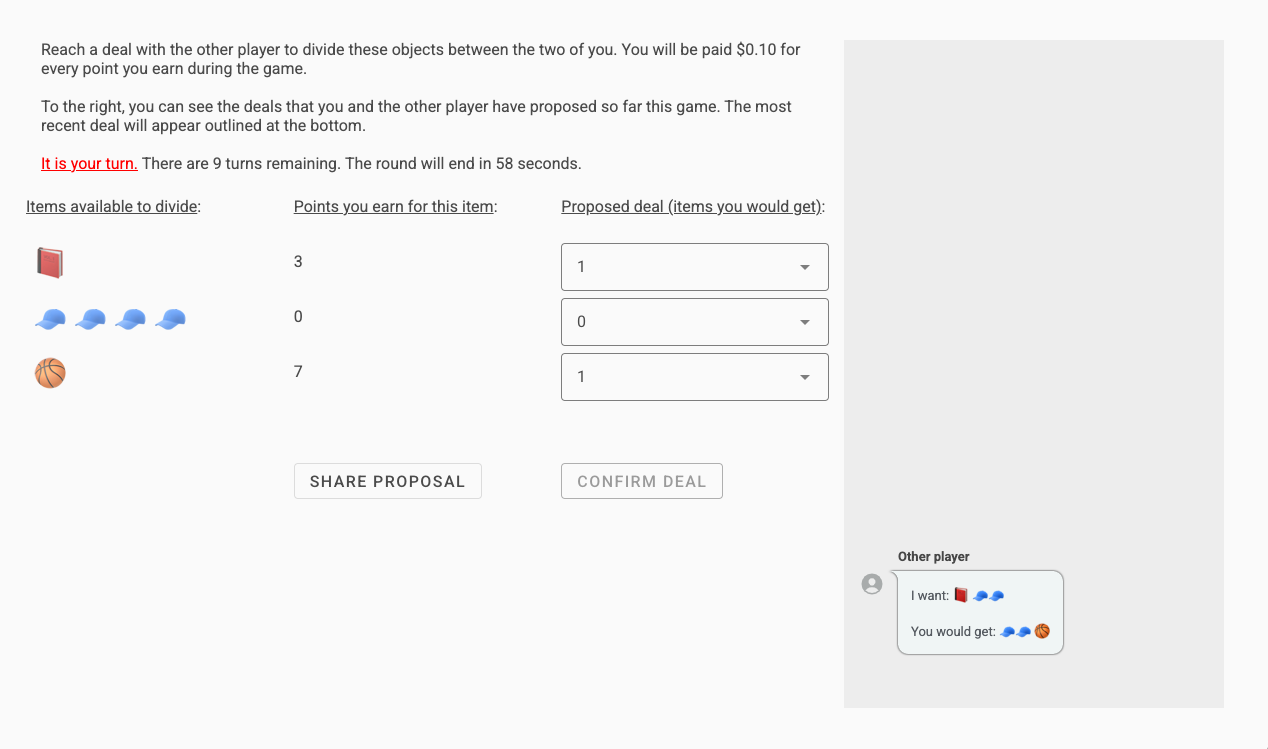}} \\
    \subfloat[Game screen: Wait for other player to take their turn.]{\includegraphics[width=0.9\textwidth]{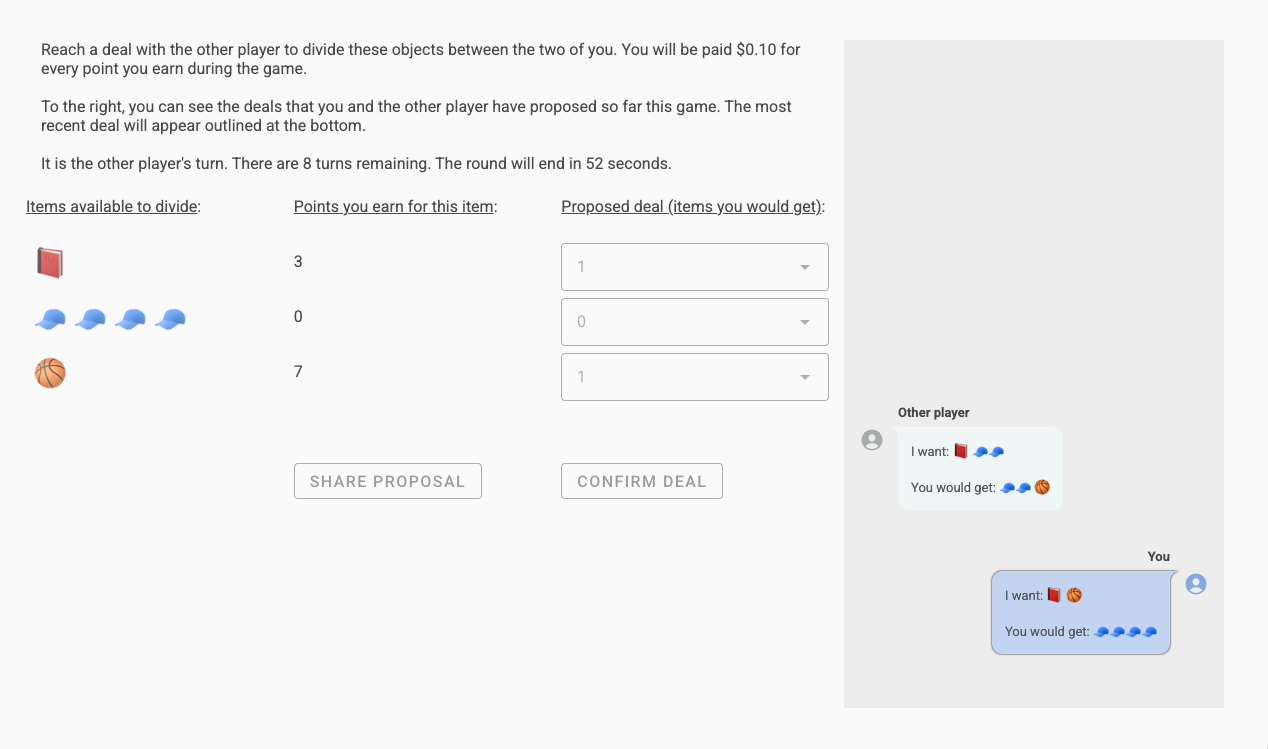}}
	\caption{Screenshots of the participant interface for the ``Deal or No Deal'' study. (a) The participant proposes a deal to the other player. (b) The other player chooses to confirm the deal or proposes a new deal.}
	\label{fig:study_screens_00}
\end{figure*}

\vfill \clearpage \vfill

\begin{figure*}[h]
	\centering
    \subfloat[Game feedback screen: Show results from prior episode.]{\includegraphics[width=0.9\textwidth]{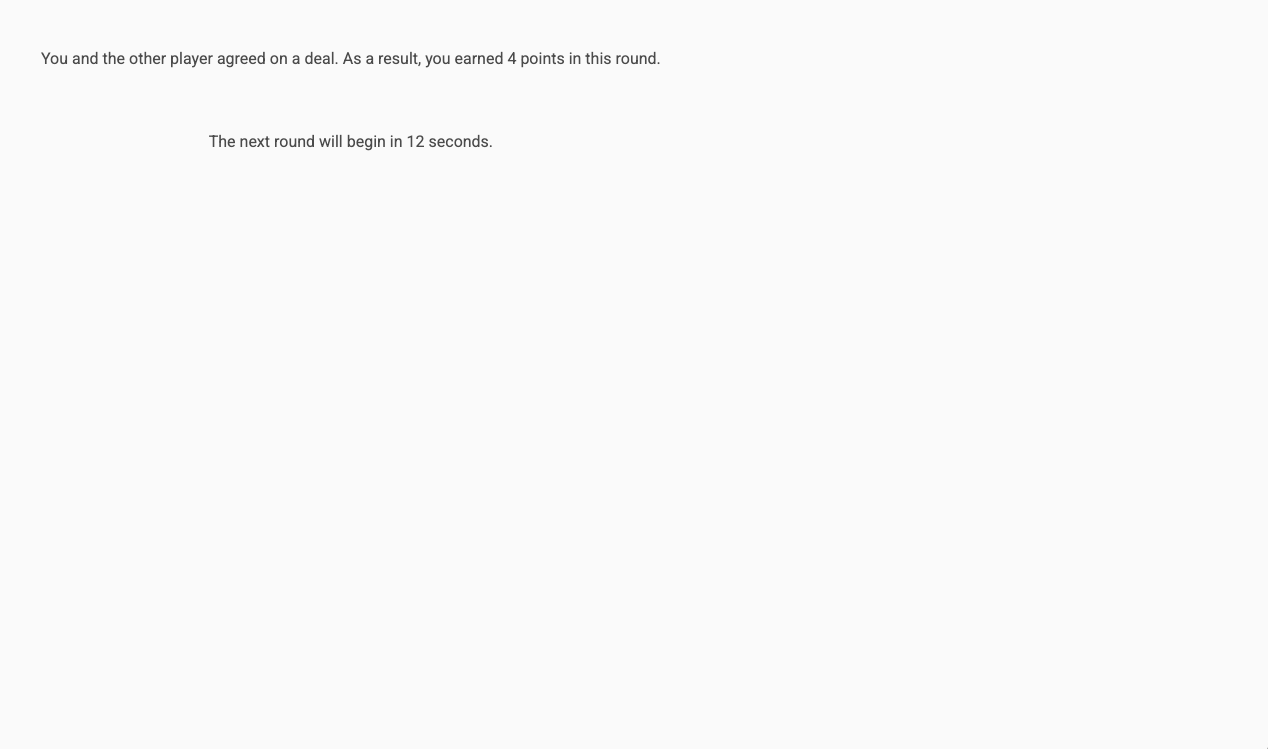} \label{fig:study_screens_01/a}} \\
    \subfloat[Study feedback screen: Show bonus and introduce post-task questionnaire.]{\includegraphics[width=0.9\textwidth]{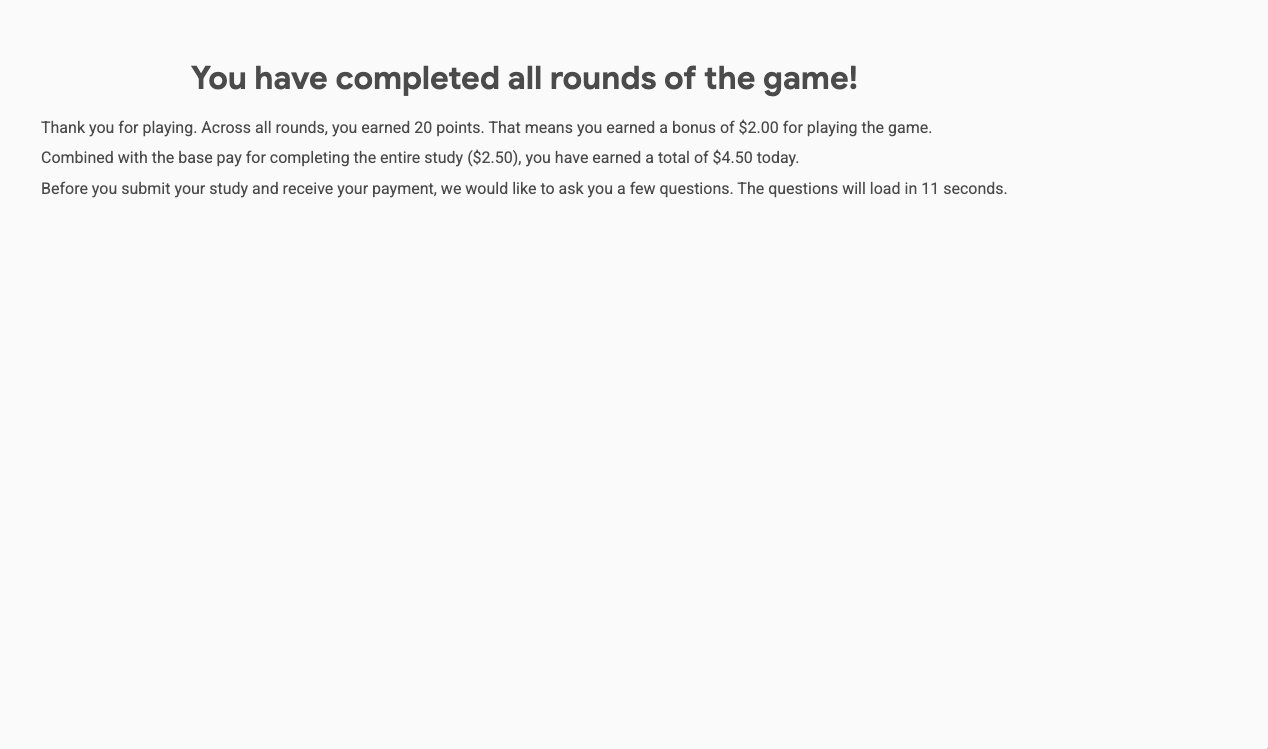} \label{fig:study_screens_01/b}}
	\caption{Screenshots of the participant interface for the ``Deal or No Deal'' study. (a) The participant waits for the next game to start. (b) The participant sees their bonus and waits to begin the post-task questionnaire.}
	\label{fig:study_screens_01}
\end{figure*}

\vfill \clearpage

\end{document}